\newtheorem{theorem}{Theorem}
\newtheorem{assumption}{Assumption}
\newtheorem{lemma}{Lemma}
\newtheorem{remark}{Remark}
\newtheorem{example}{Example}
\DeclareMathOperator*{\argmin}{arg\,min}
\numberwithin{equation}{section}
\title{Diffusion-based Semi-supervised Spectral Algorithm for Regression on Manifolds
$^\dag$\footnotetext{\dag~The work described in this paper is supported by the National Natural Science Foundation of China (Grants No.12171039) and Shanghai Science and Technology Program [Project No. 21JC1400600]. Email addresses: 22210180107@m.fudan.edu.cn (W. Xia), jxjiang20@fudan.edu.cn (J. Jiang), leishi@fudan.edu.cn (L. Shi). The corresponding author is Lei Shi.}}
\author[]{Weichun Xia}
\author[]{Jiaxin Jiang}
\author[]{Lei Shi}
\affil[]{School of Mathematical Sciences and Shanghai Key Laboratory for
Contemporary Applied Mathematics, Fudan University, Shanghai, 200433, China.}
\date{}
\begin{document}
	\maketitle
\begin{abstract}

We introduce a novel diffusion-based spectral algorithm to tackle  regression analysis on high-dimensional data, particularly data embedded within lower-dimensional manifolds. Traditional spectral algorithms often fall short in such contexts, primarily due to the reliance on predetermined kernel functions, which inadequately address the complex structures inherent in manifold-based data. By employing graph Laplacian approximation, our method uses the local estimation property of heat kernel, offering an adaptive, data-driven approach to overcome this obstacle. Another distinct advantage of our algorithm lies in its semi-supervised learning framework, enabling it to fully use the additional unlabeled data. This ability enhances the performance by allowing the algorithm to dig the spectrum and curvature of the data manifold, providing a more comprehensive understanding of the dataset. Moreover, our algorithm performs in an entirely data-driven manner, operating directly within the intrinsic manifold structure of the data, without requiring any predefined manifold information. We provide a convergence analysis of our algorithm. Our findings reveal that the algorithm achieves a convergence rate that depends solely on the intrinsic dimension of the underlying manifold, thereby avoiding the curse of dimensionality associated with the higher ambient dimension.

\end{abstract}
	
{\textbf{Keywords:} Manifold learning, Heat kernel, Graph Laplacian, High-dimensional approximation, Semi-supervised learning, Convergence analysis}
	
\section{Introduction}\label{section: introduction}

High-dimensional data are increasingly prevalent across various fields within modern machine learning, presenting unique challenges. Specifically, the obstacle people often encounter is the small sample size compared to the high dimensionality of data and the ``curse of dimensionality", a concept introduced by \cite{Bellman+1961}, resulting in poor performance. This complex data structure necessitates novel approaches and theoretical proceedings in statistical analysis and inference. 
Recent literature \cite{frank2006three,carlsson2008local,chikuse2012statistics} has illuminated that, in many instances, the practical input data tends to reside within a lower-dimensional, nonlinear manifold. This suggests that the input data space $ X $, can be conceptualized as a compact and connected Riemannian manifold $ \mathcal{M} $ embedding into ambient Euclidean space $ \mathbb{R}^D $, where the manifold dimension $ d $  is substantially less than $ D $. This notion is in harmony with the widely acknowledged low-dimensional manifold hypothesis, which has gained significant attention in {contemporary research, e.g. \cite{Calandra2014ManifoldGP,guhaniyogi2016compressed,hamm2021adaptive}.} This geometric perspective paves the way for addressing the intrinsic complexities posed by high-dimensional data.

In the domain of data analysis and pattern recognition, regression emerges as a fundamental problem discerning relationships between input (explanatory) and output (response) variables.
In this paper, we primarily study the non-parametric regression problem \eqref{definition of regression problem}. The input variable $ x $ originates from a lower-dimensional manifold $ \mathcal{M} $, the output variable $ y \in \mathbb{R} $ is real, and they are governed by an unknown distribution $ \mathcal{P} $ over the product space $ \mathcal{M}\times\mathbb{R} $. Given a dataset $ D=\{(x_i,y_i)\}_{i=1}^n $ that is independently and identically distributed according to $ \mathcal{P} $, our goal is to identify or approximate the optimal estimator $ f^*:\mathcal{M}\to\mathbb{R} $ that minimizes the mean-squared error, in a specified function space $ \mathcal{F} $, e.g. the space of measurable functions:
\begin{equation}\label{definition of regression problem}
	f^* = \argmin\limits_{f\in\mathcal{F}}\int_{\mathcal{M}\times\mathbb{R}} (y-f(x))^2\text{d}\mathcal{P}(x,y).
\end{equation}
Referencing the classical literature \cite{gyorfi2002distribution}, with the knowledge of $ \mathcal{P} $, an explicit calculation finds the optimal regression function $ f^* $ in the whole measurable space as
\begin{equation*}
	f^*(x)=\int_{\mathbb{R}}y\;\text{d}\mathcal{P}(y|x),
\end{equation*}
where $ \mathcal{P}(\cdot|x) $ denotes the conditional distribution of $ \mathcal{P} $ over $ \mathbb{R} $ given $ x $.

The regression problem \eqref{definition of regression problem} has been extensively explored, leading to diverse methodologies. For an in-depth exploration, one can refer to {comprehensive monographs \cite{cucker2002mathematical,gyorfi2002distribution}.}
Among the numerous strategies, spectral algorithms, as suggested by \cite{bauer2007regularization}, stand out due to their effective implementation, which involves applying a filter function to the spectra of a finite-dimensional kernel matrix. The efficacy of these algorithms is largely ascribed to the regularization achieved via a carefully selected set of filter functions. Moreover, the rich choice of regularization function allows spectral algorithms to conclude a broad range of popular regression algorithms, which is specified in Subsection \ref{subsection: regularization family and spectral algorithm}. 
This adaptability is particularly noteworthy, emphasizing the significance of spectral algorithms in regression analysis. By leveraging the strengths of different regularization strategies, spectral algorithms offer an important framework for tackling complex regression problems. For more detailed analyses and discussions of spectral algorithms, we refer to {recent studies such as \cite{blanchard2018optimal,lin2020optimal,celisse2021analyzing}.}

Although the spectral algorithm has gained widespread application, its efficacy is occasionally hampered by several issues, among which the selection of the kernel function stands out as a particularly daunting challenge. As detailed in Subsection \ref{subsection: regularization family and spectral algorithm}, the computation of the spectral algorithm estimator is fundamentally dependent on the kernel matrix $ (K(x_i,x_j))_{i,j=1}^n $, where $ K(x,x') $ is the kernel function defined over the input manifold. Traditional approaches often employ kernels such as radial basis kernels and polynomial kernels, which we refer to \cite{scholkopf2002learning,steinwart2008support} for more details. Nevertheless, these conventional kernels tend to underperform in scenarios involving manifold input data, primarily because they overlook the intrinsic structure of data. 
To address the complexities presented by the manifold structure of data, recent studies have shifted towards utilizing manifold-based bandlimited kernels and heat kernels, as seen in \cite{mcrae2020sample,xia2024spectral}. These kernels are designed with the input manifold's intrinsic structure in mind, notably leveraging the spectrum of the Laplace-Beltrami operator and curvatures of the data manifold. This focus on the manifold's inherent features allows for remarkable theoretical convergence properties, characterized by a rapid convergence rate that depends solely on the manifold's dimension $ d $, and remains unaffected by the higher ambient dimension $ D $. This presents a substantial improvement over traditional kernels, which often suffer from the curse of dimensionality. The theoretical benefits, including the enhanced convergence rate, of these manifold-based kernels, have also been validated in \cite{mcrae2020sample,xia2024spectral}, highlighting their efficiency in advancing the spectral algorithm's applicability in high-dimensional data analysis.

While manifold-based kernels offer theoretical advantages, their practical application is hindered by the challenge of calculating these functions, which is requisite in the algorithms. Our work is inspired by graph Laplacian methods, well-regarded in the literature of manifold learning ({see e.g. \cite{belkin2003laplacian,coifman2006diffusion,von2007tutorial,melacci2011laplacian}}) for their effectiveness in capturing the local geometric and topological structures of input manifolds. We adopt these methods to estimate the manifold heat kernel, introducing a data-driven, adaptive kernel matrix as an alternative to the difficult-to-calculate manifold-based kernel matrix. This approach not only effectively and efficiently overcomes the implementation hurdle but also preserves the superior qualities of manifold-based kernels by leveraging the local approximation property of the heat kernel on data manifold. To our knowledge, the specific application of graph Laplacian approximation for manifold heat kernel estimation is still relatively uncharted, with few literature venturing into this domain \cite{dunson2019diffusion,dunson2021spectral}. Our paper integrates this estimation method with spectral algorithms to introduce a novel diffusion-based spectral algorithm. See Algorithm \ref{Alg: diffusion-based spectral algorithm}. This development expands the capabilities of spectral algorithms in processing complex manifold data structures.

One of the superioritys of our algorithm is its semi-supervised learning nature. In traditional supervised learning scenarios, labeling data necessitates considerable effort, such as transcribing for speech recognition tasks, or might require specialized expertise, like determining the health status of a brain scan. Conversely, in many fields, including computer vision, natural language processing, and speech recognition, acquiring unlabeled data is often much easier and can be done on a larger scale with less effort. Carrying out the principles of classical semi-supervised learning, our algorithm enables the use of additional unlabeled manifold samples. 
These samples inherently reflect the geometric and topological structure of the input manifold, bearing a potential enhancement in the performance of the spectral algorithm beyond what is achievable with solely labeled data. This ability to use both labeled and unlabeled data for improving learning outcomes presents a substantial enhancement over traditional spectral algorithms, which rely exclusively on labeled data. Additionally, unlike methods that pre-process data through dimension reduction to simplify regression tasks, our algorithm tackles the complexity head-on by operating directly within the intrinsic manifold structure of data. This approach offers a direct solution to the intricate problem of regression on manifolds, requiring no predefined manifold information. Our algorithm operates entirely on the available data, sticking to a fully data-driven methodology. These characteristics showcase our algorithm's capability to navigate the manifold's inherent complexities efficiently and effectively.

In this paper, we further provide a rigorous convergence rate analysis of our novel algorithm. Detailed theoretical results are elaborated in Subsection \ref{subsection: convergence analysis}. Drawing from our main Theorem \ref{Thm: convergence analysis}, we demonstrate that, for a selection of filter functions commonly used in traditional spectral algorithms, including kernel ridge regression and kernel principal component regression, our diffusion-based spectral algorithm achieves a convergence rate of{
\begin{equation*}
	\sup_{1 \leq j \leq N}|\tilde{f}_{D,\lambda}(j)-f^*(x_j)|=\tilde{O}\left( m^{-\frac{1}{2}+\alpha}+m^{\frac{3}{2r+1}+1}\left(\frac{1}{K}+\left(\frac{\log N}{N}\right)^\frac{1}{2d+8}\right)\right).
\end{equation*}}
In this expression, the notation $ \tilde{O}(\cdot) $ omits lower-order terms compared to the primary term. The symbol $ \tilde{f}_{D,\lambda} $ denotes the estimator derived from our diffusion-based spectral algorithm, $ m $ represents the number of labeled inputs, $ N $ signifies the total number of inputs (both labeled and unlabeled), $ m\ll K<N $ refers to a truncation hyperparameter introduced in subsection \ref{subsection: graph Laplacian and heat kernel estimation}, {$r>1$ is a constant introduced in Assumption \ref{Assumption: source condition on regression function}}, and $ 0<\alpha<1/2 $ can be chosen arbitrarily small. This pointwise convergence rate is applicable across all $ 1\leq j\leq N $, particularly within the unlabeled dataset, which is solely dependent on the intrinsic dimension $ d $ of the input manifold, irrespective of the considerably higher ambient dimension $ D $. This property reflects the benefits observed in classical spectral algorithms that employ manifold-based bandlimited kernels or heat kernels, further demonstrating the effectiveness and efficiency of our algorithm in dealing with high-dimensional data through a semi-supervised learning framework.

The remainder of this paper is organized as follows. In Section \ref{section:framework and notation}, we introduce key concepts and notations in our paper. In Section \ref{section: main results}, we present the diffusion-based spectral algorithm with a comprehensive convergence analysis. In Section \ref{section: numerical experiments}, we illustrate the validity of the algorithm through numerical experiments. In Section \ref{section: proof of convergence result}, we rigorously prove our main theoretical theorem. We provide additional details and analyses that complement the main text in the Appendix.

\section{Notations and Theoretical Backgrounds}\label{section:framework and notation}

Before delving into a detailed discussion, we introduce several notations. As mentioned in Section \ref{section: introduction}, our input data space is defined as a Riemannian manifold $ \mathcal{M} $ with dimension $ d $, and the output space as the real line $ \mathbb{R} $. We consider an unknown distribution $ \mathcal{P} $ on the product space $ \mathcal{M}\times\mathbb{R} $, denoting its marginal distribution on $ \mathcal{M} $ as $ \nu $, and the conditional distribution given $ x\in\mathcal{M} $ on $ \mathbb{R} $ as $ \mathcal{P}(\cdot|x) $. Furthermore, we assume this marginal distribution $\nu$ to be uniform, which means that it differs from the Riemannian volume measure $\mu_g$ (defined in \eqref{Definition of Riemannian volume measure} below) only by a constant factor $ p = \frac{1}{\text{vol}\mathcal{M}} $. For simplicity, in subsequent discussions, we will use the notation $L^2(\mu_g)$ and $ L^2(\nu) $ to refer to the $L^2$-space with respect to the Riemannian volume measure $\mu_g$ and the uniform measure $\nu$ on $ \mathcal{M} $  respectively.

\subsection{Manifold and Heat Kernel}\label{subsection: manifold and heat kernel}

In this subsection, we introduce the concepts of Riemannian manifolds and the heat kernel. We consider a compact, connected Riemannian manifold $ \mathcal{M} $ of dimension $ d $, embedded into a higher-dimensional ambient Euclidean space $ \mathbb{R}^D $, where $ d\ll D $. The fundamental geometry of this manifold, the Riemannian metric $ g $ and the Levi-Civita connection $ \nabla $, are derived from the manifold's embedding in $ \mathbb{R}^D $. 
The Riemannian metric $ g $ gives rise to a unique Radon measure $ \mu_g $ on $ \mathcal{M} $, commonly referred to as the Riemannian volume measure. This measure is defined as
\begin{equation}\label{Definition of Riemannian volume measure}
	\int_{\mathcal{M}}fd{\mu_g}=\sum_j\int_{\varphi_j(U_j)}\left(\psi_jf\sqrt{{\rm det}(g_{ij})}\right)\circ\varphi_j^{-1}dx,
\end{equation} 
where $ dx $ denotes the Lebesgue measure on $ \mathbb{R}^d $, $ \{U_j,\varphi_j\} $ represents a smooth atlas of $ \mathcal{M} $, and $ \{\psi_j\} $ signifies the partition of unity subordinate to the atlas covering $ \{U_j\} $. If $\mathcal{M}$ is additionally orientable, we can locally define a smooth, non-zero volume form on $\mathcal{M}$ as $ \omega=\sqrt{{\rm det}(g_{ij})}dx^1\land\cdots\land dx^m $, given a positively oriented orthonormal basis of $ T_u(\mathcal{M}) $. This allows the Riemannian volume measure $ \mu_g $ to be elegantly represented as $ \int_{\mathcal{M}}fd_{\mu_g}=\int_{\mathcal{M}}f\omega $. For a more thorough exploration of integration on Riemannian manifolds, one may refer to standard literature such as  \cite{chavel1984eigenvalues,sakai1996riemannian,petersen2006riemannian}.

We introduce the Laplace-Beltrami operator on a Riemannian manifold $\mathcal{M}$. It is often abbreviated as the Laplacian, a concept fundamental to the study of differential geometry with applications in manifold learning. This operator is determined by the Levi-Civita connection $ \nabla $ and its Riemannian metric $ g $, as detailed in the textbooks \cite{petersen2006riemannian,lee2018introduction}. For any smooth function $ f\in C^\infty(\mathcal{M}) $, the Laplacian $ \Delta f $ is defined as the negative divergence of the gradient vector field of $ f $, 
\begin{equation*}
	\Delta f = -{\rm div}(\nabla f).
\end{equation*}
And in local coordinates, this operator has an explicit expression as
\begin{equation*}
	\Delta f = -\frac{1}{\sqrt{{\rm det}g}}\partial_i\left(g^{ij}\sqrt{{\rm det}g}\partial_jf\right),
\end{equation*}
where we have adopted the Einstein summation convention. Choosing a negative sign in definition ensures that the Laplacian remains a positive operator, aligning with the conventions of mathematical physics and differential geometry. For the classical case where the manifold $\mathcal{M}$ is the Euclidean space $ \mathbb{R}^d $ , the Laplacian corresponds to the familiar second-order derivative operator $-\sum_{i=1}^d\partial^2_i$. Initially, the Laplacian acts as a self-adjoint, positive, and semi-definite operator on the space of smooth functions defined on $ \mathcal{M} $. However, it can be uniquely extended to a self-adjoint, positive, and semi-definite operator on the space $ L^2(\mu_g) $ through Friedrich's extension theorem, a process thoroughly examined in the analytical studies by \cite{strichartz1983analysis,reed2003methods}. This extension of the Laplacian preserves its fundamental properties and enables a broader application in the analysis of functions on Riemannian manifold, and we also refer to it as Laplacian.

Consider the heat equation on a Riemannian manifold $\mathcal{M}$, formulated as:
\begin{equation*}
	\begin{cases}
	\frac{\partial u}{\partial t}+\Delta u=0,\quad t>0\\
	u|_{t=0}=f
	\end{cases}
\end{equation*}
with a smooth function $ f:\mathcal{M}\to\mathbb{R} $. Given that $ \mathcal{M} $ is both smooth and compact, this equation guarantees a unique smooth solution $ u=e^{-t\Delta}f $. Drawing on classical literature such as \cite{davies1989heat,grigoryan2009heat}, for the heat semi-group $ e^{-t\Delta} $, there exists a unique function $ H:\mathcal{M}\times\mathcal{M}\times(0,\infty)\to\mathbb{R} $ such that:
\begin{equation}\label{definition of heat kernel}
	\left(e^{-t\Delta}f\right)(u)=\int_{\mathcal{M}}H(u,u',t)f(u')d\mu_g(u').
\end{equation}
Here, $ H_t(u,u')=H(u,u',t) $ is called the heat kernel, characterized by its diffusion time $ t $. Specifically, for $ \mathcal{M}=\mathbb{R}^d $, $ H_t $ reduces to the well-known Gaussian kernel $ H_t(x,x')=e^{-\frac{|x-x'|^2}{4t}}/(4\pi t)^{\frac{d}{2}} $. Gaussian-type functions could be used to approximate the manifold heat kernel, especially for closely positioned $ u,u' $ and small $ t $, exhibiting an approximation error
\begin{equation}\label{local approxiamtion property of heat kernel}
	\left|H_t(u,u')-\frac{1}{(4\pi t)^{\frac{d}{2}}}e^{-\frac{d_\mathcal{M}(u,u')^2}{4t}}(u_0+tu_1)\right|\leq O(t),
\end{equation}
where $ u_0,u_1 $ are smooth functions locally close to $ 1 $, and $ d_\mathcal{M}(u,u') $ denotes the geodesic distance between $ u $ and $ u' $ on $ \mathcal{M} $. This local approximation property is crucial for estimating the Laplacian's eigensystem using graph Laplacian techniques, which will be further explored in Subsection \ref{subsection: graph Laplacian and heat kernel estimation}. A classical assumption about the heat kernel is its boundedness, namely
\begin{equation}\label{boundness condition on Heat Kernel}
	\sup\limits_{u\in\mathcal{M}}H_t(u,u)\leq\kappa^2
\end{equation}
with some constant $\kappa\geq1$. Given the compactness of $\mathcal{M}$ and inherently bounded sectional curvature and Ricci curvature, this boundedness condition is naturally satisfied by the heat kernel comparison theorem. Further details can be found in \cite{hsu2002stochastic}.

Given that the Laplacian $\Delta$ is a self-adjoint, positive, and semi-definite operator on $L^2(\mu_g)$, the celebrated Sturm-Liouville theorem enables us to identify an orthonormal basis $\{\psi_k\}_{k\in\mathbb{N}}$ for $L^2(\mu_g)$. Each $\psi_k$ in this basis is smooth on $ \mathcal{M} $ and adheres to 
\begin{equation}\label{eigen-system of Laplacian}
	\Delta \psi_k=\mu_k\psi_k
\end{equation}
for all $k$. The sequence of eigenvalues 
\begin{equation*}
	0=\mu_1\leq\mu_2\leq\dots\leq\mu_k\to+\infty
\end{equation*}
corresponding to $ \{\psi_k\} $, are aligned in an unbounded increase. The rate at which these eigenvalues grow directly relates to the manifold's dimension. This relationship is elaborated in the following Lemma \ref{Lemma: estimation of eigen-system for laplacian}. The following literature \cite{hormander1968spectral,li1983schrodinger,chavel1984eigenvalues} offers a deeper exploration of these foundational results.

\begin{lemma}\label{Lemma: estimation of eigen-system for laplacian}
	Suppose that $ \mathcal{M} $ is a compact, connected Riemannian manifold with dimension $ d $. Then, for the eigensystem of Laplacian $ \Delta $ on $ \mathcal{M} $ as in \eqref{eigen-system of Laplacian}, for any $ k\in\mathbb{N} $, we have the following estimations:
	\begin{equation}\label{population bounds for eigenvalue of laplacian}
		C_{low}k^\frac{2}{d}\leq\mu_k\leq C_{up}k^\frac{2}{d},
	\end{equation}
	\begin{equation}\label{population bounds for eigenfunction of laplacian}
		\|\psi_k\|_{L^\infty(\mathcal{M})}\leq D_1\mu_k^\frac{d-1}{4},
	\end{equation}
	where $C_{low}$, $C_{up}$, and $D_1$ are absolute constants only rely on $ \mathcal{M} $. 
\end{lemma}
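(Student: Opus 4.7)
The plan is to establish the two bounds via heat-kernel and spectral-projector techniques, both of which ultimately rely on the local expansion of $H_t$ recorded in \eqref{local approxiamtion property of heat kernel} together with the compactness of $\mathcal{M}$.

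For the eigenvalue estimate \eqref{population bounds for eigenvalue of laplacian}, I would first integrate the diagonal heat-kernel expansion to produce the heat-trace asymptotic
\[
\sum_{k=1}^{\infty} e^{-t\mu_k} \;=\; \int_{\mathcal{M}} H_t(u,u)\,d\mu_g(u) \;\sim\; \frac{\mathrm{vol}(\mathcal{M})}{(4\pi t)^{d/2}} \qquad (t\to 0^+),
\]
where the right-hand side follows by integrating \eqref{local approxiamtion property of heat kernel} on the diagonal and using that the local leading constant $u_0$ tends to $1$. Applying Karamata's Tauberian theorem to this Laplace transform yields the Weyl-type counting estimate $N(\lambda):=\#\{k:\mu_k\le\lambda\}\asymp \mathrm{vol}(\mathcal{M})\,\lambda^{d/2}$, whose inversion immediately gives $\mu_k\asymp k^{2/d}$ with constants depending only on $\mathcal{M}$. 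As an independent sanity check, the upper bound can be obtained directly from the Courant--Fischer min-max principle by testing on $k$ bump functions supported in disjoint geodesic balls of radius $\sim k^{-1/d}$, whose Rayleigh quotients scale like $k^{2/d}$, while the matching lower bound can be produced from a Faber--Krahn inequality combined with Bishop--Gromov volume comparison.

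For the pointwise eigenfunction bound \eqref{population bounds for eigenfunction of laplacian}, I would follow the H\"ormander/Sogge spectral-projector approach. Let $E_{\lambda}$ denote the projection onto the span of eigenfunctions whose frequencies $\sqrt{\mu_k}$ lie in the unit band $[\lambda,\lambda+1]$. Using finite propagation speed of the half-wave operator $e^{it\sqrt{\Delta}}$ together with a short-time parametrix, one establishes the diagonal kernel bound
\[
\bigl|E_{\lambda}(u,u)\bigr| \;\leq\; C\,\lambda^{d-1}
\]
uniformly in $u\in\mathcal{M}$. Because $\psi_k(u)^2$ appears as a non-negative summand of $E_{\sqrt{\mu_k}}(u,u)$, taking $\lambda=\sqrt{\mu_k}$ yields $|\psi_k(u)|^2 \leq C\,\mu_k^{(d-1)/2}$, and taking square roots gives exactly \eqref{population bounds for eigenfunction of laplacian}.

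The main obstacle is this H\"ormander bound on $E_{\lambda}$: it requires either the Fourier integral operator parametrix for the half-wave propagator or an appropriately refined off-diagonal heat-kernel expansion, and is genuinely harder than the soft Tauberian argument used for the eigenvalues. Everything else reduces to the on-diagonal asymptotic extracted from \eqref{local approxiamtion property of heat kernel} and standard spectral theory of compact self-adjoint operators, so in the write-up I would likely invoke the classical references \cite{hormander1968spectral,li1983schrodinger,chavel1984eigenvalues} rather than reconstruct the parametrix from scratch.
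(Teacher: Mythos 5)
Your proposal is correct and is essentially the paper's own treatment: the paper gives no proof of this lemma at all, simply citing \cite{hormander1968spectral,li1983schrodinger,chavel1984eigenvalues}, and your heat-trace/Karamata argument for the two-sided Weyl bound together with the H\"ormander spectral-projector bound $|E_\lambda(u,u)|\lesssim\lambda^{d-1}$ for the sup-norm estimate is exactly the standard machinery behind those references. One caveat that afflicts the paper's statement rather than your argument: on a compact connected manifold $\mu_1=0$ with $\psi_1$ constant, so both inequalities fail literally at $k=1$ and should be read for $k\ge 2$ (or with $\mu_k$ replaced by $1+\mu_k$), which is also the form your Weyl-asymptotic and projector bounds actually deliver.
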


\noindent
Building on the eigensystem of the Laplacian as previously discussed in \eqref{eigen-system of Laplacian}, the heat semi-group $ e^{-t\Delta} $ can be characterized through its eigen-decomposition:
\begin{equation}\label{spectral decomposition of heat semi-group}
	e^{-t\Delta}\psi_k=e^{-t\mu_k}\psi_k.
\end{equation}
This directly leads to the spectral decomposition of the heat kernel $ H_t $, which, according to Mercer's theorem (see e.g. \cite{wainwright2019high}), is expressed as
\begin{equation}\label{spectral decomposition of heat kernel}
	H_t(u,u')=\sum_{k=1}^\infty e^{-t\mu_k}\psi_k(u)\psi_k(u'),
\end{equation}
ensuring a uniformly and absolutely convergent series.

\subsection{Diffusion Space and Integral Operator}\label{subsection: diffusion space and integral operators}

In this subsection, we clarify the concept of diffusion space on a Riemannian manifold $ \mathcal{M} $ and the related integral operators. Given any fix $t>0$, the diffusion space, denoted as $\mathcal{H}_t$ (with diffusion time $ t $), is defined as
\begin{equation}\label{definition of diffusion space}
	\mathcal{H}_t=e^{-\frac{t}{2}\Delta}L^2(\mu_g).
\end{equation}
$\mathcal{H}_t$ is structured as a Hilbert space equipped with an inner product
\begin{equation}\label{definition of inner product on diffusion space}
	\langle f,g\rangle_{\mathcal{H}_t}=\langle e^{\frac{t}{2}\Delta}f, e^{\frac{t}{2}\Delta}g\rangle_{L^2(\mu_g)}.
\end{equation}
Further, endowed with this inner product \eqref{definition of inner product on diffusion space}, $\mathcal{H}_t$ constitutes a reproducing kernel Hilbert space (RKHS), with the heat kernel $H_t(u,u')$ serving as its reproducing kernel. This RKHS framework facilitates various significant attributes, as detailed in \cite{de2021reproducing}. A key feature of the diffusion space is the embedding property:
\begin{equation}\label{embedding property of diffusion space}
	\mathcal{H}_t\hookrightarrow\mathcal{H}_{t'}\hookrightarrow W^s(\mathcal{M}),
\end{equation}
for all  $0<t'<t$ and $s>0$, with $W^s(\mathcal{M})$ representing the $s$-order Sobolev space on $\mathcal{M}$ under the $L^2(\mu_g)$-norm. Given the compact embedding of the Sobolev space $W^s(\mathcal{M})$ into $C(\mathcal{M})$ for $s>\frac{d}{2}$, as noted by \cite{hebey2000nonlinear,triebel2010theory}, and considering the compactness of $\mathcal{M}$, it follows that any diffusion space $\mathcal{H}_t$ can be compactly embedded into $L^\infty(\mathcal{M})$. Consequently, for any $t>0$, there exists a constant $A$ (dependent on $ t $) such that
\begin{equation*}
	\|i:\mathcal{H}_t\hookrightarrow L^\infty\|_{op}\leq A,
\end{equation*}
where $\|\cdot\|_{op}$ denotes the operator norm. This embedding property, as explored in recent literature \cite{fischer2020sobolev,zhang2024optimality,xia2024spectral}, plays a vital role in the convergence analysis.

Considering the inclusion map $I_\nu$ from $\mathcal{H}_t$ to $L^2(\nu)$, we introduce its adjoint operator, $I_\nu^*$, which is an integral operator mapping $L^2(\nu)$ back to $\mathcal{H}_t$:
\begin{equation}\label{definition of integral operator I_nu}
	(I_\nu^*f)(x)=\int_{\mathcal{M}}H_t(x,z)f(z)d\nu(z).
\end{equation}
A straightforward examination reveals that both $I_\nu$ and $I_\nu^*$ qualify as Hilbert-Schmidt operators, hence they are inherently compact. The Hilbert-Schmidt norm of these operators meets the following relation
\begin{equation*}
	\|I_\nu^*\|_{\mathscr{L}^2(L^2(\nu),\mathcal{H}_t)}=
	\|I_\nu\|_{\mathscr{L}^2(\mathcal{H}_t,L^2(\nu))}=
	\|H_t\|_{L^2(\nu)}=\left(\int_\mathcal{M}H_t(x,x)d\nu(x)\right)^\frac{1}{2}
	\leq\kappa.
\end{equation*}
Further exploration of the integral operator $I_\nu^*$ involves composition with the inclusion map $I_\nu$:
\begin{equation}\label{definition of integral operator L_nu}
	L_\nu=I_\nu I_\nu^*:L^2(\nu)\to L^2(\nu),
	\end{equation}
	\begin{equation}\label{definition of integral operator T_nu}
	T_\nu=I_\nu^* I_\nu:\mathcal{H}_t\to \mathcal{H}_t.
\end{equation}
The fact that these operators $L_\nu$ and $T_\nu$ are self-adjoint, positive-definite, and fall within the trace class, indicates that they are also Hilbert-Schmidt and compact by nature. The trace norm of these operators are
\begin{equation*}
	\|T_\nu\|_{\mathscr{L}^1(\mathcal{H}_t)}=
	\|L_\nu\|_{\mathscr{L}^1(L^2(\nu))}=
	\|I_\nu^*\|^2_{\mathscr{L}^2(L^2(\nu),\mathcal{H}_t)}=
	\|I_\nu\|^2_{\mathscr{L}^2(\mathcal{H}_t,L^2(\nu))}.
\end{equation*}

In conclusion, $L_\nu$ represents the integral operator associated with the heat kernel $H_t$ on $ L^2(\nu) $, with respect to the uniform distribution $\nu$. Let us compare it with the heat kernel integral operator on $ L^2(\mu_g) $ with respect to the Riemannian volume measure $\mu_g$, as in \eqref{definition of heat kernel}, which is the heat semi-group $e^{-t\Delta}$. Given that $\nu$ and $\mu_g$ differ only by a constant factor $ p $, the eigen-decomposition of $L_\nu$ can be derived from \eqref{spectral decomposition of heat semi-group} as
\begin{equation*}
	L_\nu\psi_k=pe^{-t\mu_k}\psi_k.
\end{equation*}

Given that $ \{\psi_k\}_{k\in\mathbb{N}} $ constitutes an orthonormal basis of $ L^2(\mu_g) $, it follows that $ \{\varphi_k = p^{-\frac{1}{2}}\psi_k\}_{k\in\mathbb{N}} $ forms an orthonormal basis of $ L^2(\nu) $. Furthermore, based on the inner product definition \eqref{definition of inner product on diffusion space} on $ \mathcal{H}_t $, $ \{p^\frac{1}{2}e^{-\frac{t\mu_k}{2}}\varphi_k\}_{k\in\mathbb{N}} $ constitutes an orthonormal basis of $ \mathcal{H}_t $.
Since both $L_\nu$ and $T_\nu$ are self-adjoint and compact operators, the classical spectral theorem (see e.g. \cite{reed2003methods}) allows us to write down their spectral decompositions:
\begin{equation}\label{spectral decomposition of L_nu}
	L_\nu=\sum_{k=1}^\infty pe^{-t\mu_k}\langle \cdot, \varphi_k\rangle_{L^2(\nu)} \varphi_k,
\end{equation}
\begin{equation}\label{spectral decomposition of T_nu}
	T_\nu=\sum_{k=1}^\infty pe^{-t\mu_k}\langle \cdot, p^\frac{1}{2}e^{-\frac{t\mu_k}{2}}\varphi_k\rangle_{\mathcal{H}_t} \cdot p^\frac{1}{2}e^{-\frac{t\mu_k}{2}}\varphi_k.
\end{equation}
Additionally, the spectral decomposition for $I_\nu^*$ is given by:
\begin{equation}\label{spectral decomposition of I_nu}
	I_\nu^*=\sum_{k=1}^\infty p^\frac{1}{2}e^{-\frac{t\mu_k}{2}}\langle \cdot, \varphi_k\rangle_{L^2(\nu)} \cdot p^\frac{1}{2}e^{-\frac{t\mu_k}{2}}\varphi_k.
\end{equation}

Building on the integral operator $L_\nu$, we explore the concept of $\alpha$-power spaces within the diffusion space framework. These spaces, denoted as $ \mathcal{H}_t^{\alpha} $ for any $ \alpha>0 $, are defined as the range of a power of $ L_\nu $, i.e., $\mathcal{H}_t^{\alpha}=Ran(L_\nu^\frac{\alpha}{2})$. Leveraging the spectral decomposition of $L_\nu$, we obtain an explicit representation for $\mathcal{H}_t^{\alpha}$ as
\begin{equation}\label{definition of alpha-power space}
	\mathcal{H}_t^{\alpha}=\left\{
	\sum_{k=1}^\infty a_k p^{\frac{\alpha}{2}}e^{-\frac{\alpha\mu_k}{2}t}\varphi_k:\{a_k\}_{k=1}^\infty\in \ell^2(\mathbb{N})
\right\}.
\end{equation}
Furthermore, $\mathcal{H}_t^{\alpha}$ is equipped with an $\alpha$-power norm, defined as
\begin{equation}\label{definition of alpha-power norm}
	\left\|\sum_{k=1}^\infty a_k p^{\frac{\alpha}{2}}e^{-\frac{\alpha\mu_k}{2}t}f_k\right\|_{\alpha}:=\left\|\sum_{k=1}^\infty a_k p^{\frac{\alpha}{2}}e^{-\frac{\alpha\mu_k}{2}t}f_k\right\|_{\mathcal{H}_t^{\alpha}}
	=\|(a_k)\|_{\ell^2(\mathbb{N})}
	=\left(\sum_{k=1}^\infty a_k^2\right)^\frac{1}{2}.
\end{equation}
In discussions that follow, this $\alpha$-power norm will be referred to using the notation $\|\cdot\|_{\alpha}$ for ease of reference. As a result of this formulation, we have
\begin{equation}\nonumber
	\left\|L_\nu^{\frac{\alpha}{2}}(f)\right\|_\alpha=\|f\|_{L^2(\nu)}.
\end{equation}
This framework establishes that $ L_\nu^{\frac{\alpha}{2}} $ acts as an isometry from $ L^2(\nu) $ to $ \mathcal{H}_t^{\alpha} $, thereby confirming $\mathcal{H}_t^{\alpha}$ as a Hilbert space. This space is characterized by an orthonormal basis $\{p^{\frac{\alpha}{2}}e^{-\frac{\alpha\mu_k}{2}t}\varphi_k\}_{k\in\mathbb{N}}$. Notably, $ \mathcal{H}_t^{\alpha} $ differs with $ \mathcal{H}_{\alpha t} $ by a multiplication of $p^{\frac{1}{2}}$, and when $\alpha=0$, it equals $ L^2(\nu) $.

\subsection{Regularization Family and Spectral Algorithm}\label{subsection: regularization family and spectral algorithm}

In this subsection, we introduce the regularization family represented by different filter functions and its application in spectral algorithm estimators. Consider a labeled dataset $D^l=\{(x_i,y_i)\}_{i=1}^m$, independently and identically distributed from a distribution $\mathcal{P}$ on $ \mathcal{M}\times\mathbb{R} $. Our objective is to identify the optimal estimator $ \hat{f}\in\mathcal{H}_t $ that minimizes the empirical risk
\begin{equation}\label{definition of empirical risk}
	\hat{f}=\argmin\limits_{f\in\mathcal{H}_t}\frac{1}{m}\sum_{i=1}^m\left(y_i-f(x_i)\right)^2.
\end{equation}
The key to this regression problem is a finite rank sampling operator $H_{t,x}$ , which maps $ y $ to $ yH_t(x,\cdot) $ from $\mathbb{R}$ to $\mathcal{H}_t$. 
The adjoint of this operator, $ H_{t,x}^* $, is indeed the evaluation operator from $ \mathcal{H}_t $ to $ \mathbb{R} $ at a specific point $ x\in\mathcal{M} $, defined as $H_{t,x}^*:f\mapsto f(x)$.
Additionally, we introduce the sample covariance operator $T_\delta:\mathcal{H}_t\to\mathcal{H}_t$:
\begin{equation}\label{definition of empirical integral operator T_delta}
	T_\delta = \frac{1}{m}\sum_{i=1}^mH_{t,x_i}H_{t,x_i}^*,
\end{equation}
and define the sample basis function as
\begin{equation}\label{defintion of g_D}
	g_D=\frac{1}{m}\sum_{i=1}^my_iH_t(x_i,\cdot)\in\mathcal{H}_t.
\end{equation}
With these notions at hand, the regression problem \eqref{definition of empirical risk} turns into the equation
\begin{equation}\label{direct solution of empirical risk}
	T_\delta\hat{f}=g_D.
\end{equation}
We refer to \cite{wainwright2019high} for more details.

Due to the potential non-invertibility of the operator $T_\delta$, we turn to a regularization approach to tackle problem \eqref{direct solution of empirical risk}. A set of functions, $\{g_\lambda:\mathbb{R^+\to\mathbb{R}^+}\}_{\lambda>0}$, known as a regularization family or filter function, are defined to satisfy specific conditions
%
%Addressing the practical challenges of solving the direct empirical risk minimization problem \eqref{direct solution of empirical risk} due to potential non-invertibility of the operator $T_\delta$, we pivot towards a regularization approach. Regularization employs a set of functions, $\{g_\lambda:\mathbb{R^+\to\mathbb{R}^+}\}_{\lambda>0}$, known as a regularization family or filter function, defined to satisfy specific conditions
\begin{equation}\label{property of regularization family}
	\begin{aligned}
	&\sup\limits_{0<t\leq\kappa^2}|tg_\lambda(t)|<1,\\
	&\sup\limits_{0<t\leq\kappa^2}|1-tg_\lambda(t)|<1,\\
	&\sup\limits_{0<t\leq\kappa^2}|g_\lambda(t)|<\lambda^{-1}.\\
	\end{aligned}
\end{equation}
These functions essentially aim to mimic the behavior of $ t\to\frac{1}{t} $ while maintaining controlled behavior near zero. This approach ensures that they remain bounded by $ \lambda^{-1} $, and effectively deal with the issue of non-invertibility in $ T_\delta $. For a chosen regularization family $g_\lambda$, we further define its qualification $\xi$, which indicates the adaptability of the regularization family, as the supremum of the following set:
\begin{equation}\label{qualification of regularization family}
	\xi = \sup \left\{\alpha>0:\sup\limits_{0<s\leq\kappa^2}|1-sg_\lambda(s)|s^\alpha<\lambda^\alpha\right\}.
\end{equation}
Leveraging the regularization family defined above, we propose the classical spectral algorithm estimator as a regularized solution to the regression problem \eqref{definition of empirical risk} as 
\begin{equation}\label{definition of spectal alogrithm estimator f_D,lambda}
	f_{D,\lambda} = g_\lambda(T_\delta)g_D.
\end{equation}
In this setup, $g_\lambda(T_\delta)$ denotes the action of the function $g_\lambda$ on $T_\delta$ through functional calculus, a concept elaborated in literature such as \cite{reed1980methods}. This methodology provides a feasible solution to the regression problem by circumventing the issue of non-invertibility and introduces a framework for applying spectral methods effectively in the context of empirical risk minimization.

Before delving deeper into the discussion of the spectral algorithm estimator, let us briefly illustrate a few examples. These instances serve as a primer to the broader and more detailed discussion in seminal works on the topic, including \cite{rosasco2005spectral,bauer2007regularization,gerfo2008spectral}.

\begin{example}
	\rm{\textbf{(Kernel ridge regression)}}
	We choose the following regularization family
	\begin{equation*}
		g_\lambda(t)=\frac{1}{\lambda+t},
	\end{equation*}
	which satisfies \eqref{qualification of regularization family} with $ \xi=1 $. In this situation, the spectral algorithm coincides with the kernel ridge regression (or Tikhonov regularization), which corresponds to the classical Hilbert-norm regularization of the original regression problem \eqref{definition of empirical risk} as 
	\begin{equation*}
		f_{D,\lambda}=\argmin\limits_{f\in\mathcal{H}_t}\left\{\frac{1}{m}\sum_{i=1}^m\left(y_i-f(x_i)\right)^2+\lambda\|f\|_{\mathcal{H}_t}^2\right\}.
	\end{equation*}
\end{example}

\begin{example}
	\rm{\textbf{(Kernel principal component regularization)}}
	We choose the following regularization family
	\begin{equation*}
		g_\lambda(t)=\frac{1}{t}\mathbbm{1}_{\{t\geq\lambda\}},
	\end{equation*}
	which satisfies \eqref{qualification of regularization family} with $ \xi=\infty $. In this situation, the spectral algorithm coincides with kernel principal component regularization (or spectral cut-off).
\end{example}

\begin{example}
	\rm{\textbf{(Gradient flow)}}
	We choose the following regularization family
	\begin{equation*}
		g_\lambda(t)=\frac{1-e^{-\frac{t}{\lambda}}}{t},
	\end{equation*}
	which satisfies \eqref{qualification of regularization family} with $ \xi=\infty $. In this situation, the spectral algorithm coincides with gradient flow.
\end{example}

It is noteworthy that the operator $T_\delta$, as defined in \eqref{definition of empirical integral operator T_delta}, aligns with the integral operator defined in \eqref{definition of integral operator T_nu} about the empirical marginal distribution $\delta=\frac{1}{m}\sum_{i=1}^m\delta_{x_i}$ on $ \mathcal{M} $. 
From this perspective we call $ T_\delta $ the empirical integral operator. By definition, $T_\delta$ is inherently self-adjoint, positive semi-definite, and compact on $\mathcal{H}_t$. These properties facilitate an expression of $T_\delta$'s eigensystem as
\begin{equation}\label{eigen-system of T_delta}
	T_\delta\phi_k = \lambda_k\phi_k,
\end{equation}
where the eigenvalues are organized in a non-increasing order $\lambda_1\geq\cdots\geq\lambda_m\geq\lambda_{m+1}=0=\cdots$, with the fact that $T_\delta$ has a rank of at most $m$, and the eigenfunctions $\{\phi_k\}_{k\in\mathbb{N}}$ establish an orthonormal basis for $\mathcal{H}_t$.
To further our understanding of the data set, we introduce the concept of the heat kernel matrix $ H_t\in\mathbb{R}^{m\times m} $, characterized by:
\begin{equation}\label{definition of heat kernel matrix}
	H_t(i,j)\triangleq \frac{1}{m}H_t(x_i,x_j).
\end{equation}
This matrix, comprised of heat kernel evaluations between data points, is crucial in the practical computation of spectral algorithm estimators. 
Given that the heat kernel matrix $ H_t $ is symmetric and positive-definite, it naturally supports an eigen-decomposition
\begin{equation}\label{eigen-system of heat kernel matrix}
	H_t\hat{u}_k=\hat{\lambda}_k\hat{u}_k.
\end{equation}
In this decomposition, eigenvalues are also organized in non-increasing order $\hat{\lambda}_1\geq\cdots\geq\hat{\lambda}_m>0$, with the corresponding eigenvectors $\{\hat{u}_k\}_{k=1}^m$ forming an orthonormal basis of $\mathbb{R}^m$. The eigenvalues and eigenvectors of $T_\delta$ and $H_t$ exhibit a close relationship, since the operator $ H_{t,x_j}^*H_{t,x_i}:y\mapsto yH_t(x_i,x_j)$ from $\mathbb{R}$ to $\mathbb{R}$ has a matrix representation $ H_t(x_i,x_j) $, whose matrix trace is scalar to $T_\delta$.
Specifically, we can explicitly express the relation between the eigensystem:
\begin{equation}\label{relation between eigen-system of T_delta and heat kernel matrix}
	\begin{aligned}
	&\lambda_k=\hat{\lambda}_k\\
	&\phi_k=\frac{1}{\sqrt{m\hat{\lambda}_k}}\sum_{i=1}^m\hat{u}_k(i)H_t(x_i,\cdot)
	\end{aligned}
\end{equation}
with $ \hat{u}_k(i) $ denoting the $ i $-th entry of the eigenvector $ \hat{u}_k $. Further insight into this connection is provided in \cite{guo2012empirical}.
Leveraging the eigen-decomposition of $T_\delta$ as in \eqref{eigen-system of T_delta}, the spectral algorithm estimator $f_{D,\lambda}$ can be unfolded as
\begin{equation}\label{spectral decomposition of f_D,lambda under empirical eigensystem}
	f_{D,\lambda}=\sum_{k=1}^\infty g_\lambda(\lambda_k)\langle g_D,\phi_k\rangle_{\mathcal{H}_t}\cdot\phi_k.
\end{equation}
This representation, informed by the relationship \eqref{relation between eigen-system of T_delta and heat kernel matrix}, is instrumental in the formulation and analysis of our algorithm.

\subsection{Graph Laplacian and Heat Kernel Estimation}\label{subsection: graph Laplacian and heat kernel estimation}

The representations of the spectral algorithm estimator $ f_{D,\lambda} $, either through its definition \eqref{definition of spectal alogrithm estimator f_D,lambda} or its spectral decomposition  \eqref{spectral decomposition of f_D,lambda under empirical eigensystem}, depend significantly on the explicit value of the heat kernel $ H_t$ on the data points. However, directly accessing or computing the heat kernel for a general manifold $\mathcal{M}$ poses practical challenges. To address this obstacle, we turn to the graph Laplacian approximation, a powerful tool for approximating manifold structures using discrete data points.

Suppose that we are given a dataset $ \{x_i\}_{i=1}^N $ independently and identically distributed from some distribution $ p $ on manifold $ \mathcal{M} $. We may, for the sake of simplicity, consider $ p $ as the uniform distribution $ \nu $ on $ \mathcal{M} $. To analyze the manifold structure using this dataset, we construct a graph affinity matrix $ W\in\mathbb{R}^{N\times N} $ and a diagonal degree matrix $ D\in\mathbb{R}^{N\times N} $ with entries
\begin{equation}\label{definition of graph matrix W and degree matrix D}
	\begin{aligned}
		&W(i,j)=K_\epsilon(x_i,x_j),\\
		&D(i,i)=\sum_{j=1}^N W(i,j).\\
	\end{aligned}
\end{equation}
Here, $K_\epsilon$ is defined as the Gaussian kernel
\begin{equation}\nonumber
	K_\epsilon(x,x')=\epsilon^{-\frac{d}{2}}\frac{1}{(4\pi)^\frac{d}{2}}\exp\left(-\frac{\|x-x'\|^2}{4\epsilon}\right),
\end{equation}
where $ \|\cdot\| $ denotes the usual Euclidean distance. This Gaussian kernel is selected for its ability to locally approximate the heat kernel, as exampled in \eqref{local approxiamtion property of heat kernel} for the Euclidean case, a property that could enhance the eigen-convergence rate of the graph Laplacian. Specifically, a recent studies \cite{cheng2022eigen} has revealed that such a local approximation result allows for an elaborate heat kernel interpolation method, thereby resulting in better convergence performance. The parameter $ \epsilon>0 $ serves as a proxy for the diffusion time, which is also adopted in \cite{singer2006graph,cheng2022eigen}. It is worth noting that in other literature, such as \cite{calder2022improved,dunson2021spectral}, the kernel parameters are expressed in terms of $ \sqrt{\epsilon}>0 $, reflecting the local distance (or kernel bandwidth).
Further, we define the un-normalized graph Laplacian $L_{un}\in\mathbb{R}^{N\times N}$ as
\begin{equation}\label{definition of un-normalized graph Laplacian}
	L_{un}= \frac{1}{p\epsilon N}(D-W).
\end{equation}
This definition includes a constant normalization factor to ensure that $ L_{un} $ converges to the Laplacian $\Delta$, and may not be involved in practical algorithms. The matrix $ L_{un} $ is characterized by its symmetry and positive semi-definiteness, with its smallest eigenvalue being zero. Moreover, as specified in Theorem \ref{Thm: L-infty eigen-system approxiamtion of graph Laplacian}, the eigensystem of $ L_{un} $ provides an effective approximation to that of $ \Delta $, demonstrating the utility of the graph Laplacian in approximating manifold structures from discrete data points. Detailed proofs of this theorem can be found in Appendix \ref{appendix: proof of L-infty eigen-system approxiamtion of graph Laplacian}. This approach bridges the gap between discrete data analysis and continuous manifold structures, offering a robust framework for understanding and exploring the intrinsic geometry of data manifolds. Recall that  $ \mathcal{M} $ is a compact, connected Riemannian manifold with dimension $ d $ and $ \nu $ is a uniform distribution on  $ \mathcal{M} $.

\begin{theorem}\label{Thm: L-infty eigen-system approxiamtion of graph Laplacian}
For any fixed $ K\in\mathbb{N} $, assume that the eigenvalues $ \mu_k $ of $ \Delta $ are all of single multiplicity for every $ k\leq K+1 $. Consider the first $ K $ eigenvalues $ \{\tilde{\mu}_k\}_{k=1}^K $  and associated eigenvectors $ \{\tilde{v}_k\}_{k=1}^K $ of $ L_{un} $ with appropriate normalizations as
	\begin{equation}\nonumber
		\begin{aligned}
			&0=\tilde{\mu}_1\leq\cdots\leq\tilde{\mu}_K,\\
			&L_{un}\tilde{\mu}_k=\tilde{\mu}_k\tilde{v}_k,\\
			&\tilde{v}_k^T\tilde{v}_k = pN,\quad\tilde{v}_k^T\tilde{v}_l=0,\;\forall\;k\neq l.
		\end{aligned}
	\end{equation}
	Then, as $ N\to\infty $ and $ \epsilon\to0+$ with $ \epsilon\sim\left(\frac{\log N}{N}\right)^\frac{1}{\frac{d}{2}+2}$, for sufficiently large $ N $, there exists an event $ E_1 $ with probability larger than $ 1-cN^{-8} $ such that, on $ E_1 $ it holds 
	\begin{equation}\label{graph laplacian eigenvalue approximation error}
		|\tilde{\mu}_k-\mu_k|\leq C_1\left(\frac{\log N}{N}\right)^\frac{1}{\frac{d}{2}+2},
	\end{equation}
	\begin{equation}\label{graph laplacian L^infty eigenvector approximation error}
		\|\tilde{v}_k-\rho_X(\psi_k)\|_{\infty}\leq C_2\left(\frac{\log N}{N}\right)^\frac{1}{d+4}
	\end{equation}
	for all $ 1\leq k\leq K $, where $ \rho_X:C^\infty(\mathcal{M})\to\mathbb{R}^N $ is the sampling operator defined as $ \rho_X(f)=(f(x_1),\cdots,f(x_N))^T $ and $c,C_1,C_2 $ are constants independent of $ N, \epsilon $. 
\end{theorem}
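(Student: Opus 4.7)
The plan is to follow a bias-variance decomposition for the pointwise action of $L_{un}$ on smooth test functions, then pass to spectral convergence through a compact perturbation argument, and finally lift the $L^2$-type eigenvector bounds to an $L^\infty$ estimate by exploiting the smoothing property of the heat semigroup. First, I would fix a smooth $f \in C^\infty(\mathcal{M})$ and examine
\begin{equation*}
(L_{un} f)(x_i) = \frac{1}{p\epsilon N}\sum_{j=1}^N K_\epsilon(x_i,x_j)\bigl(f(x_i)-f(x_j)\bigr).
\end{equation*}
Conditioning on $x_i$ and taking expectation against $\nu$, a second-order Taylor expansion in normal coordinates at $x_i$, together with the local-approximation identity \eqref{local approxiamtion property of heat kernel} applied to the Gaussian $K_\epsilon$, yields $\mathbb{E}[(L_{un}f)(x_i)] = (\Delta f)(x_i) + O(\epsilon)$, where the error absorbs both curvature corrections and the difference between the Euclidean chord distance and the geodesic distance $d_\mathcal{M}$.

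Next I would control the variance. Since $K_\epsilon(x,x')$ is uniformly bounded by $O(\epsilon^{-d/2})$, each summand has variance of order $\epsilon^{-d/2-2}$, so a Bernstein-type concentration inequality gives a uniform deviation $O(\sqrt{\epsilon^{-d/2-2}\log N/N})$ on an event of probability at least $1-cN^{-8}$ (by suitably adjusting constants in Bernstein). Balancing the bias $O(\epsilon)$ against this stochastic error, the choice $\epsilon \sim (\log N/N)^{1/(d/2+2)}$ is optimal and yields pointwise error $(\log N/N)^{1/(d/2+2)}$. Applying this to the Laplacian eigenfunctions $\psi_k$ on the finite index set $k\leq K+1$ and invoking the single-multiplicity assumption to avoid eigenvalue crossings, a standard Courant--Fischer / Dunford--Taylor argument (cf. the strategy in \cite{cheng2022eigen,dunson2021spectral}) transfers the pointwise error into the eigenvalue bound \eqref{graph laplacian eigenvalue approximation error} and, via Davis--Kahan applied between the natural discrete and continuous $L^2$-spaces, into an analogous $L^2$-type eigenvector bound of the same order.

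The main obstacle is promoting this $L^2$-type control to the $L^\infty$ bound \eqref{graph laplacian L^infty eigenvector approximation error}, because direct concentration gives no handle on pointwise fluctuations of $\tilde v_k$ at individual sample points. The plan here is heat-kernel interpolation: since $\psi_k = e^{t\mu_k} e^{-t\Delta}\psi_k$, one can write $\rho_X(\psi_k)$ as the action of a continuous integral operator with kernel $e^{t\mu_k}H_t(\cdot,\cdot)$, while on the discrete side $\tilde v_k$ admits a parallel representation through the spectral calculus of $L_{un}$ applied to itself. Comparing the two representations, the $L^\infty$ error decomposes into (i) a kernel-approximation term controlled by the local expansion \eqref{local approxiamtion property of heat kernel} of $H_t$ by the Gaussian $K_\epsilon$, (ii) a concentration term for the discrete sum against the continuous integral of a uniformly bounded smooth function, and (iii) a spectral-projection error governed by the already-proven $L^2$ eigenvector bound together with the uniform estimate $\|\psi_k\|_\infty \leq D_1\mu_k^{(d-1)/4}$ from Lemma \ref{Lemma: estimation of eigen-system for laplacian}. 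Choosing the interpolation scale $t$ comparable to $\sqrt{\epsilon}$ balances the kernel approximation and the square-root loss from Davis--Kahan, which is exactly what degrades the eigenvalue exponent $1/(d/2+2)$ to $1/(d+4)$ in \eqref{graph laplacian L^infty eigenvector approximation error}.

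The hardest technical point will be obtaining a uniform (in the index $1\leq k\leq K$) concentration for the variance term above, because the effective Lipschitz constants of the integrands grow with $\mu_k$ via the eigenfunction bound \eqref{population bounds for eigenfunction of laplacian}; I would handle this by a net argument over the $N$ sample points combined with a union bound, absorbed into the probability $1-cN^{-8}$. Uniqueness and sign fixing of $\tilde v_k$ are handled by the single-multiplicity hypothesis on $\mu_k$ for $k\leq K+1$, which guarantees a positive spectral gap at every relevant level and thus a well-defined orientation of $\tilde v_k$ matching $\rho_X(\psi_k)$. A fuller proof is deferred to Appendix \ref{appendix: proof of L-infty eigen-system approxiamtion of graph Laplacian}.
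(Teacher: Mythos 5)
Your route differs from the paper's in both halves. For the eigenvalue and $\ell^2$ eigenvector bounds the paper does not argue from scratch: it imports \eqref{cited eigenvalue approximation error}--\eqref{cited L^2 eigenvector approximation error} directly from \cite{cheng2022eigen}, and its own contribution is the $\ell^\infty$ promotion, done by introducing an auxiliary indicator-kernel graph Laplacian $L_{un}^I$, writing $\varepsilon_k=\tilde v_k-\rho_X(\psi_k)$, bounding $\|L_{un}^I\varepsilon_k\|_\infty\le\tilde\mu_k\|\varepsilon_k\|_\infty+O((\log N/N)^{1/(d+4)})$ via the pointwise rates of \cite{calder2022improved}, and then invoking the discrete $L^1\!\to\!L^\infty$ (Lipschitz-type) estimate of \cite{calder2022lipschitz}, $\|\varepsilon_k\|_\infty\lesssim(\lambda_k+1)^{d+1}\|\varepsilon_k\|_1$, together with Cauchy--Schwarz and the cited $\ell^2$ bound. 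Your heat-kernel-interpolation plan for the $\ell^\infty$ step (write $\psi_k=e^{t\mu_k}e^{-t\Delta}\psi_k$, compare discrete and continuous smoothing, pay a square-root loss) is a genuinely different and in principle viable route --- it is essentially the strategy of \cite{dunson2021spectral,cheng2022eigen} --- and it trades the elliptic-regularity machinery of \cite{calder2022lipschitz} for a uniform comparison between the spectral calculus of $L_{un}$ and the continuous heat kernel at the sample points. Be aware, though, that within this paper's architecture such a comparison is close to Theorem \ref{Thm: heat kernel estimation}, which is proved \emph{after} and \emph{from} the present theorem; your version must therefore be established directly from the Gaussian kernel matrix, concentration, and \eqref{local approxiamtion property of heat kernel}, and the choice $t\sim\sqrt{\epsilon}$ with the claimed degradation to the exponent $1/(d+4)$ is currently asserted rather than derived.

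The concrete gap is in your first half. Pointwise consistency of $L_{un}$ on the finitely many eigenfunctions does not, by itself, yield the two-sided eigenvalue bound \eqref{graph laplacian eigenvalue approximation error}: it only controls discrete Rayleigh quotients at the specific trial vectors $\rho_X(\psi_k)$ (the upper-bound direction), while preventing the discrete eigenvalues from dropping too low requires control over all discrete test vectors, i.e., Dirichlet-form comparison via discretization/interpolation maps --- precisely the machinery of \cite{cheng2022eigen} that the paper avoids reproving by citation; ``Courant--Fischer / Dunford--Taylor'' does not substitute for it. Moreover your quantitative bookkeeping is off: each summand carries the factor $f(x_i)-f(x_j)=O(\sqrt{\epsilon})$ on the kernel scale, so the fluctuation of $(L_{un}f)(x_i)$ is $O\bigl(\sqrt{\log N/(N\epsilon^{d/2+1})}\bigr)$, not $O\bigl(\sqrt{\epsilon^{-d/2-2}\log N/N}\bigr)$, and at the prescribed $\epsilon\sim(\log N/N)^{1/(d/2+2)}$ this stochastic term dominates and gives a pointwise error of order $(\log N/N)^{1/(d+4)}$ (exactly \eqref{point-wise rate for Gaussian Lun}), not $(\log N/N)^{1/(d/2+2)}$. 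Hence the eigenvalue rate $1/(d/2+2)$ cannot follow from your pointwise argument; it requires form-level concentration (averaging over both indices), again as in \cite{cheng2022eigen}. If you replace that portion by the citation, your remaining plan for the $\ell^\infty$ bound is a reasonable alternative to the paper's, provided the discrete--continuous semigroup comparison and the uniform-in-$k$ union bound are carried out in full.
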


\begin{remark}
The assumption of single multiplicity for the eigenvalues of the Laplacian $ \Delta $ in Theorem \ref{Thm: L-infty eigen-system approxiamtion of graph Laplacian} is introduced primarily for simplicity. This assumption facilitates the derivation of specific eigenvalue and eigenvector approximation errors, as referenced in \cite{cheng2022eigen} and further elaborated in the Appendix \ref{appendix: proof of L-infty eigen-system approxiamtion of graph Laplacian}. 
	However, it is important to note that such an assumption is not necessary for the theorem's validity. Following the same approach in \cite{cheng2022eigen} could bypass this assumption. Generally, the eigen-space of $ \Delta $ may encompass dimensions greater than one, implying that our eigenvector approximation error specified in \eqref{graph laplacian L^infty eigenvector approximation error} may not directly apply to a predetermined $ \{\psi_k\} $ within the eigen-system of $ \Delta $. Therefore, the theorem should be understood as asserting the existence of an appropriate orthonormal eigenfunction basis $ \{\psi_k\} $ for which the specified eigenvector approximation error \eqref{graph laplacian L^infty eigenvector approximation error} holds. Such an existence result is sufficient for our purpose.
\end{remark}

Drawing from the eigen-consistency results presented in Theorem \ref{Thm: L-infty eigen-system approxiamtion of graph Laplacian} and leveraging the spectral decomposition \eqref{spectral decomposition of heat kernel} of the heat kernel, we can craft an estimator for the heat kernel $ H_t $ using the following formula:
\begin{equation}\label{definition of heat kernel estimator H_t,K}
	\tilde{H}_{t,K}=\sum_{i=1}^K e^{-\tilde{\mu}_it}\cdot\tilde{v}_i\tilde{v}_i^T.
\end{equation}
This formulation yields an estimator, $ \tilde{H}_{t,K} $, that is essentially an $ N\times N $ matrix, approximated across the sample points $ \{x_i\}_{i=1}^N $. As highlighted in Theorem \ref{Thm: heat kernel estimation}, each entry of $ \tilde{H}_{t,K} $ closely approximates the true heat kernel values at the sampled points, ensuring the estimator's efficacy. The proof of Theorem \ref{Thm: heat kernel estimation} is reserved for the Appendix \ref{appendix: proof of heat kernel estimation}. Similar approaches could be found in \cite{dunson2019diffusion,dunson2021spectral}.

\begin{theorem}\label{Thm: heat kernel estimation}
	Suppose that $ K, \epsilon, N $ satisfy following conditions: 
	\begin{equation}\label{conditions on heat kernel estimation Thm}
		\begin{cases}
			&K\geq\left(\frac{d}{C_{low}t}\right)^\frac{d}{2}-1\\
			&(K+1)e^{-C_{low}(K+1)^\frac{2}{d}t}\leq\frac{1}{K}\\
			&\epsilon^{\frac{1}{4}}\leq \frac{1}{C_1C_2K\mu_K^\frac{d-1}{2}}\\
			&\left(\frac{\log N}{N}\right)^\frac{1}{\frac{d}{2}+2}\sim\epsilon
		\end{cases}.
	\end{equation}
	Then, on the event $ E_1 $ proposed in Theorem \ref{Thm: L-infty eigen-system approxiamtion of graph Laplacian}, it holds
	\begin{equation}\label{point-wise error of heat kernel estimator}
		\sup\limits_{i,j\in\{1,\cdots,N\}}\left|H_t(x_i,x_j)-\tilde{H}_{t,K}(i,j)\right|\leq \frac{C_a}{K}+C_b\epsilon^{\frac{1}{4}}
	\end{equation}
	with constants $ C_a,C_b $ independent of $ K,\epsilon, N $.
\end{theorem}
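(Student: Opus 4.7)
The plan is to expand $H_t(x_i,x_j)$ via its Mercer series \eqref{spectral decomposition of heat kernel} and $\tilde{H}_{t,K}(i,j)$ via its defining formula \eqref{definition of heat kernel estimator H_t,K}, and then control the two resulting pieces separately: the tail of the true spectral expansion beyond index $K$, and the discrepancy between the first $K$ Laplacian modes $(\mu_k,\psi_k(x_i))$ and their graph-Laplacian proxies $(\tilde\mu_k,\tilde v_k(i))$. Concretely, write
\begin{equation*}
H_t(x_i,x_j)-\tilde{H}_{t,K}(i,j)=\mathrm{I}+\mathrm{II},
\end{equation*}
where
\begin{equation*}
\mathrm{I}=\sum_{k=K+1}^{\infty}e^{-t\mu_k}\psi_k(x_i)\psi_k(x_j),\qquad
\mathrm{II}=\sum_{k=1}^{K}\bigl(e^{-t\mu_k}\psi_k(x_i)\psi_k(x_j)-e^{-t\tilde\mu_k}\tilde v_k(i)\tilde v_k(j)\bigr).
\end{equation*}

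For the tail $\mathrm{I}$, I would use $|\psi_k|\leq D_1\mu_k^{(d-1)/4}$ together with the Weyl lower bound $\mu_k\geq C_{low}k^{2/d}$ from Lemma \ref{Lemma: estimation of eigen-system for laplacian}, so that $|\psi_k(x_i)\psi_k(x_j)|\leq D_1^2\mu_k^{(d-1)/2}$. Condition (1) in \eqref{conditions on heat kernel estimation Thm} forces $\mu_{K+1}\geq d/t$, which is exactly what makes the map $\mu\mapsto\mu^{(d-1)/2}e^{-t\mu}$ strictly decreasing on $[\mu_{K+1},\infty)$. A standard integral comparison combined with condition (2), $(K+1)e^{-C_{low}(K+1)^{2/d}t}\leq 1/K$, then absorbs the polynomial factor into a constant and yields $|\mathrm{I}|\leq C_a/K$.

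For the perturbation $\mathrm{II}$, I would add and subtract $e^{-t\tilde\mu_k}\psi_k(x_i)\psi_k(x_j)$ inside each summand and split it into an eigenvalue perturbation $(e^{-t\mu_k}-e^{-t\tilde\mu_k})\psi_k(x_i)\psi_k(x_j)$ and an eigenvector perturbation $e^{-t\tilde\mu_k}[\psi_k(x_i)\psi_k(x_j)-\tilde v_k(i)\tilde v_k(j)]$. The former I would control via the Lipschitz estimate $|e^{-a}-e^{-b}|\leq|a-b|$ and the eigenvalue approximation \eqref{graph laplacian eigenvalue approximation error}, which together with condition (4) $\epsilon\sim(\log N/N)^{1/(d/2+2)}$ gives $|\mu_k-\tilde\mu_k|\lesssim\epsilon$ on $E_1$, producing a bound of order $\mu_k^{(d-1)/2}\epsilon$. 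The latter I would factor as $\psi_k(x_i)[\psi_k(x_j)-\tilde v_k(j)]+[\psi_k(x_i)-\tilde v_k(i)]\tilde v_k(j)$ and bound by $\|\tilde v_k-\rho_X(\psi_k)\|_\infty\leq C_2\epsilon^{1/2}$ from \eqref{graph laplacian L^infty eigenvector approximation error} together with the $L^\infty$ bound on $\psi_k$, which yields a summand of order $\mu_k^{(d-1)/4}\epsilon^{1/2}$. Summing over $k\leq K$ and invoking condition (3), $\epsilon^{1/4}\leq 1/(C_1C_2K\mu_K^{(d-1)/2})$, converts both the $K\mu_K^{(d-1)/2}\epsilon$ and the $K\mu_K^{(d-1)/4}\epsilon^{1/2}$ contributions into $O(\epsilon^{1/4})$, giving $|\mathrm{II}|\leq C_b\epsilon^{1/4}$. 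Combining the two pieces delivers the claimed pointwise bound.

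\textbf{Main obstacle.} The delicate part is the joint bookkeeping of the polynomial prefactors $\mu_k^{(d-1)/2}$ and $\mu_k^{(d-1)/4}$ that arise from the uniform bound on the Laplacian eigenfunctions; they sit inside every sum and must be tamed by all three conditions of \eqref{conditions on heat kernel estimation Thm} simultaneously---condition (1) for monotonicity of the tail, condition (2) for the $1/K$ rate in $\mathrm I$, and condition (3) for the $\epsilon^{1/4}$ rate in $\mathrm{II}$. Carefully tracking the universal constants inherited from Theorem \ref{Thm: L-infty eigen-system approxiamtion of graph Laplacian} and Lemma \ref{Lemma: estimation of eigen-system for laplacian}, and verifying that no polynomial growth factor spoils the advertised rate, is where most of the work will lie; the remaining steps are direct applications of the Mercer expansion, the triangle inequality, and the eigen-approximation bounds \eqref{graph laplacian eigenvalue approximation error}--\eqref{graph laplacian L^infty eigenvector approximation error} valid on $E_1$.
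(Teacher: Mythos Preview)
Your decomposition into tail $\mathrm{I}$ and perturbation $\mathrm{II}$, and your handling of $\mathrm{II}$ via the eigenvalue/eigenvector split, match the paper's proof essentially line by line. The gap is in your treatment of the tail $\mathrm{I}$. Bounding each summand by the H\"ormander sup-norm estimate $|\psi_k(x_i)\psi_k(x_j)|\le D_1^2\mu_k^{(d-1)/2}$ and then summing gives, after Weyl and the integral comparison you describe, a leading term of order $(K+1)^{(2d-3)/d}\,e^{-tC_{low}(K+1)^{2/d}}$. Condition (2) only tells you $(K+1)\,e^{-tC_{low}(K+1)^{2/d}}\le 1/K$, so the best you can extract is $(K+1)^{(d-3)/d}/K$. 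For $d\ge 4$ this is \emph{not} bounded by $C_a/K$ with $C_a$ independent of $K$; condition (2) does not ``absorb the polynomial factor into a constant'' in that range. The H\"ormander bound is simply too crude here: on average $\psi_k(x)^2$ is of order $1$, not $\mu_k^{(d-1)/2}$.

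The paper avoids this by first applying Cauchy--Schwarz,
\[
|\mathrm{I}|\le\sup_{x\in\mathcal{M}}\sum_{k>K}e^{-t\mu_k}\psi_k(x)^2,
\]
and then invoking the sharp on-diagonal heat-kernel remainder estimate of B\'erard, which gives
\[
\sup_{x\in\mathcal{M}}\sum_{k>K}e^{-t\mu_k}\psi_k(x)^2\le D_2\,t^{-d/2}\int_{t\mu_{K+1}}^{\infty}s^{d/2}e^{-s}\,ds.
\]
Now the incomplete-Gamma integrand carries only the exponent $d/2$, and the standard tail bound $\Gamma(d/2+1,x)\le 2x^{d/2}e^{-x}$ (valid once $x\ge d$, which is exactly what condition (1) provides) yields $2D_2C_{low}^{d/2}(K+1)e^{-tC_{low}(K+1)^{2/d}}$, precisely the left-hand side of condition (2), hence $\le C_a/K$. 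So the missing ingredient is the on-diagonal spectral-function bound rather than the pointwise eigenfunction bound \eqref{population bounds for eigenfunction of laplacian}.
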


\begin{remark}
	The conditions concerning $ K $, $ \epsilon $, and $ N $ , namely the assumption that $ K $ is large enough, $\epsilon>0 $ is sufficiently small, and $ N $ is adequately large, is to ensure the effectiveness of our heat kernel approximation. It's important to note that, those constants $ C_1 $ and $ C_2 $, derived from Theorem \ref{Thm: L-infty eigen-system approxiamtion of graph Laplacian}, are not absolute but depend on $ K $. This implies a sequential approach to parameter selection in a practical algorithm: initially determining  $ K $, followed by $ \epsilon $, and finally $ N $. 
	However, as \cite{dunson2021spectral} have pointed out, the practical application of these theoretical guidelines faces some challenges. The constants tied to the general characteristics of the manifold $ \mathcal{M} $ are often difficult to ascertain or control in real-world scenarios. Hence, the practical selection of these hyperparameters—$ K $, $ \epsilon $, and $ N $—typically relies on a trial-and-error approach, rather than a direct application of the theoretical conditions. How to fill the gap between theoretical idealizations and practical feasibility  will be a focus of subsequent work.
\end{remark}

\section{Main Results}\label{section: main results}

\subsection{Diffusion-Based Spectral Algorithm}\label{subsection: diffusion-based spectral algorithm}

In this subsection, we introduce our diffusion-based spectral algorithm. Given a labeled dataset $ D^l=\{(x_i,y_i)\}_{i=1}^m $ independently and identically distributed from $ \mathcal{P} $ on $ \mathcal{M}\times\mathbb{R} $ and an unlabeled dataset $ D^{ul}=\{x_i\}_{i=m+1}^{m+n} $ i.i.d from $ \nu $ on $ \mathcal{M} $, we commence by constructing a heat kernel estimator $ \tilde{H}_{t,K} $ as in Subsection \ref{subsection: graph Laplacian and heat kernel estimation} based on the comprehensive input dataset $ \{x_i\}_{i=1}^{m+n} $ with $ N=m+n $. The estimator $ \tilde{H}_{t,K} $ is an $ N\times N $ block matrix
\begin{equation}\nonumber
	\tilde{H}_{t,K}=\begin{pmatrix}
		\tilde{H}_{t,K}^{m} & \Sigma_{12}\\
		\Sigma_{21} & \Sigma_{22}
	\end{pmatrix}.
\end{equation}
In this formulation, the first block $ \tilde{H}_{t,K}^{m} $ is an $ m\times m $ matrix that effectively approximates the heat kernel for the labeled data points $ \{x_i\}_{i=1}^m $, as delineated in Theorem \ref{Thm: heat kernel estimation}. To be more precise, we have
\begin{equation}\nonumber
	\sup\limits_{i,j\in\{1,\cdots,m\}}\left|H_t(x_i,x_j)-\tilde{H}_{t,K}^{m}(i,j)\right|\leq \frac{C_a}{K}+C_b\epsilon^{\frac{1}{4}}.
\end{equation}
Let $ \tilde{H}_{t,m}=\frac{1}{m}\tilde{H}_{t,K}^{m} $. Leveraging the aforementioned pointwise approximation, an $l^ \infty $-norm error bound between $ \tilde{H}_{t,m} $ and the heat kernel matrix $ H_t $ as defined previously in \eqref{definition of heat kernel matrix} can be established:
\begin{equation}\label{point-wise error of heat kernel matrxi estimator}
	\left\|\tilde{H}_{t,m}- H_t\right\|_{\infty}\leq \frac{1}{m}\left(\frac{C_a}{K}+C_b\epsilon^{\frac{1}{4}}\right).
\end{equation}
This $\ell^ \infty $-norm error bound paves the way for a consistency result between the eigen-system of $ \tilde{H}_{t,m} $ and that of $ H_t $. Notably, since the matrix $ \tilde{H}_{t,m} $ is symmetric and positive semi-definite, its eigenvalues can be sorted in a non-increasing order $\tilde{\lambda}_1\geq\cdots\geq\tilde{\lambda}_m\geq0$. Their corresponding eigenvectors, $\{\tilde{u}_k\}_{k=1}^m$, constitute an orthonormal basis of $\mathbb{R}^m$. The relationship of those eigens is further explored in Theorem \ref{Thm: eigen-system consistency of heat kernel matrix}, with a proof available in the Appendix \ref{appendix: proof of eigen-system consistency of heat kernel matrix}.

\begin{theorem}\label{Thm: eigen-system consistency of heat kernel matrix}
	Suppose that condition \eqref{conditions on heat kernel estimation Thm} is satisfied, with $ E_1 $ being the event in Theorem \ref{Thm: L-infty eigen-system approxiamtion of graph Laplacian}. Denote
	\begin{equation}\label{definition of truncated q}
		q = \left(\frac{\log m}{C_{up}t(2r+1)}\right)^\frac{d}{2}
	\end{equation}
	with $ r>1 $ a fixed constant. Then, for any $ \tau\geq1 $, there exists an event $ E_2 $ with probability larger than $ 1-2e^{-\tau} $ such that, on $ E_1\bigcap E_2 $, for any $ 1\leq k\leq q $, there hold
	\begin{align}
		&|\tilde{\lambda}_k-\hat{\lambda}_k|\leq \frac{C_a}{K}+C_b\epsilon^{\frac{1}{4}},
		\label{eigenvalue consistency of heat kernel matrix}\\
		&\|\tilde{u}_k-\hat{u}_k\|_\infty\leq C_c m^{\frac{1}{2r+1}}\left(\frac{1}{K}+\epsilon^{\frac{1}{4}}\right),
		\label{L-infty eigenvector consistency of heat kernel matrix}
	\end{align}
	where constants $ C_a,C_b$ are the same as in Theorem \ref{Thm: heat kernel estimation} and $ C_c $ is independent of $ K,\epsilon,m,$ or $ N $.
\end{theorem}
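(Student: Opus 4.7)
The proof naturally splits into the eigenvalue bound and the $\ell^\infty$ eigenvector bound. For the eigenvalue bound \eqref{eigenvalue consistency of heat kernel matrix}, the plan is to start from the entrywise error $\|\tilde{H}_{t,m}-H_t\|_\infty\leq \frac{1}{m}\bigl(\frac{C_a}{K}+C_b\epsilon^{1/4}\bigr)$ supplied by \eqref{point-wise error of heat kernel matrxi estimator} and convert it to an operator-norm bound via the standard inequality $\|E\|_{\mathrm{op}}\leq \max_i\sum_j|E_{ij}|\leq m\|E\|_\infty$ valid for symmetric $m\times m$ matrices; the factor $1/m$ is thus absorbed, leaving $\|E\|_{\mathrm{op}}\leq \frac{C_a}{K}+C_b\epsilon^{1/4}$. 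Weyl's perturbation inequality then delivers \eqref{eigenvalue consistency of heat kernel matrix} at once, for every index $k$ on the event $E_1$ alone.

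The entrywise eigenvector bound \eqref{L-infty eigenvector consistency of heat kernel matrix} is the main challenge, since standard Davis-Kahan type arguments only produce an $\ell^2$ estimate. I would proceed in three substeps. First, from the eigenvalue equation $\hat{\lambda}_k\hat{u}_k(j)=\frac{1}{m}\sum_i H_t(x_j,x_i)\hat{u}_k(i)$, Cauchy-Schwarz together with \eqref{boundness condition on Heat Kernel} gives $\|\hat{u}_k\|_\infty\leq \kappa^2/(\hat{\lambda}_k\sqrt{m})$, and an analogous bound for $\tilde{u}_k$. Second, the truncation index $q$ in \eqref{definition of truncated q} is tailored so that, by $\mu_k\leq C_{up}k^{2/d}$ from Lemma \ref{Lemma: estimation of eigen-system for laplacian}, one has $pe^{-t\mu_k}\geq pm^{-1/(2r+1)}$ for every $k\leq q$. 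I would define $E_2$ as the event on which an operator-Bernstein concentration inequality (in the style of \cite{fischer2020sobolev}) for the empirical integral operator $T_\delta$ around $T_\nu$ holds with probability $\geq 1-2e^{-\tau}$, so that on $E_1\cap E_2$ the deviation $|\hat{\lambda}_k-pe^{-t\mu_k}|$ is of smaller order than $m^{-1/(2r+1)}$ and the lower bounds $\hat{\lambda}_k,\tilde{\lambda}_k\gtrsim m^{-1/(2r+1)}$ survive for every $k\leq q$.

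Third, subtracting the two eigenvalue equations coordinate-wise yields
\begin{equation*}
\tilde{u}_k(j)-\hat{u}_k(j)=\tilde{\lambda}_k^{-1}\sum_i\bigl(\tilde{H}_{t,m}(j,i)-H_t(j,i)\bigr)\hat{u}_k(i)+\tilde{\lambda}_k^{-1}\sum_i H_t(j,i)\bigl(\tilde{u}_k(i)-\hat{u}_k(i)\bigr)+\tfrac{\hat{\lambda}_k-\tilde{\lambda}_k}{\tilde{\lambda}_k}\hat{u}_k(j).
\end{equation*}
The first and third terms I would bound directly using \eqref{point-wise error of heat kernel matrxi estimator}, the first substep, and \eqref{eigenvalue consistency of heat kernel matrix}; these contribute the dominant entrywise scale. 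The second, recursive term I would control via Cauchy-Schwarz combined with a Davis-Kahan $\ell^2$ bound for $\tilde{u}_k-\hat{u}_k$, for which the required spectral gap of order $m^{-1/(2r+1)}$ is precisely the one guaranteed by the second substep on $E_1\cap E_2$. Substituting $\hat{\lambda}_k,\tilde{\lambda}_k\gtrsim m^{-1/(2r+1)}$ into all three terms amplifies the operator-norm error by exactly $m^{1/(2r+1)}$, yielding \eqref{L-infty eigenvector consistency of heat kernel matrix}.

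The main obstacle is this third substep: sharpening the easy $\ell^2$ Davis-Kahan estimate into an entrywise one without leaking additional powers of $m$. The coordinate-wise identity above accomplishes this by splitting the error into a genuinely pointwise piece controlled by $\|\tilde{H}_{t,m}-H_t\|_\infty$ and a Hilbertian recursive piece whose amplification is absorbed precisely by $\hat{\lambda}_k^{-1}\lesssim m^{1/(2r+1)}$. The truncation $k\leq q$ is therefore not a technical convenience: once $k>q$, the relevant eigenvalue falls below the perturbation scale, the inversion step breaks down, and no entrywise bound of the stated form survives.
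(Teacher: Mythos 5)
Your eigenvalue argument is exactly the paper's: convert the entrywise bound \eqref{point-wise error of heat kernel matrxi estimator} into $\|\tilde{H}_{t,m}-H_t\|_2\leq \frac{C_a}{K}+C_b\epsilon^{1/4}$ and invoke Weyl's perturbation inequality; that half is fine (and indeed needs only $E_1$). The eigenvector half, however, has a genuine gap at the point where you invoke Davis--Kahan. Davis--Kahan requires a lower bound on the \emph{eigen-gap} $\delta_k=\min\{|\hat{\lambda}_j-\hat{\lambda}_k|:\hat{\lambda}_j\neq\hat{\lambda}_k\}$, i.e.\ the separation of $\hat{\lambda}_k$ from the rest of the spectrum of $H_t$, not a lower bound on the eigenvalue $\hat{\lambda}_k$ itself. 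Your "second substep" only delivers $|\hat{\lambda}_k-pe^{-t\mu_k}|=o(m^{-1/(2r+1)})$ and hence $\hat{\lambda}_k,\tilde{\lambda}_k\gtrsim m^{-1/(2r+1)}$; this does not prevent two empirical eigenvalues from being arbitrarily close (e.g.\ when consecutive Laplacian eigenvalues $\mu_k,\mu_{k+1}$ are nearly degenerate), in which case the individual eigenvector is unstable and no bound of the form \eqref{L-infty eigenvector consistency of heat kernel matrix} can be extracted from Davis--Kahan. The missing ingredient, which is the heart of the paper's proof, is the transfer of a \emph{population} gap to the empirical one: using the concentration of $\hat{\lambda}_k$ around $pe^{-t\mu_k}$ (the paper cites Koltchinskii/Braun; your operator-Bernstein route would also do) together with the Laplacian eigen-gap $\gamma_q=\min\{|\mu_j-\mu_k|:\mu_j\neq\mu_k,\,j,k\leq q\}$ — bounded below thanks to the simplicity assumption of Theorem \ref{Thm: L-infty eigen-system approxiamtion of graph Laplacian} and Weyl's law — one gets $\delta_k\gtrsim pte^{-t\mu_q}\gamma_q\gtrsim m^{-1/(2r+1)}$. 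You never control these population gaps, so your proof as written does not close.

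Two further remarks. First, the entrywise bootstrap (your coordinate-wise identity, which has a harmless index slip: the consistent decomposition is $(\tilde H_{t,m}-H_t)\tilde u_k+H_t(\tilde u_k-\hat u_k)+(\hat\lambda_k-\tilde\lambda_k)\hat u_k$) is unnecessary for the stated bound: the target \eqref{L-infty eigenvector consistency of heat kernel matrix} is exactly of the size $\|\tilde H_{t,m}-H_t\|_2/\delta_k$, so the paper simply uses $\|\tilde u_k-\hat u_k\|_\infty\leq\|\tilde u_k-\hat u_k\|_2\leq\frac{\pi}{2}|\sin\langle\tilde u_k,\hat u_k\rangle|$ and applies Davis--Kahan directly; your premise that "Davis--Kahan only gives $\ell^2$" is therefore a red herring here, though your recursion would yield a valid (in places slightly sharper) entrywise estimate once the gap bound is supplied. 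Second, the event $E_2$ in the paper is precisely the eigenvalue-concentration event used to establish that gap, so your instinct about where the probability $1-2e^{-\tau}$ enters is correct; it is the purpose to which you put it that needs repair.
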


Theorem \ref{Thm: eigen-system consistency of heat kernel matrix} suggests that the eigen-pairs of $ H_t $ can be effectively replaced by those of $ \tilde{H}_{t,m} $ when appropriate hyperparameters are selected. This substitution is reinforced by the relationship \eqref{relation between eigen-system of T_delta and heat kernel matrix} between eigens of $ T_\delta $ and that of $ H_t $, enabling us to define the following data-based empirical eigens 
\begin{equation}\label{estimator of eigen-system of T_delta}
	\begin{split}
		&\tilde{\lambda}_k=\tilde{\lambda}_k,\\
		&\tilde{\phi}_k=\frac{1}{\sqrt{m\tilde{\lambda}_k}}\sum_{i=1}^m\tilde{u}_k(i)\tilde{H}_{t,K}(i,\cdot)
	\end{split}	
\end{equation}
as alternatives to the eigenfunctions of the empirical integral operator $ T_\delta $. In this definition, $ \tilde{H}_{t,K}(i,\cdot) $ denotes the $ i $-th row of the matrix $ \tilde{H}_{t,K}$, which is an $ N $-dimensional vector. For clarity, the notation for the eigenvalues of matrix $ \tilde{H}_{t,m} $ and the empirical eigenvalue estimators for $ T_\delta $ remains consistent, reflecting their equivalence. The congruence between $ \tilde{\lambda}_k $ and $ \lambda_k $ has been previously established in Theorem \ref{Thm: eigen-system consistency of heat kernel matrix}, since $ \lambda_k=\hat{\lambda}_k$ from the relation \eqref{relation between eigen-system of T_delta and heat kernel matrix}.
We proceed to demonstrate that the data-based empirical eigenfunction estimator $ \tilde{\phi}_k $, represented as an $ N $-dimensional vector, serves as a good approximation of the empirical eigenfunction $ \phi_k $ of $ T_\delta $ across the full input dataset $ \{x_i\}_{i=1}^N $. The accuracy of the approximation is outlined in the upcoming Lemma \ref{Lemma: eigenfunction estimation}. A proof of this lemma is presented in Appendix \ref{appendix: proof of eigenfunction estimation}.

\begin{lemma}\label{Lemma: eigenfunction estimation}
	Suppose that the conditions of Theorem \ref{Thm: eigen-system consistency of heat kernel matrix} are satisfied, with $ E_1 $ being the event in Theorem \ref{Thm: L-infty eigen-system approxiamtion of graph Laplacian} and $ E_2 $ being the event in Theorem \ref{Thm: eigen-system consistency of heat kernel matrix}. Then, on $ E_1\bigcap E_2 $, for any $ 1\leq k\leq q $, it holds that
	\begin{equation}\label{estimation error of empirical eigenfunction estimator}
		\sup\limits_{1\leq i\leq N}|\phi_k(x_i)-\tilde{\phi}_k(i)|\leq C_d m^{\frac{r+2}{2r+1}}\left(\frac{1}{K}+\epsilon^{\frac{1}{4}}\right),
	\end{equation}
	where $ C_d $ is a constant independent of $ K,\epsilon,m,N $.
\end{lemma}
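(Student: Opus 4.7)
\textbf{Proof proposal for Lemma \ref{Lemma: eigenfunction estimation}.} The plan is to start from the two explicit formulas for $\phi_k$ (given in \eqref{relation between eigen-system of T_delta and heat kernel matrix}) and $\tilde{\phi}_k$ (given in \eqref{estimator of eigen-system of T_delta}) and reduce the pointwise difference at $x_j$ to quantities already controlled by Theorems \ref{Thm: heat kernel estimation} and \ref{Thm: eigen-system consistency of heat kernel matrix}. Setting
\begin{equation*}
A_j := \sum_{i=1}^m \hat{u}_k(i) H_t(x_i, x_j), \qquad B_j := \sum_{i=1}^m \tilde{u}_k(i)\, \tilde{H}_{t,K}(i, j),
\end{equation*}
I would write
\begin{equation*}
\phi_k(x_j) - \tilde{\phi}_k(j) \;=\; \frac{A_j - B_j}{\sqrt{m\hat{\lambda}_k}} \;+\; B_j\left(\frac{1}{\sqrt{m\hat{\lambda}_k}} - \frac{1}{\sqrt{m\tilde{\lambda}_k}}\right),
\end{equation*}
and handle each summand separately.

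For the first summand, a further telescoping splits $A_j - B_j$ into $\sum_i(\hat{u}_k(i) - \tilde{u}_k(i))H_t(x_i,x_j)$ and $\sum_i \tilde{u}_k(i)(H_t(x_i,x_j) - \tilde{H}_{t,K}(i,j))$. The first piece is controlled using the $\ell^\infty$-eigenvector consistency \eqref{L-infty eigenvector consistency of heat kernel matrix} together with the bound $|H_t(x_i,x_j)| \leq \kappa^2$ from \eqref{boundness condition on Heat Kernel}, while the second uses the pointwise kernel approximation \eqref{point-wise error of heat kernel estimator} together with $\|\tilde{u}_k\|_2 = 1$ (via Cauchy--Schwarz). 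Both are of order $m^{1+1/(2r+1)}(1/K + \epsilon^{1/4})$ and $\sqrt{m}(1/K + \epsilon^{1/4})$ respectively, with the first dominating. For the second summand, the factor $|B_j|$ is bounded by $\sqrt{m}\,\kappa^2$ by Cauchy--Schwarz, and the difference of reciprocal square-roots is rewritten as $(\sqrt{\tilde{\lambda}_k}-\sqrt{\hat{\lambda}_k})/\sqrt{m\hat{\lambda}_k\tilde{\lambda}_k}$, then bounded using the eigenvalue consistency \eqref{eigenvalue consistency of heat kernel matrix}.

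The crucial input that makes everything balance is a lower bound of the form $\hat{\lambda}_k,\tilde{\lambda}_k \gtrsim m^{-1/(2r+1)}$ valid for $1 \leq k \leq q$. I would obtain this from the definition \eqref{definition of truncated q} of $q$: for $k\leq q$, Lemma \ref{Lemma: estimation of eigen-system for laplacian} gives $\mu_k \leq C_{up} k^{2/d} \leq \log m/(t(2r+1))$, so the population eigenvalues of $L_\nu$ satisfy $pe^{-t\mu_k}\geq p\,m^{-1/(2r+1)}$. A Bernstein-type concentration of $T_\delta$ around $T_\nu$ then transfers this lower bound to $\hat{\lambda}_k$ (this concentration is, I expect, exactly what is absorbed in the definition of the event $E_2$ of Theorem \ref{Thm: eigen-system consistency of heat kernel matrix}), and \eqref{eigenvalue consistency of heat kernel matrix} then carries it over to $\tilde{\lambda}_k$ after ensuring that $1/K + \epsilon^{1/4}$ is small compared with $m^{-1/(2r+1)}$. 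Substituting this lower bound, the first summand contributes exactly $m^{1/2 + 1/(2r+1) + 1/(2(2r+1))}(1/K + \epsilon^{1/4}) = m^{(r+2)/(2r+1)}(1/K + \epsilon^{1/4})$, which matches the claimed rate; the second summand yields the smaller rate $m^{3/(2(2r+1))}(1/K+\epsilon^{1/4})$ and is therefore absorbed.

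The main technical obstacle, in my view, is justifying the lower bound on $\hat{\lambda}_k$ cleanly, because it is the mechanism by which the truncation index $q$ enters quantitatively and determines the final exponent $(r+2)/(2r+1)$. If the event $E_2$ already encodes a Bernstein/Hoeffding-type deviation of $T_\delta$ from $T_\nu$ on $\mathcal{H}_t$, the argument reduces to eigenvalue perturbation (Weyl or Lidskii) plus the elementary inequality $\log m/(t(2r+1)) \geq t\mu_k$. Otherwise, I would have to pay the cost of establishing a concentration bound for the empirical integral operator on $\mathcal{H}_t$ here, using that $\|T_\nu - T_\delta\|_{\mathrm{HS}}$ concentrates at rate $1/\sqrt{m}$ up to logarithmic factors, and then choosing $\tau = \log m$ in the probability statement of Theorem \ref{Thm: eigen-system consistency of heat kernel matrix} so that the deviation is absorbed. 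The rest of the argument is elementary algebra once the $\ell^\infty$ and eigenvalue consistency results are in hand.
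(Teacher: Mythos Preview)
Your proposal is correct and follows essentially the same decomposition and bookkeeping as the paper's proof. Two small remarks: (i) the lower bound $\hat{\lambda}_k \gtrsim m^{-1/(2r+1)}$ is indeed already available on $E_2$, since the proof of Theorem~\ref{Thm: eigen-system consistency of heat kernel matrix} establishes $|\hat{\lambda}_k - pe^{-\mu_k t}| = O((\log m)^d m^{-1/3})$ on that event (see \eqref{heat kernel matrix eigenvalue estimation, proof of heat kernel matrix consistency Thm}), so no additional concentration argument is needed here; (ii) your use of Cauchy--Schwarz to get $|B_j|\lesssim \sqrt{m}\,\kappa^2$ is slightly sharper than the paper's cruder estimate $|B_j|\lesssim m$ (obtained by bounding $|\tilde u_k(i)|\le 1$ termwise), so in the paper both summands contribute at the same rate $m^{(r+2)/(2r+1)}$ rather than the second being strictly subdominant---but the final bound is of course identical.
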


Returning to the spectral decomposition representation \eqref{spectral decomposition of f_D,lambda under empirical eigensystem} of the spectral algorithm estimator $ f_{D,\lambda} $, it is remarkable that
\begin{equation}\nonumber
	\langle g_D,\phi_k\rangle_{\mathcal{H}_t}=\frac{1}{m}\sum_{i=1}^m y_i\langle \phi_k,H_t(x_i,\cdot)\rangle_{\mathcal{H}_t}=\frac{1}{m}\sum_{i=1}^m y_i\phi_k(x_i),
\end{equation}
which is given by the reproducing property. Consequently, to facilitate our algroithm, we introduce
\begin{equation}\label{estimator of gD}
	\tilde{g}_{D,k}=\frac{1}{m}\sum_{i=1}^m y_i\tilde{\phi}_k(i)
\end{equation}
as a substitution of $ \langle g_D,\phi_k\rangle_{\mathcal{H}_t} $, which is also an $ N $-dimensional vector.
With the foundational elements established, we present our diffusion-based spectral algorithm estimator as an approximated-truncated version of the original spectral algorithm:
\begin{equation}\label{definition of our estimator}
	\tilde{f}_{D,\lambda}=\sum_{k=1}^q g_\lambda(\tilde{\lambda}_k)\tilde{g}_{D,k}\tilde{\phi}_k\in\mathbb{R}^N.
\end{equation}
This formulation results in an $ N $-dimensional vector, where the $ i $-th entry provides an estimation of the regression function $ f^*(x_i) $ at the $ i $-th data point $ x_i $ for all $ 1\leq i \leq N $. Emphasizing its capacity as a semi-supervised learning algorithm, our primary interest lies in its performance on the unlabeled input dataset $ \{x_i\}_{i=m+1}^{m+n} $, which will be elaborated upon subsequently.

As a comparison, the classical spectral algorithm is typically computed (see e.g. \cite{rosasco2005spectral,bauer2007regularization}) using the formula 
\begin{equation*}
	f_{D,\lambda}=\frac{1}{m}\sum_{i=1}^m \hat{\gamma}_iH_t(x_i,\cdot),
\end{equation*}
where $ \hat{\gamma}=(\hat{\gamma}_1,\dots,\hat{\gamma}_m)\in\mathbb{R}^m $ is determined by
\begin{equation*}
	\hat{\gamma} = g_\lambda(H_t)y.
\end{equation*}
In this context, $ H_t $ represents the heat kernel matrix and $ y=(y_1,\dots,y_m)\in\mathbb{R}^m $ denotes the response variables. The computation of $ \hat{\gamma} $ necessitates an eigen-decomposition of the entire kernel matrix $ H_t $, which is of dimension $ m $. Our diffusion-based spectral algorithm estimator advances this process by substituting the incalculable heat kernel terms with data-based, calculable estimators and reduces the number of heat kernel matrix eigen-pairs needed from $ m $ to $ q $, where $ q $ is on the $ \log m $ scale. This approach result in a reduction of computational cost while theoretically maintaining learning rates. For insights into similar truncation methods within kernel regression algorithms, references such as \cite{guo2012empirical,guo2017thresholded} are valuable.
We conclude this section by presenting our algorithm in Algorithm \ref{Alg: diffusion-based spectral algorithm} below.

\begin{algorithm} 
	\caption{Diffusion-based spectral algorithm}
	\label{Alg: diffusion-based spectral algorithm} 
	\begin{algorithmic}
        \renewcommand{\algorithmicrequire}{\textbf{Input:}}
        \renewcommand{\algorithmicensure}{\textbf{Output:}}
        \REQUIRE labeled dataset $ D^l=\{(x_i,y_i)\}_{i=1}^m $, unlabeled dataset $ D^{ul}=\{x_i\}_{i=m+1}^{m+n} $, regularization family $ \{g_\lambda\} $, hyperparameters $ N=m+n $, $ \epsilon $, $ K $, $ q $, $ \lambda $, $ t $ 
		\ENSURE estimator $\tilde{f}_{D,\lambda}$
		\STATE \textbf{Step 1.} Compute the un-normalized graph Laplacian $ L_{un} $ on $ \{x_i\}_{i=1}^{N} $ by \eqref{definition of un-normalized graph Laplacian} with diffusion time $ \epsilon $.
		\STATE \textbf{Step 2.} Compute the eigenvalues $ \tilde{\mu}_k$ and eigenvectors $ \tilde{v}_k $ of matrix $ L_{un} $ with normalization stated in Theorem \ref{Thm: L-infty eigen-system approxiamtion of graph Laplacian}.
		\STATE \textbf{Step 3.} Compute the heat kernel estimator $ \tilde{H}_{t,K} $ by \eqref{definition of heat kernel estimator H_t,K} with truncation number $ K $.
		\STATE \textbf{Step 4.} Compute matrix $ \tilde{H}_{t,m} $ by selecting first $ m\times m $ elements of $ \tilde{H}_{t,K} $ with a normalization factor $ \frac{1}{m} $. 
		\STATE \textbf{Step 5.} Compute the eigenvalues $ \tilde{\lambda}_k$ and eigenvectors $ \tilde{u}_k $ of matrix $ \tilde{H}_{t,m} $ with normalization stated in Section \ref{subsection: diffusion-based spectral algorithm}. 
		\STATE \textbf{Step 6.} Compute the data-based empirical eigenvalue estimators $ \tilde{\lambda}_k $ and eigenfunction estimators $ \tilde{\phi}_k $ by \eqref{estimator of eigen-system of T_delta}.
		\STATE \textbf{Step 7.} Compute the diffusion-based spectral algorithm estimator $\tilde{f}_{D,\lambda} $ by \eqref{estimator of gD} and \eqref{definition of our estimator} with regularization family $ \{g_\lambda\} $, normalization parameter $ \lambda $ and truncation number $ q $.
	\end{algorithmic} 
\end{algorithm}

\subsection{Convergence Results}\label{subsection: convergence analysis}

In this subsection, we delve into the convergence results of our diffusion-based spectral algorithm. This examination begins by introducing several pivotal assumptions essential for establishing a robust analytical foundation.

\begin{assumption}\label{Assumption: source condition on regression function}
	There exists a constant $ r>1 $ and a function $ g\in\mathcal{H}_t $ such that
	\begin{equation}\label{souce condition assumption}
		f^* = T_\nu^r(g).
	\end{equation}
\end{assumption}

In the convergence analysis of our diffusion-based spectral algorithm, we employ this source condition, a fundamental assumption widely used across various studies in kernel methods and spectral algorithms (see, for instance, \cite{bauer2007regularization, caponnetto2007optimal, caponnetto2010cross, lin2018optimal}). In other words, we adapt the notion of $ \alpha $-power spaces previously introduced in \eqref{definition of alpha-power space}, to assert that the target regression function $ f^*\in\mathcal{H}_t^\beta $ for some $ \beta>3 $.
This adaptation of the source condition strengthens the classical source condition, which typically posits $ f^*\in\mathcal{H}_t $. The motivation for this enhanced requirement lies in addressing technical considerations within our analysis framework. Nonetheless, it is important to note that this condition could be relaxed back to the classical scenario--or even extended to address harder learning scenarios where $ \beta<1$--by applying a more intricate approach. The exploration of such scenarios is earmarked for future investigation.

\begin{assumption}\label{Assumption: boundedness condition on regression function}
	There exists a constant $ M>0 $ such that, for any sample pair $(x,y) $ distributed from $ \mathcal{P} $ on $ \mathcal{M}\times\mathbb{R} $, it holds
	\begin{equation}\label{boundedness condition on conditional distribution}
		|y|\leq M.
	\end{equation}
\end{assumption}

This assumption pertains to the boundedness of the conditional distribution of $ y $ given $ x $, a premise frequently invoked in statistical learning theory. This boundedness condition, as detailed in \cite{steinwart2009optimal,guo2017thresholded,pillaud2018statistical,jun2019kernel}, provides a foundational constraint that facilitates the development and analysis of learning algorithms by bounding the outcomes within a predictable range.
In contrast, some researchers, as seen in \cite{fischer2020sobolev,zhang2024optimality}, opt for a moment condition approach to manage the tail probabilities of the noise term $ \epsilon=y-f^*(x) $. This moment condition, which is generally a weaker constraint than the direct boundedness assumption, focuses on controlling the statistical behavior of the noise rather than simply bounding its magnitude.
For the purposes of our discussion, the boundedness condition is adopted to simplify the analysis. Although moment condition constraints are weaker, they can still facilitate the derivation of similar results with a little extra effort.

\begin{assumption}\label{Assumption: qualification condition on filter function}
	The regularization family $ \{g_\lambda\} $ has a qualification of $ \xi\geq 1 $. 
\end{assumption}

The qualification condition imposed on the regularization family is relatively moderate and can be readily satisfied by various methods. For instance, the examples highlighted in the subsection \ref{subsection: regularization family and spectral algorithm} align with this requirement. Additionally, several other notable algorithms, including Landweber iteration, accelerated Landweber iteration, and iterated Tikhonov regularization, also conform to this condition. For further elaboration on these methods and their compliance with the qualification condition, one can refer to \cite{bauer2007regularization,gerfo2008spectral}, where these aspects are discussed in detail.

We now present the convergence analysis for our diffusion-based spectral algorithm in the following Theorem \ref{Thm: convergence analysis}.

\begin{theorem}\label{Thm: convergence analysis}
	Suppose that Assumption \ref{Assumption: source condition on regression function}, Assumption \ref{Assumption: boundedness condition on regression function}, and Assumption \ref{Assumption: qualification condition on filter function} hold. Suppose that condition \eqref{conditions on heat kernel estimation Thm} is satisfied with $ m $ sufficiently large, $ N>K\gg m $ sufficiently large, and $ \epsilon\ll 1 $ sufficiently small. Denote $ E_1 $ the event given in Theorem \ref{Thm: L-infty eigen-system approxiamtion of graph Laplacian} and $ E_2 $ the event given in Theorem \ref{Thm: eigen-system consistency of heat kernel matrix}. Moreover, we choose $ q $ as in \eqref{definition of truncated q} and $ \lambda $ as 
	\begin{equation}\label{definition of lambda}
		\lambda \sim \left(\frac{(\log m)^\frac{d}{2}}{m}\right)^\frac{1}{2}.
	\end{equation}
	Then, for any $ \tau\geq1 $, there exists an event $ E_3 $ with probability larger than $ 1-20e^{-\tau} $ such that, on $ E_1\bigcap E_2\bigcap E_3 $, for any $ \alpha>0 $ and $ m+1\leq j\leq m+n $, 
	it holds
	\begin{equation}\label{convergence rate of diffusion-based spectral algorithm}
		|\tilde{f}_{D,\lambda}(j)-f^*(x_j)|\leq C m^{-\theta_1}\cdot(\log m)^{\theta_2}+C' m^{\frac{3}{2r+1}+1}(\log m)^\frac{d}{2}\cdot\left(\frac{1}{K}+\epsilon^{\frac{1}{4}}\right),
	\end{equation}
	where
	\begin{equation}\label{definition of theta_1 and theta_2}
		\begin{aligned}
			&\theta_1 = \min\left\{\frac{1}{2}\frac{C_{low}}{C_{up}},\;\frac{\xi}{2}-\frac{\xi+\frac{1}{2}}{2r+1}\right\}-\frac{\alpha}{4},\\
			&\theta_2 = 
			\left\{
				\begin{aligned}
					\frac{d\xi}{4}-\frac{d\alpha}{8},\quad&\text{if}\quad \frac{1}{2}\frac{C_{low}}{C_{up}}\geq\frac{\xi}{2}-\frac{\xi+\frac{1}{2}}{2r+1}\\
					\frac{1}{2}+\frac{d}{4}-\frac{d\alpha}{8},\quad&\text{if}\quad \frac{1}{2}\frac{C_{low}}{C_{up}}<\frac{\xi}{2}-\frac{\xi+\frac{1}{2}}{2r+1}\\
				\end{aligned}
			\right.,
		\end{aligned}
	\end{equation} 
	and $ C,C' $ are constants independent of $ m,K,N,\epsilon,\alpha$.
\end{theorem}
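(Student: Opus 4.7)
The plan is to insert two intermediate estimators and split the pointwise error into three pieces. Define the ``oracle'' truncated spectral estimator
\begin{equation*}
f^q_{D,\lambda}(x):=\sum_{k=1}^q g_\lambda(\lambda_k)\langle g_D,\phi_k\rangle_{\mathcal{H}_t}\phi_k(x),
\end{equation*}
built from the exact empirical eigensystem $(\lambda_k,\phi_k)$ of $T_\delta$, and let $f_{D,\lambda}=g_\lambda(T_\delta)g_D$ denote the classical full spectral estimator. For any $j\in\{m+1,\dots,m+n\}$, telescope
\begin{equation*}
|\tilde f_{D,\lambda}(j)-f^*(x_j)| \le \underbrace{|\tilde f_{D,\lambda}(j)-f^q_{D,\lambda}(x_j)|}_{I_1}+\underbrace{|f^q_{D,\lambda}(x_j)-f_{D,\lambda}(x_j)|}_{I_2}+\underbrace{|f_{D,\lambda}(x_j)-f^*(x_j)|}_{I_3}.
\end{equation*}
Here $I_1$ carries the graph-Laplacian approximation error and will produce the $(1/K+\epsilon^{1/4})$-term in \eqref{convergence rate of diffusion-based spectral algorithm}, while $I_2+I_3$ is the truncated classical spectral-algorithm error and will produce the $m^{-\theta_1}(\log m)^{\theta_2}$-term.

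To bound $I_1$ I would expand each of the $q$ summands by triangle inequality across the three factors $g_\lambda(\tilde\lambda_k)$ vs.\ $g_\lambda(\lambda_k)$, $\tilde g_{D,k}$ vs.\ $\langle g_D,\phi_k\rangle_{\mathcal{H}_t}$, and $\tilde\phi_k(j)$ vs.\ $\phi_k(x_j)$. The required ingredients are the filter bound $|g_\lambda|\le\lambda^{-1}$ together with its local Lipschitz constant (of order $\lambda^{-2}$ for the examples in Subsection \ref{subsection: regularization family and spectral algorithm}), the eigenvalue perturbation $|\tilde\lambda_k-\lambda_k|\le 1/K+\epsilon^{1/4}$ from Theorem \ref{Thm: eigen-system consistency of heat kernel matrix}, and the pointwise eigenfunction perturbation $\lesssim m^{(r+2)/(2r+1)}(1/K+\epsilon^{1/4})$ from Lemma \ref{Lemma: eigenfunction estimation} (the latter also controls $|\tilde g_{D,k}-\langle g_D,\phi_k\rangle_{\mathcal{H}_t}|$ via Assumption \ref{Assumption: boundedness condition on regression function}); the reproducing property gives $|\phi_k(x_j)|\le\kappa$. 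Summing the $q\lesssim(\log m)^{d/2}$ terms and inserting the choice $\lambda^{-1}\sim m^{1/2}(\log m)^{-d/4}$ from \eqref{definition of lambda} produces the second summand of \eqref{convergence rate of diffusion-based spectral algorithm} after careful bookkeeping of the $\lambda^{-1}$, $\lambda^{-2}$, and $m^{(r+2)/(2r+1)}$ amplifications.

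For $I_2$ I would convert the tail $\|\sum_{k>q}g_\lambda(\lambda_k)\langle g_D,\phi_k\rangle\phi_k\|_{\mathcal{H}_t}$ into a pointwise bound via the embedding $\|i:\mathcal{H}_t\hookrightarrow L^\infty\|_{op}\le A$; since $\lambda_k$ concentrates around the population value $p e^{-t\mu_k}$ and $\mu_k\ge C_{low}k^{2/d}$ by Lemma \ref{Lemma: estimation of eigen-system for laplacian}, the tail beyond $q\sim(\log m)^{d/2}$ yields the first alternative $\tfrac12 C_{low}/C_{up}$ of $\theta_1$. For $I_3$ I would use the classical bias–variance split
\begin{equation*}
f_{D,\lambda}-f^* = g_\lambda(T_\delta)(g_D-T_\delta f^*)+(g_\lambda(T_\delta)T_\delta-I)f^*,
\end{equation*}
control the bias through the source condition $f^*=T_\nu^r g$ ($r>1$) together with the qualification $\xi\ge 1$ in Assumption \ref{Assumption: qualification condition on filter function} and operator-perturbation bounds for $T_\nu-T_\delta$, control the variance by a Hilbert-space Bernstein inequality combined with the effective dimension $\mathcal{N}(\lambda)=\operatorname{tr}(T_\nu(T_\nu+\lambda)^{-1})\lesssim(\log(1/\lambda))^{d/2}$, and push the resulting $\mathcal{H}_t$-bound to $L^\infty$ again by the embedding; balancing with \eqref{definition of lambda} returns the second alternative in $\theta_1$ and the exponent $\theta_2$.

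The principal obstacle, in my view, is $I_1$: the filter's Lipschitz constant blows up like $\lambda^{-2}\sim m/(\log m)^{d/2}$, the eigenfunction error carries a factor of $m^{(r+2)/(2r+1)}$, and the $q$ summands compound, so the various amplifications must be assembled on a common scale to arrive at the precise prefactor $m^{3/(2r+1)+1}$. A secondary subtlety concerns $I_3$: the source condition is stated via the population operator $T_\nu$ while the analysis of $f_{D,\lambda}$ proceeds through $T_\delta$, so the regularity of $f^*$ must be transported by operator-perturbation estimates, for which the heat-kernel bound $\|H_t\|_\infty\le\kappa^2$ and the embedding constant $A$ of $\mathcal{H}_t\hookrightarrow L^\infty$ are indispensable.
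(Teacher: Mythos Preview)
Your decomposition and overall plan are close to the paper's, but there is a genuine gap in how you handle your $I_1$, and a related misattribution of where the $\xi$-dependent rate in $\theta_1$ comes from.

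You propose to control $|g_\lambda(\tilde\lambda_k)-g_\lambda(\lambda_k)|$ via a Lipschitz bound of order $\lambda^{-2}$. But Assumption \ref{Assumption: qualification condition on filter function} only asks for qualification $\xi\ge1$; it does \emph{not} require $g_\lambda$ to be Lipschitz, or even continuous. The spectral cut-off filter $g_\lambda(t)=t^{-1}\mathbbm{1}_{\{t\ge\lambda\}}$ has qualification $\xi=\infty$ yet is discontinuous, so your argument breaks down there (the paper itself singles out the Lipschitz route as a \emph{special} shortcut for Tikhonov in Appendix \ref{appendix: convergence rate for Tikhonov regularization}). In the general case the paper handles this term (its Lemma \ref{Lemma: upper bound for I_1 in approximation error analysis}) by introducing the auxiliary operator $\tilde T_\delta=\sum_k\tilde\lambda_k\langle\cdot,\phi_k\rangle_{\mathcal H_t}\phi_k$ (same eigenfunctions as $T_\delta$, perturbed eigenvalues), writing
\begin{equation*}
g_\lambda(T_\delta)g_D^T-g_\lambda(\tilde T_\delta)g_D^T=g_\lambda(T_\delta)\bigl(g_D^T-T_\delta f_2\bigr)-h_\lambda(T_\delta)f_2,\qquad h_\lambda(t)=1-tg_\lambda(t),
\end{equation*}
and then exploiting only the qualification inequality $(t+\lambda)^\xi h_\lambda(t)\le 2^\xi\lambda^\xi$ together with $\lambda_q\gtrsim m^{-1/(2r+1)}$. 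That operator-calculus decomposition is the missing idea.

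This gap also explains why your bookkeeping of $\theta_1$ is off. The second alternative $\tfrac{\xi}{2}-\tfrac{\xi+1/2}{2r+1}$ does \emph{not} come from the classical bias--variance piece $I_3$; in the paper $\|f_{D,\lambda}-f^*\|_\alpha$ yields $m^{-1/2+\alpha/4}$ (Lemma \ref{Lemma: upper bound for I_3 in truncation error analysis}), which is subdominant. The $\xi$-dependent rate instead emerges from the filter-difference term itself via the qualification argument above: one gets $\lambda^\xi\lambda_q^{-(1+2\xi)/2}\lesssim m^{-\xi/2+(\xi+1/2)/(2r+1)}(\log m)^{d\xi/4}$. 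So the term you expected to contribute only a $(1/K+\epsilon^{1/4})$ factor in fact also contributes a non-vanishing $m$-power to the first summand of \eqref{convergence rate of diffusion-based spectral algorithm}. Your outline for $I_2$ and $I_3$ is otherwise broadly in line with the paper's Section \ref{subsection: truncation error analysis}.
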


To elucidate the complex convergence result presented, let us examine two specific cases of the regularization family $ \{g_\lambda\} $. For the first example, we select $ \{g_\lambda\} $ with a qualification of $ \xi=\infty $, a condition met by algorithms such as gradient flow, kernel principal component regularization, and Landweber iteration. In this scenario, according to \eqref{definition of theta_1 and theta_2}, we find that $ \theta_1=\frac{C_{low}}{2C_{up}}-\frac{\alpha}{4} $, where $ C_{low}$ and $C_{up} $ are absolute constants related to the eigenvalues of the Laplacian from \eqref{population bounds for eigenvalue of laplacian}.
From the proof detailed in section \ref{section: proof of convergence result}, these constants primarily feature in estimating the $ q $-th eigenvalue, $ \mu_q $, with $ q\to\infty $ as defined in \eqref{definition of truncated q}, thus they can be considered as functions of $ q $ that guide the upper and lower bounds for $ \mu_q $. Classical Weyl's law suggests that $ \mu_q\sim q^\frac{2}{d} $, leading to $ \frac{C_{low}}{2C_{up}}\to\frac{1}{2} $ as $ m\to\infty $. Consequently, the upper bound \eqref{convergence rate of diffusion-based spectral algorithm} becomes{
\begin{equation}\label{convergence rate of diffusion-based spectral algorithm, special case}
	\sup_{1 \leq j \leq N}|\tilde{f}_{D,\lambda}(j)-f^*(x_j)|=\tilde{O}\left( m^{-\frac{1}{2}+\alpha}+m^{\frac{3}{2r+1}+1}\left(\frac{1}{K}+\left(\frac{\log N}{N}\right)^\frac{1}{2d+8}\right)\right),
\end{equation}}
with $ \alpha>0 $ being arbitrarily small for sufficiently large $ m\ll K<N $, focusing only on the main terms.
As a second example, consider $ g_\lambda(t)=\frac{1}{t+\lambda} $, corresponding to Tikhonov regularization. This choice yields $ \xi=1 $, where our upper bounds \eqref{convergence rate of diffusion-based spectral algorithm} might perform poorly if the regression function $ f^* $ lacks sufficient smoothness. However, the Lipschitz continuity of this filter function—a characteristic not common to most filters—allows for a modified proof approach in our Theorem \ref{Thm: convergence analysis}, achieving a comparable convergence rate as in \eqref{convergence rate of diffusion-based spectral algorithm, special case}. The detailed proof for Tikhonov regularization's convergence rate is deferred to the Appendix \ref{appendix: convergence rate for Tikhonov regularization}.

\section{Numerical Experiments}\label{section: numerical experiments}

In this section, numerical experiments are conducted to showcase the efficacy of the diffusion-based spectral algorithm on a synthetically generated manifold dataset. This dataset is uniformly sampled from the unit sphere $ S^2 $ in $ \mathbb{R}^3 $, comprising $ 1600 $ data points. These points are divided into a labeled dataset $ D^l $ with $ 140 $ data points and an unlabeled dataset $ D^{ul} $ with $ 1460 $ data points. We first consider an intrinsic unit variable regression function, defined as $ f^*(\theta)=20\sin(\theta)+24\cos(\theta) $.
For the labeled dataset $ D^l $, the response variable $ y $ is generated according to
\begin{equation*}
	y = f^*(x)+\epsilon,
\end{equation*}
where $ \epsilon\sim\mathcal{N}(0,1) $ denotes Gaussian white noise with unit variance. The experiment evaluates three different filters corresponding to kernel ridge regression, kernel principal component regularization, and gradient flow. Hyperparameters are set by $ \epsilon=10^{-1.4}, \lambda=10^{-4}, K=200, q=80$, and $ t=0.4 $ for each filter. The outcomes of these simulations are illustrated in Figure \ref{Fig1}.
Additionally, a more complex regression function, $ f^*(\theta,\phi)=20\sin(\theta)\phi $, is examined. This function represents a product of two intrinsic variables, posing a non-trivial regression scenario. Utilizing the same algorithms and parameter settings as before, the results of these simulations are depicted in Figure \ref{Fig2}.

\begin{figure}[htbp]
	\centering  %图片全局居中
	\subfigure[kernel ridge regression]{
	\label{Fig1.krr}
	\includegraphics[width=0.45\textwidth]{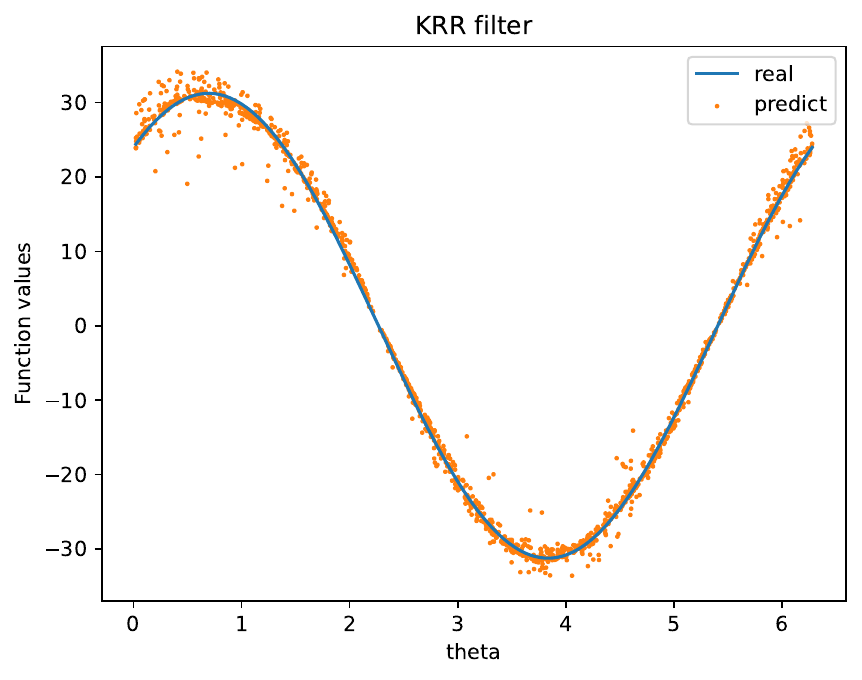}}
	\subfigure[kernel principal component regularization]{
	\label{Fig1.kpcr}
	\includegraphics[width=0.45\textwidth]{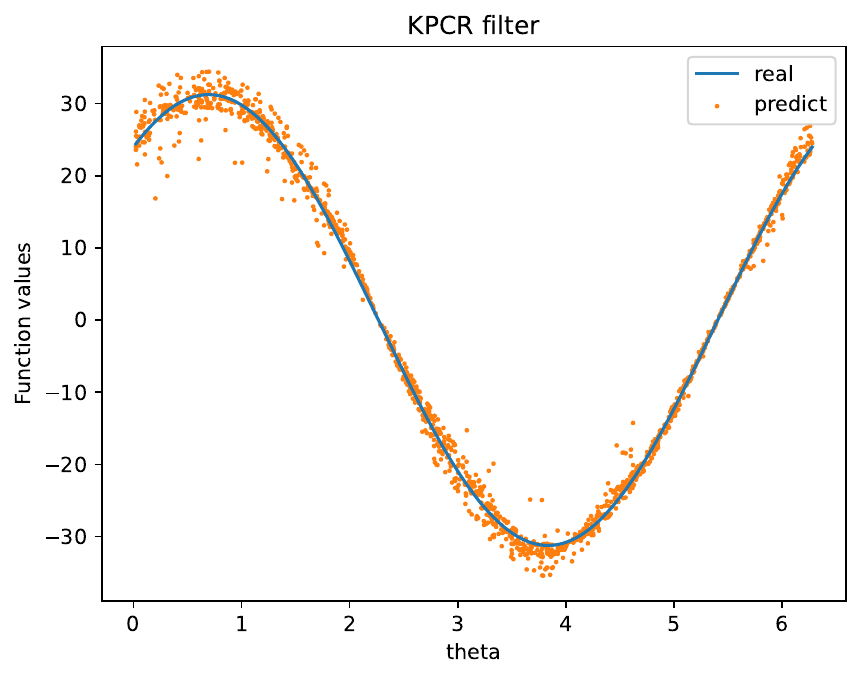}}
	\subfigure[gradient flow]{
	\label{Fig1.gf}
	\includegraphics[width=0.45\textwidth]{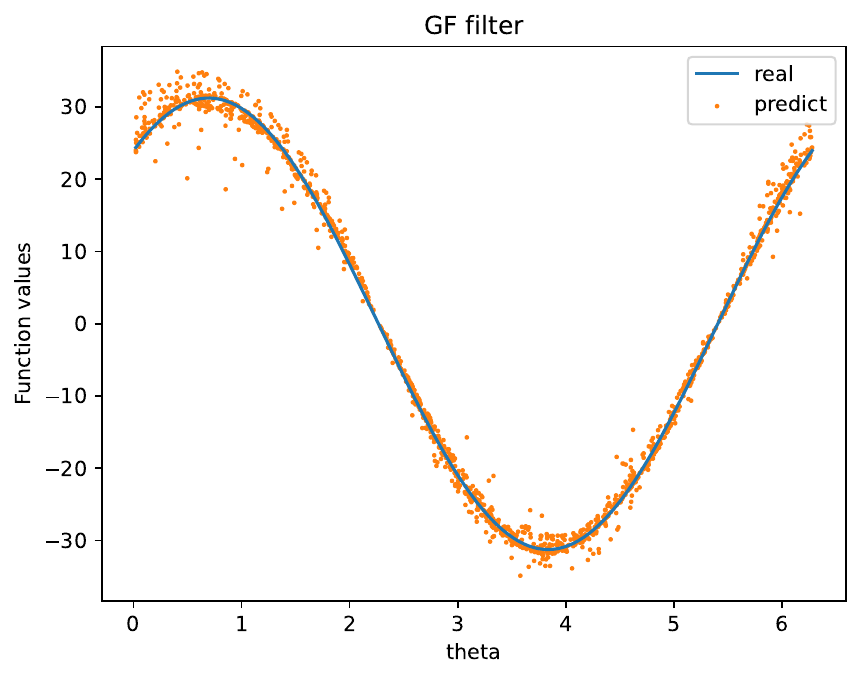}}
	\caption{Results for regression function $ f^*(\theta)=20\sin(\theta)+24\cos(\theta) $ on $ S^2 $ over a labeled dataset with $ 140 $ data points and an unlabeled dataset with $ 1460 $ data points. Figure \ref{Fig1.krr} illustrates the outcomes of kernel ridge regression, while Figure \ref{Fig1.kpcr} displays the results for kernel principal component regularization, and Figure \ref{Fig1.gf} showcases the application of gradient flow. In each depiction, the actual values of $ f^* $  are represented by a continuous blue curve, contrasting with the orange scatter points that signify the algorithm's predictions.}
	\label{Fig1}
\end{figure}

\begin{figure}[htbp]
	\centering  %图片全局居中
	\subfigure[real value of $ f^* $ ]{
	\label{Fig2.real}
	\includegraphics[width=0.45\textwidth]{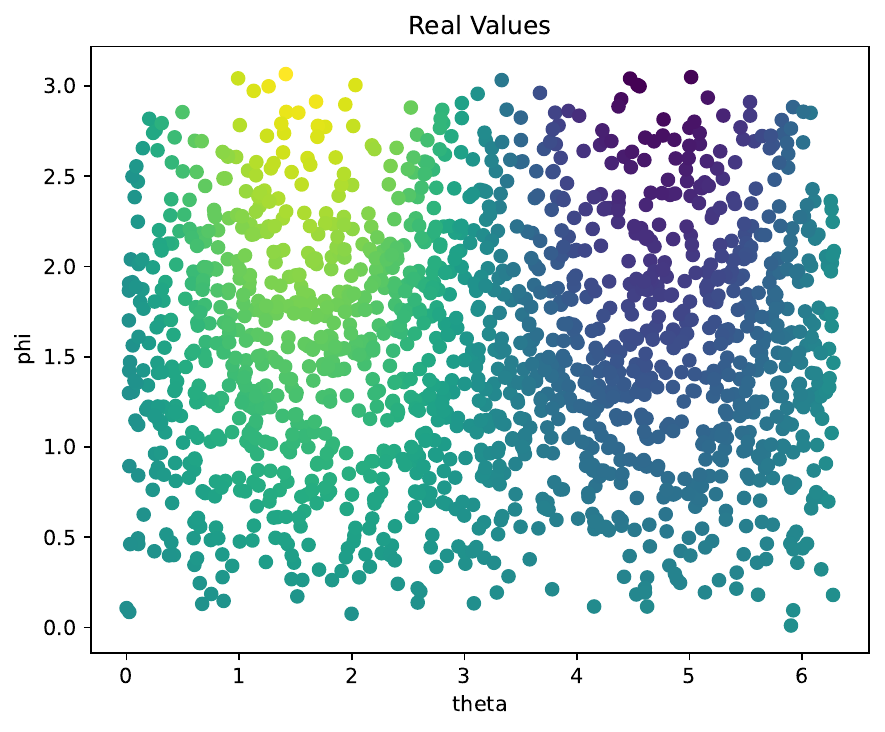}}
	\subfigure[kernel ridge regression]{
	\label{Fig2.krr}
	\includegraphics[width=0.45\textwidth]{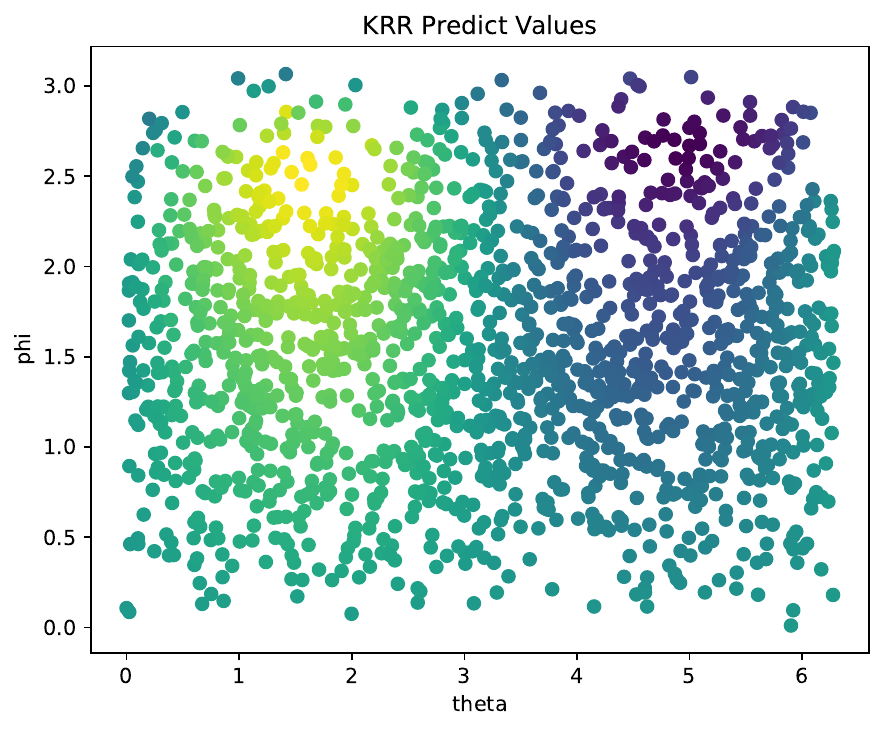}}
	\subfigure[kernel principal component regularization]{
	\label{Fig2.kpcr}
	\includegraphics[width=0.45\textwidth]{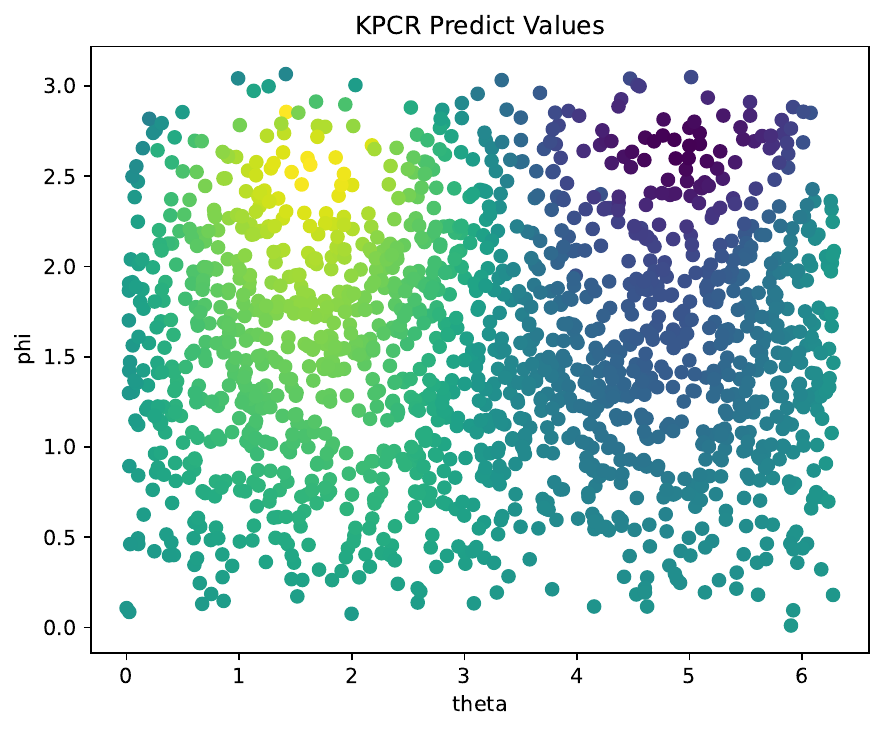}}
	\subfigure[gradient flow]{
	\label{Fig2.gf}
	\includegraphics[width=0.45\textwidth]{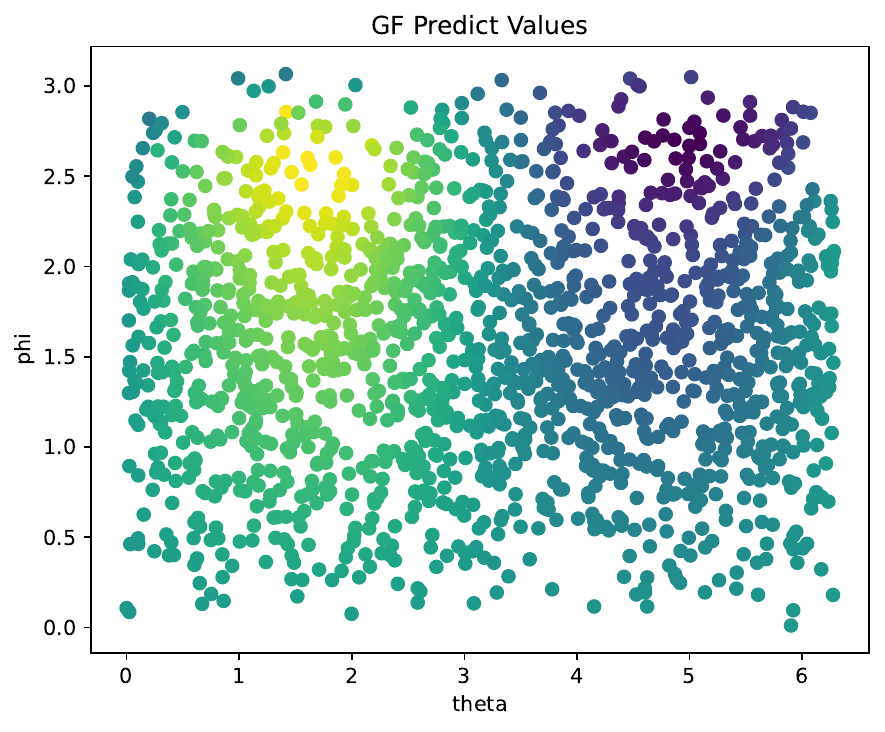}}
	\caption{Results for regression function $ f^*(\theta,\phi)=20\sin(\theta)\phi $ on $ S^2 $ over the same dataset with $ 140 $ labeled points and $ 1460 $ unlabeled points. Figure \ref{Fig2.real} visualizes the true distribution of $ f^* $, serving as a benchmark for comparison. Figures \ref{Fig2.krr}, \ref{Fig2.kpcr}, and \ref{Fig2.gf} respectively illustrate the predictions made by kernel ridge regression, kernel principal component regularization, and gradient flow. In these figures, the color assigned to each sample point denotes its response value.}
	\label{Fig2}
\end{figure}

The outcomes of our numerical experiments demonstrate that our diffusion-based spectral algorithm can accurately estimate the regression target function $ f^* $  across the entire dataset, comprising both the labeled dataset $ D^l $ and the unlabeled dataset $ D^{ul} $, even in scenarios where the labeled dataset is significantly smaller than the unlabeled one, aligning with the predictions made by our theoretical analysis. These findings underscore the algorithm's efficacy as a semi-supervised learning tool, showing its resilience across various regression functions.
Furthermore, when comparing the performances of kernel ridge regression (KRR) filters, which possess a qualification of $ \xi=1 $, against kernel principal component regularization (KPCR) or gradient flow (GF) filters, which boast a qualification of $ \xi=\infty $, we observe negligible differences in their effectiveness. This parity in performance echoes the insights shared in section \ref{subsection: convergence analysis}, suggesting that our algorithm maintains its robustness across a diverse range of filters. Those results confirm the practical applicability of diffusion-based spectral algorithms in addressing different types of regression challenges in semi-supervised learning contexts.

\section{Convergence Analysis}\label{section: proof of convergence result}

To establish the convergence behaviors for our estimator $ \tilde{f}_{D,\lambda} $, which integrates a truncation of the original spectral algorithm estimator $ f_{D,\lambda} $ with an approximation of the eigen-system of the empirical integral operator $ T_\delta $, we will approach the proof in a segmented manner. First, we will explore the impact of truncating the spectral algorithm estimator in Subsection \ref{subsection: truncation error analysis}, followed by examining the approximation error in Subsection \ref{subsection: approximation error analysis}. Finally, we will synthesize these analyses to establish the desired convergence result in subsection \ref{subsection: proof of convergence analysis thm}.

\subsection{Truncation Error Analysis}\label{subsection: truncation error analysis}

The truncated spectral algorithm estimator $ f_{D,\lambda}^T $ is defined as
\begin{equation}\label{definition of truncated spectral algorithm estimator}
	\begin{aligned}		
		f_{D,\lambda}^T
		&=\sum_{k=1}^q g_\lambda(\lambda_k)\langle g_D,\phi_k\rangle_{\mathcal{H}_t}\cdot\phi_k\\
		&=g_\lambda(T_\delta)g_D^T,
	\end{aligned}	
\end{equation}
where
\begin{equation}\label{defintion of g_D^T}
	g_D^T=\sum_{k=1}^q \langle g_D,\phi_k\rangle_{\mathcal{H}_t}\phi_k
\end{equation}
represents the truncation of $ g_D $. Given any $ \alpha>0 $, we focus on the $ \alpha $-power norm error $ \|f_{D,\lambda}^T-f^*\|_\alpha $. Recall that
\begin{equation}\nonumber
	f_{D,\lambda} = g_\lambda(T_\delta)g_D.
\end{equation}
We introduce
\begin{equation}\label{definition of hat f_D,lambda}
	\hat{f}_{D,\lambda}=g_\lambda(T_\delta)T_\delta f^*.
\end{equation}
This allows us to break down the original truncation error into three distinct components:
\begin{equation}\nonumber
	\begin{aligned}
		\|f_{D,\lambda}^T-f^*\|_\alpha
		&\leq\|f_{D,\lambda}^T-\hat{f}_{D,\lambda}\|_\alpha+\|\hat{f}_{D,\lambda}-f_{D,\lambda}\|_\alpha+\|f_{D,\lambda}-f^*\|_\alpha\\
		&\triangleq I_1+I_2+I_3.
	\end{aligned}
\end{equation}
Next, we will address these three terms individually.

\begin{lemma}\label{Lemma: upper bound for I_1 in truncation error analysis}
	Suppose that $ 0<\alpha\leq1 $, $ \tau\geq1 $. We choose $ q,\lambda $ as in \eqref{definition of truncated q} and \eqref{definition of lambda} seperately and $ m $ sufficiently large. Then, with probability larger than $ 1-10e^{-\tau} $, it holds
	\begin{equation}\nonumber
			I_1\leq K_1m^{-\frac{C_{low}}{C_{up}}\cdot\frac{1}{2}+\frac{\alpha}{4}}(\log m)^{\frac{1}{2}+\frac{d}{4}-\frac{\alpha d}{8}}
	\end{equation}
	with $K_1 $ a constant independent of $ m,\alpha $.
\end{lemma}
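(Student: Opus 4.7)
The plan is to decompose $I_1$ into a projected-noise piece and a truncation-tail piece, and attack each with a different tool. Introducing the spectral projection $P_q$ onto $\mathrm{span}\{\phi_k\}_{k=1}^q$ and using that $g_\lambda(T_\delta)$ commutes with $P_q$ so that $f_{D,\lambda}^T = P_q g_\lambda(T_\delta) g_D$, we write
\begin{equation*}
 f_{D,\lambda}^T - \hat{f}_{D,\lambda} = P_q\, g_\lambda(T_\delta)(g_D - T_\delta f^*) \;-\; (I-P_q)\, g_\lambda(T_\delta) T_\delta f^*,
\end{equation*}
and work in the $\alpha$-power norm via the identity $\|f\|_\alpha = \|T_\nu^{(1-\alpha)/2} f\|_{\mathcal{H}_t}$ that follows directly from \eqref{definition of alpha-power norm}. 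Throughout, I operate on the high-probability event on which the empirical covariance $T_\delta$ concentrates around $T_\nu$, so that operator inequalities (L\"owner--Heinz / Cordes-type) allow the swap $T_\nu^\theta \leftrightarrow (T_\delta+\lambda)^\theta$ for $\theta\in[0,1]$ up to constants, with $\lambda$ chosen as in \eqref{definition of lambda}.

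The truncation-tail term $(I-P_q) g_\lambda(T_\delta) T_\delta f^*$ is where the stated decay factor $m^{-C_{low}/(2C_{up})+\alpha/4}$ originates. Substituting the source condition $f^*=T_\nu^r g$ with $g\in\mathcal{H}_t$, using the filter bound $|\lambda_k g_\lambda(\lambda_k)|\leq 1$ for $k>q$, and converting norms via $T_\nu^{(1-\alpha)/2}$, the tail reduces (modulo the basis swap) to controlling $\lambda_{q+1}^{r+(1-\alpha)/2}\|g\|_{\mathcal{H}_t}$. Using the spectral decomposition \eqref{spectral decomposition of T_nu} of $T_\nu$ together with eigenvalue consistency of $T_\delta$ gives $\lambda_{q+1}\lesssim p\, e^{-t\mu_{q+1}}$, and Lemma \ref{Lemma: estimation of eigen-system for laplacian} yields $\mu_{q+1}\geq C_{low}(q+1)^{2/d}$. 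Inserting $q=(\log m/(C_{up}t(2r+1)))^{d/2}$ from \eqref{definition of truncated q}, a direct calculation produces the exponential-in-$\log m$ decay of the claimed order, with the $(\log m)^{1/2+d/4-\alpha d/8}$ factor absorbing the $\mu_{q+1}^{(1-\alpha)/2}$ contribution and the polynomial-in-$\log m$ slack introduced by the concentration and norm-conversion steps.

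The projected-noise term $P_q g_\lambda(T_\delta)(g_D - T_\delta f^*)$ is handled by a Bernstein inequality in $\mathcal{H}_t$ applied to the zero-mean, independent summands $(y_i-f^*(x_i))H_t(x_i,\cdot)$, each bounded in $\mathcal{H}_t$-norm by $M\kappa$ via Assumption \ref{Assumption: boundedness condition on regression function} and \eqref{boundness condition on Heat Kernel}. With probability $1-2e^{-\tau}$ this produces an estimate of the form $\|(T_\nu+\lambda)^{-1/2}(g_D - T_\delta f^*)\|_{\mathcal{H}_t}\lesssim \tau/\sqrt{m}$, and combining it with the filter property and the basis swap together with $\lambda\sim((\log m)^{d/2}/m)^{1/2}$ yields an $O(m^{-1/2})$ contribution up to logarithms. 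Under the stated hyperparameter choices this is dominated by the truncation-tail term and is thus absorbed into the final bound.

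The hard part will be the basis-switch accounting: $P_q$ and $g_\lambda(T_\delta)$ live in the random $T_\delta$-eigensystem, while the source condition and the $\alpha$-power norm are phrased in the population $T_\nu$-eigensystem. Passing operators past one another—via resolvent identities, operator-monotone inequalities, and sharp concentration of $\|T_\nu^{1/2}(T_\delta+\lambda)^{-1/2}\|_{\mathrm{op}}$—is what forces the coupling of $r$, $\alpha$, $C_{low}$, $C_{up}$ in the exponent and accumulates the several exceptional events responsible for the $1-10e^{-\tau}$ probability budget.
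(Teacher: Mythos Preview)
Your decomposition
\[
 f_{D,\lambda}^T - \hat{f}_{D,\lambda} = P_q\, g_\lambda(T_\delta)(g_D - T_\delta f^*) \;-\; (I-P_q)\, g_\lambda(T_\delta) T_\delta f^*
\]
is correct and, after the operator factorisations, lands on essentially the same two pieces the paper calls $J_1$ and $J_2$. The projected-noise piece is handled in the paper much as you outline (factor through $\lambda^{-\alpha/2}$, swap $(T_\nu+\lambda)^{1/2}(T_\delta+\lambda)^{-1/2}$, use a per-coefficient concentration bound and sum over $q$ terms); your Bernstein sketch is fine in spirit, though the bound on $\|(T_\nu+\lambda)^{-1/2}(g_D-T_\delta f^*)\|_{\mathcal{H}_t}$ should carry the effective dimension $N_\nu(\lambda)\sim(\log\lambda^{-1})^{d/2}$ rather than a bare $\tau/\sqrt{m}$, and this is exactly what produces the $(\log m)^{1/2+d/4}$ factor.

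The genuine gap is in the tail term. You write that it ``reduces (modulo the basis swap) to controlling $\lambda_{q+1}^{r+(1-\alpha)/2}\|g\|_{\mathcal{H}_t}$'' and propose to justify the swap via Cordes/L\"owner--Heinz inequalities. But those inequalities only transfer powers $\theta\in[0,1]$; here $r>1$, so you cannot pass $T_\nu^{\,r}$ through $(I-P_q)$ or through $g_\lambda(T_\delta)T_\delta$ by operator monotonicity. Concretely, what you need is a bound on
\[
\sum_{k>q}(\lambda_k+\lambda)^{-1}\lambda_k^2\,|\langle f^*,\phi_k\rangle_{\mathcal{H}_t}|^2
\]
in terms of the \emph{population} eigenvalue $\gamma_q=pe^{-t\mu_q}$, and this is not delivered by any resolvent identity or Cordes-type swap you have listed. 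The paper does not attempt an operator-inequality argument here at all: it imports a ready-made estimate from \cite{guo2017thresholded} of the form
\[
J_2 \;\leq\; C_J\!\left\{\min\Bigl\{\tfrac{\Lambda_{m,q}}{\sqrt{\lambda}},\Lambda_{m,q}^{1/2}\Bigr\}\gamma_q^{\,r} \;+\; \tfrac{\gamma_q}{(2\lambda+\gamma_q)^{1/2}}\,m^{-1/2} \;+\; 2^r\Lambda_{m,q}^{1/2}\Bigl[\Bigl(\textstyle\sum_{k>q}\gamma_k^{2r}\Bigr)^{1/2}+8m^{-1/2}\Bigr]\right\},
\]
which encodes precisely the empirical-to-population basis transfer under the source condition, and then substitutes the choices of $q$ and $\lambda$ to extract $m^{-C_{low}/(2C_{up})}(\log m)^{d/4}$. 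Your plan is structurally right, but the mechanism you name for the hard step would not close; you need either that cited lemma or an equivalent perturbation argument tailored to $r>1$.
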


\begin{proof}
	Drawing from \cite{fischer2020sobolev}, we have that for any $ 0<\alpha\leq 1 $ and for any function $ f\in\mathcal{H}_t $,
	\begin{equation}\nonumber
		\|f\|_\alpha = \|T_\nu^\frac{1-\alpha}{2}f\|_{\mathcal{H}_t}.
	\end{equation}
	This leads to 
	\begin{equation}\nonumber
		\begin{aligned}
			\|f_{D,\lambda}^T-\hat{f}_{D,\lambda}\|_\alpha
			&=\|T_\nu^\frac{1-\alpha}{2}(f_{D,\lambda}^T-\hat{f}_{D,\lambda})\|_{\mathcal{H}_t}\\
			&\leq \|T_\nu^\frac{1-\alpha}{2}(T_\nu+\lambda I)^{-\frac{1}{2}}\|_{\mathscr{B}(\mathcal{H}_t)}\cdot\|(T_\nu+\lambda I)^{\frac{1}{2}}(f_{D,\lambda}^T-\hat{f}_{D,\lambda})\|_{\mathcal{H}_t}.\\
		\end{aligned}
	\end{equation}
	We provide an explanation for the symbols adopted here. Let $(\mathcal{H}, \langle\cdot,\cdot\rangle_{\mathcal{H}})$ and $(\mathcal{H}', \langle\cdot,\cdot\rangle_{\mathcal{H}'})$ be two Hilbert spaces. Therefore, the collection of bounded linear operators from $\mathcal{H}$ to $\mathcal{H}'$ forms a Banach space, symbolized as $\mathscr{B}(\mathcal{H}, \mathcal{H}')$ with operator norm $\|A\|_{\mathscr{B}(\mathcal{H}, \mathcal{H}')} = \sup_{\|f\|_{\mathcal{H}}=1} \|Af\|_{\mathcal{H}'}$. In the special case $\mathcal{H} = \mathcal{H}'$, we denote it as $\mathscr{B}(\mathcal{H})$ with the corresponding operator norm given by $\|A\|_{\mathscr{B}(\mathcal{H})}$.
	Furthermore,
	\begin{equation}\label{upper bound of alpha-to-H_t norm, proof I_1 upper bound lemma}
		\|T_\nu^\frac{1-\alpha}{2}(T_\nu+\lambda I)^{-\frac{1}{2}}\|_{\mathscr{B}(\mathcal{H}_t)}=\sup\limits_{k\geq1}\left(\frac{p^{1-\alpha}e^{-t\mu_k(1-\alpha)}}{pe^{-t\mu_k}+\lambda}
		\right)^\frac{1}{2}\leq\lambda^{-\frac{\alpha}{2}}.
	\end{equation}
	The last inequality in this sequence is derived from a straightforward calculation involving the function $t\mapsto t^{1-\alpha}/(\lambda+t)$.
	For the remaining term, we further decompose it as
	\begin{equation}\label{third term in lemma3}
	  \begin{split}
	   \|(T_\nu+\lambda I)^{\frac{1}{2}}(f_{D,\lambda}^T-\hat{f}_{D,\lambda})\|_{\mathcal{H}_t}
		&=\|(T_\nu+\lambda I)^{\frac{1}{2}}g_\lambda(T_\delta)(g_D^T-T_\delta f^*)\|_{\mathcal{H}_t}\\
		&\leq\|(T_\nu+\lambda I)^{\frac{1}{2}}(T_\delta+\lambda I)^{-\frac{1}{2}}\|_{\mathscr{B}(\mathcal{H}_t)}\\
		&\quad\cdot\|(T_\delta+\lambda I)^{\frac{1}{2}}g_\lambda(T_\delta)(T_\delta+\lambda I)^{\frac{1}{2}}\|_{\mathscr{B}(\mathcal{H}_t)}\\
		&\quad\cdot\|(T_\delta+\lambda I)^{-\frac{1}{2}}(g_D^T-T_\delta f^*)\|_{\mathcal{H}_t}.
	  \end{split}
	\end{equation}

	Also drawing from \cite{fischer2020sobolev}, it is established that, with probability larger than $ 1-2e^{-\tau} $, for sufficiently large $ m $,
	\begin{equation}\label{upper bound of empirical norm between T_nu and T_delta, proof I_1 upper bound lemma}
		\|(T_\nu+\lambda I)^{\frac{1}{2}}(T_\delta+\lambda I)^{-\frac{1}{2}}\|_{\mathscr{B}(\mathcal{H}_t)}\leq\sqrt{3}.
	\end{equation}
	Additionally, by leveraging the characteristics of the filter function $ g_\lambda $, as described in \eqref{property of regularization family}, we have
	\begin{equation}\label{upper bound of g_lambda(T_delta), proof I_1 upper bound lemma}
		\|(T_\delta+\lambda I)^{\frac{1}{2}}g_\lambda(T_\delta)(T_\delta+\lambda I)^{\frac{1}{2}}\|_{\mathscr{B}(\mathcal{H}_t)}=\|(T_\delta+\lambda I)g_\lambda(T_\delta)\|_{\mathscr{B}(\mathcal{H}_t)}\leq2.
	\end{equation}
	As for the last term \eqref{third term in lemma3}, we exploit the following decomposition 
	\begin{equation}\nonumber
		\begin{aligned}
			\|(T_\delta+\lambda I)^{-\frac{1}{2}}(g_D^T-T_\delta f^*)\|_{\mathcal{H}_t}
			&=\left\|\sum_{k=1}^q (\lambda_k+\lambda)^{-\frac{1}{2}}\left(\langle g_D,\phi_k\rangle_{\mathcal{H}_t}-\langle T_\delta f^*,\phi_k\rangle_{\mathcal{H}_t}\right)\phi_k\right\|_{\mathcal{H}_t}\\
			&\quad+\left\|\sum_{k=q+1}^\infty (\lambda_k+\lambda)^{-\frac{1}{2}}\langle T_\delta f^*,\phi_k\rangle_{\mathcal{H}_t}\phi_k\right\|_{\mathcal{H}_t}\\
			&=\left(
				\sum_{k=1}^q (\lambda_k+\lambda)^{-1}\left(\langle g_D,\phi_k\rangle_{\mathcal{H}_t}-\langle T_\delta f^*,\phi_k\rangle_{\mathcal{H}_t}\right)^2
			\right)^\frac{1}{2}\\
			&\quad+\left(\sum_{k=q+1}^\infty (\lambda_k+\lambda)^{-1}\lambda_k^2\langle f^*,\phi_k\rangle_{\mathcal{H}_t}^2
			\right)^\frac{1}{2}\\
			&\triangleq J_1+J_2,
		\end{aligned}
	\end{equation}
	where the second equation follows from $ T_\delta f^*=\sum\lambda_l\langle f^*,\phi_l\rangle\phi_l $. We consider $ J_1 $ first. Notice that
	\begin{equation}\nonumber
		\begin{aligned}
			\left|\langle g_D,\phi_k\rangle_{\mathcal{H}_t}-\langle T_\delta f^*,\phi_k\rangle_{\mathcal{H}_t}\right|
			&=\left|\langle g_D,\phi_k\rangle_{\mathcal{H}_t}-\lambda_k\langle f^*,\phi_k\rangle_{\mathcal{H}_t}\right|\\
			&=\sqrt{\lambda_k}\sqrt{\lambda_k}\left|\left\langle \frac{1}{\lambda_k}g_D,\phi_k\right\rangle_{\mathcal{H}_t}-\langle f^*,\phi_k\rangle_{\mathcal{H}_t}\right|.
		\end{aligned}
	\end{equation} 
	From \cite{guo2017thresholded} we have, with probability larger than $ 1-2e^{-\tau} $, for any $ k\in\mathbb{N} $,
	\begin{equation}\nonumber
		\sqrt{\lambda_k}\left|\left\langle \frac{1}{\lambda_k}g_D,\phi_k\right\rangle_{\mathcal{H}_t}-\langle f^*,\phi_k\rangle_{\mathcal{H}_t}\right|\leq3M\sqrt{\frac{\log m}{m}}.
	\end{equation}
	Therefore, a combination with \eqref{definition of truncated q} yields
	\begin{equation}\label{upper bound for J_1, proof I_1 upper bound lemma}
	  \begin{split}
		J_1
		&\leq\left(\sum_{k=1}^q(\lambda_k+\lambda)^{-1}\lambda_k\cdot9M^2\frac{\log m}{m}\right)^\frac{1}{2}\\
		&\leq\sqrt{q}\cdot3M\sqrt{\frac{\log m}{m}}\\
		&\lesssim m^{-\frac{1}{2}}(\log m)^{\frac{1}{2}+\frac{d}{4}}.
	\end{split}
\end{equation}
	Here, $ A\lesssim B $ implies that $A$ is less than or equal to $B$ up to a multiplication of constant.
	We now turn to $ J_2 $. According to \cite{guo2017thresholded} again, we have, with probability larger than $ 1-6e^{-\tau} $,
	\begin{equation}\nonumber
		J_2\leq C_J
		\left\{\min\left\{\frac{\Lambda_{m,q}}{\sqrt{\lambda}},\Lambda_{m,q}^\frac{1}{2}\right\}\gamma_q^r+\frac{\gamma_q}{(2\lambda+\gamma_q)^\frac{1}{2}}\frac{1}{\sqrt{m}}+2^r\Lambda_{m,q}^\frac{1}{2}\left[\left(\sum_{k=q+1}^\infty\gamma_k^{2r}\right)^\frac{1}{2}+\frac{8}{\sqrt{m}}\right]\right\}.
	\end{equation}
	Here, $ \Lambda_{m,q}=\max\left\{\frac{\gamma_q}{\kappa^2},\frac{1}{\sqrt{m}}\right\} $, $ \gamma_k=pe^{-t\mu_k} $ is an abbreviation, and $ C_J $ is a constant independent of $ m $. In the following discussions, we will estimate above three terms separately. Recall that 
	\begin{equation}\nonumber
		q = \left(\frac{\log m}{C_{up}t(2r+1)}\right)^\frac{d}{2},\;
		\lambda \sim \left(\frac{(\log m)^\frac{d}{2}}{m}\right)^\frac{1}{2}.
	\end{equation}
	Hence we have
	\begin{equation}\nonumber
		\Lambda_{m,q}\leq\max\left\{\frac{1}{\sqrt{m}},\frac{p}{\kappa^2}\left(\frac{1}{m}\right)^{\frac{C_{low}}{C_{up}}\frac{1}{2r+1}}\right\}\lesssim m^{-\frac{C_{low}}{C_{up}}\frac{1}{2r+1}},
	\end{equation}
	and
	\begin{equation}\nonumber
		\frac{\Lambda_{m,q}}{\sqrt{\lambda}}\lesssim (\log m)^{-\frac{d}{8}}\cdot m^{-\frac{C_{low}}{C_{up}}\frac{1}{2r+1}+\frac{1}{4}}.
	\end{equation}
	Since $ \frac{C_{low}}{C_{up}}\frac{1}{2r+1}<\frac{1}{2} $, we derive that
	\begin{equation}\nonumber
		\min\left\{\frac{\Lambda_{m,q}}{\sqrt{\lambda}},\Lambda_{m,q}^\frac{1}{2}\right\}\lesssim m^{-\frac{C_{low}}{C_{up}}\frac{1}{2r+1}\cdot\frac{1}{2}}.
	\end{equation}
	Therefore,
	\begin{equation}\label{upper bound for first term in J_2, proof I_1 upper bound lemma}
		\min\left\{\frac{\Lambda_{m,q}}{\sqrt{\lambda}},\Lambda_{m,q}^\frac{1}{2}\right\}\gamma_q^r
		\lesssim m^{-\frac{C_{low}}{C_{up}}\frac{1}{2r+1}\cdot\frac{1}{2}}\cdot m^{-\frac{C_{low}}{C_{up}}\frac{r}{2r+1}}\lesssim m^{-\frac{C_{low}}{C_{up}}\cdot\frac{1}{2}}.
	\end{equation}
	Further, from $ r>1 $ we have
	\begin{equation}\nonumber
		2\lambda+\gamma_q\gtrsim\gamma_q.
	\end{equation} 
	Hence
	\begin{equation}\label{upper bound for second term in J_2, proof I_1 upper bound lemma}
		\frac{\gamma_q}{(2\lambda+\gamma_q)^\frac{1}{2}}\frac{1}{\sqrt{m}}\lesssim \gamma_q^\frac{1}{2}\frac{1}{\sqrt{m}}\lesssim m^{-\frac{C_{low}}{C_{up}}\frac{1}{2r+1}\cdot\frac{1}{2}-\frac{1}{2}}.
	\end{equation}
	For the last term, similar to the calculation \eqref{caculation of Lambda_>l, proof of heat kernel matrix consistency Thm} for $ \Lambda_{>l} $ in the Appendix \ref{appendix: proof of eigen-system consistency of heat kernel matrix}, we have 
	\begin{equation}\nonumber
		\sum_{k=q+1}^\infty\gamma_k^{2r}=\sum_{k=q+1}^\infty (pe^{-\mu_kt})^{2r}\lesssim q (pe^{-\mu_qt})^{2r}\lesssim m^{-\frac{C_{low}}{C_{up}}\frac{2r}{2r+1}}(\log m)^\frac{d}{2},
	\end{equation}
	which results in 
	\begin{equation}\nonumber
		\left(\sum_{k=q+1}^\infty\gamma_k^{2r}\right)^\frac{1}{2}+\frac{8}{\sqrt{m}}\lesssim m^{-\frac{C_{low}}{C_{up}}\frac{r}{2r+1}}(\log m)^\frac{d}{4}+m^{-\frac{1}{2}}\lesssim m^{-\frac{C_{low}}{C_{up}}\frac{r}{2r+1}}(\log m)^\frac{d}{4}.
	\end{equation}
	Therefore,
	\begin{equation}\label{upper bound for third term in J_2, proof I_1 upper bound lemma}
		\begin{split}
			2^r\Lambda_{m,q}^\frac{1}{2}\left[\left(\sum_{k=q+1}^\infty\gamma_k^{2r}\right)^\frac{1}{2}+\frac{8}{\sqrt{m}}\right]
			&\lesssim m^{-\frac{C_{low}}{C_{up}}\frac{\frac{1}{2}}{2r+1}}\cdot m^{-\frac{C_{low}}{C_{up}}\frac{r}{2r+1}}(\log m)^\frac{d}{4}\\
			&\lesssim m^{-\frac{C_{low}}{C_{up}}\cdot\frac{1}{2}}(\log m)^\frac{d}{4}.
		\end{split}
	\end{equation}
	Therefore, a combination of \eqref{upper bound for first term in J_2, proof I_1 upper bound lemma}, \eqref{upper bound for second term in J_2, proof I_1 upper bound lemma}, and \eqref{upper bound for third term in J_2, proof I_1 upper bound lemma} yields
	\begin{equation}\label{upper bound for J_2, proof I_1 upper bound lemma}
		J_2\lesssim m^{-\frac{C_{low}}{C_{up}}\cdot\frac{1}{2}}(\log m)^\frac{d}{4}.
	\end{equation}
	Finally, combining \eqref{upper bound for J_1, proof I_1 upper bound lemma}, \eqref{upper bound for J_2, proof I_1 upper bound lemma} with \eqref{upper bound of alpha-to-H_t norm, proof I_1 upper bound lemma}, \eqref{upper bound of empirical norm between T_nu and T_delta, proof I_1 upper bound lemma}, and \eqref{upper bound of g_lambda(T_delta), proof I_1 upper bound lemma} we derive 
	\begin{equation}\nonumber
		\begin{aligned}
			\|f_{D,\lambda}^T-\hat{f}_{D,\lambda}\|_\alpha\
			&\lesssim \left(\frac{(\log m)^\frac{d}{2}}{m}\right)^{\frac{1}{2}\cdot{-\frac{\alpha}{2}}}\cdot\left(m^{-\frac{1}{2}}(\log m)^{\frac{1}{2}+\frac{d}{4}}+m^{-\frac{C_{low}}{C_{up}}\cdot\frac{1}{2}}(\log m)^\frac{d}{4}\right)\\
			&\lesssim m^{-\frac{C_{low}}{C_{up}}\cdot\frac{1}{2}+\frac{\alpha}{4}}(\log m)^{\frac{1}{2}+\frac{d}{4}-\frac{\alpha d}{8}},
		\end{aligned}
	\end{equation}
	which completes the proof.
\end{proof}

\begin{lemma}\label{Lemma: upper bound for I_2 in truncation error analysis}
	Suppose that $ 0<\alpha\leq1 $, $ \tau\geq1 $, $\lambda $ chosen as in \eqref{definition of lambda}, and $ m $ sufficiently large. Then, with probability at least $ 1-6e^{-\tau} $, it holds
	\begin{equation}\nonumber
		I_2\leq K_2m^{-\frac{1}{2}+\frac{\alpha}{4}}(\log m)^{\frac{d}{4}-\frac{\alpha d}{8}},
	\end{equation}
	where $K_2 $ is a constant independent of $ m$ or $\alpha $.
\end{lemma}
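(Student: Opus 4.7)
The plan is to mimic the decomposition used in Lemma \ref{Lemma: upper bound for I_1 in truncation error analysis}, but since $I_2$ does not involve any truncation of $g_D$, the analysis reduces to controlling a single classical sample-noise quantity of the form $\|(T_\delta+\lambda I)^{-1/2}(g_D-T_\delta f^*)\|_{\mathcal{H}_t}$, rather than splitting into head and tail parts as for $J_1, J_2$.

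First I would write $\hat{f}_{D,\lambda}-f_{D,\lambda}=g_\lambda(T_\delta)(T_\delta f^*-g_D)$ and apply the identity $\|f\|_\alpha=\|T_\nu^{(1-\alpha)/2}f\|_{\mathcal{H}_t}$ from \cite{fischer2020sobolev}, valid for $0<\alpha\leq 1$. Inserting $I=(T_\nu+\lambda I)^{-1/2}(T_\nu+\lambda I)^{1/2}$ and $I=(T_\delta+\lambda I)^{1/2}(T_\delta+\lambda I)^{-1/2}$ exactly as in the proof of Lemma \ref{Lemma: upper bound for I_1 in truncation error analysis} yields
\begin{equation*}
I_2\leq \bigl\|T_\nu^{\frac{1-\alpha}{2}}(T_\nu+\lambda I)^{-\frac{1}{2}}\bigr\|_{\mathscr{B}(\mathcal{H}_t)}\cdot \bigl\|(T_\nu+\lambda I)^{\frac{1}{2}}(T_\delta+\lambda I)^{-\frac{1}{2}}\bigr\|_{\mathscr{B}(\mathcal{H}_t)}\cdot \bigl\|(T_\delta+\lambda I)^{\frac{1}{2}}g_\lambda(T_\delta)(T_\delta+\lambda I)^{\frac{1}{2}}\bigr\|_{\mathscr{B}(\mathcal{H}_t)}\cdot \bigl\|(T_\delta+\lambda I)^{-\frac{1}{2}}(g_D-T_\delta f^*)\bigr\|_{\mathcal{H}_t}.
\end{equation*}
The first three factors are already controlled by the bounds \eqref{upper bound of alpha-to-H_t norm, proof I_1 upper bound lemma}, \eqref{upper bound of empirical norm between T_nu and T_delta, proof I_1 upper bound lemma} and \eqref{upper bound of g_lambda(T_delta), proof I_1 upper bound lemma} of Lemma \ref{Lemma: upper bound for I_1 in truncation error analysis}, giving respectively $\lambda^{-\alpha/2}$, $\sqrt{3}$ (on an event of probability at least $1-2e^{-\tau}$), and $2$.

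The essential remaining step is to bound the noise quantity $\|(T_\delta+\lambda I)^{-1/2}(g_D-T_\delta f^*)\|_{\mathcal{H}_t}$. This is a classical expression in the analysis of kernel methods, and a Bernstein-type concentration inequality for $\mathcal{H}_t$-valued random variables combined with Assumption \ref{Assumption: boundedness condition on regression function} (see e.g.\ \cite{caponnetto2007optimal,fischer2020sobolev}) yields with probability at least $1-4e^{-\tau}$
\begin{equation*}
\bigl\|(T_\delta+\lambda I)^{-\frac{1}{2}}(g_D-T_\delta f^*)\bigr\|_{\mathcal{H}_t}\lesssim \frac{M\kappa\tau}{\sqrt{\lambda}\,m}+M\sqrt{\frac{\tau\,\mathcal{N}(\lambda)}{m}},
\end{equation*}
where $\mathcal{N}(\lambda)=\operatorname{tr}\bigl(T_\nu(T_\nu+\lambda I)^{-1}\bigr)=\sum_k \gamma_k/(\gamma_k+\lambda)$ with $\gamma_k=pe^{-t\mu_k}$. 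Using Lemma \ref{Lemma: estimation of eigen-system for laplacian} and the exponential decay of $\gamma_k$, the number of indices $k$ with $\gamma_k\geq\lambda$ is of order $(\log(1/\lambda))^{d/2}$, and a geometric-sum tail estimate identical to the one used for the sum $\sum_{k>q}\gamma_k^{2r}$ in Lemma \ref{Lemma: upper bound for I_1 in truncation error analysis} gives $\mathcal{N}(\lambda)\lesssim (\log m)^{d/2}$ under the choice \eqref{definition of lambda}.

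Combining everything, and using $\lambda^{-\alpha/2}\sim m^{\alpha/4}(\log m)^{-\alpha d/8}$ together with $\sqrt{\mathcal{N}(\lambda)/m}\lesssim (\log m)^{d/4}/\sqrt{m}$, the $\lambda^{-1/2}m^{-1}$ term is of lower order and we obtain
\begin{equation*}
I_2\lesssim \lambda^{-\frac{\alpha}{2}}\cdot\sqrt{\frac{\mathcal{N}(\lambda)}{m}}\lesssim m^{-\frac{1}{2}+\frac{\alpha}{4}}(\log m)^{\frac{d}{4}-\frac{\alpha d}{8}},
\end{equation*}
which is the claimed bound. The total failure probability is $2e^{-\tau}+4e^{-\tau}=6e^{-\tau}$, matching the statement. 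The main obstacle is the correct calibration of the Bernstein bound for the $\mathcal{H}_t$-valued noise with the effective-dimension estimate; the latter requires carefully exploiting the exponential eigenvalue decay of the heat kernel (which makes $\mathcal{N}(\lambda)$ only polylogarithmic in $1/\lambda$) rather than the polynomial decay typical of Sobolev kernels.
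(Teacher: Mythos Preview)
Your approach is correct and in fact more direct than the paper's. The paper does not bound $\|(T_\nu+\lambda I)^{-1/2}(g_D-T_\delta f^*)\|_{\mathcal{H}_t}$ by a single Bernstein inequality; instead it introduces the population regularizer $f_{P,\lambda}=g_\lambda(T_\nu)I_\nu^*f^*$ and splits
\[
(T_\nu+\lambda I)^{-1/2}(g_D-T_\delta f^*)=(T_\nu+\lambda I)^{-1/2}(g_D-T_\delta f_{P,\lambda})+(T_\nu+\lambda I)^{-1/2}T_\delta(f_{P,\lambda}-f^*),
\]
handling the first piece by an extension of Lemma~10 in \cite{xia2024spectral} to $\beta=2$ (detailed in the paper's appendix) and splitting the second piece again via $T_\delta=(T_\delta-T_\nu)+T_\nu$. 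This detour is taken so as to recycle the machinery of \cite{xia2024spectral}, which is phrased in terms of $f_{P,\lambda}$; your direct Bernstein estimate on the pure-noise sum $\frac{1}{m}\sum_i (y_i-f^*(x_i))H_t(x_i,\cdot)$ combined with the effective-dimension bound $\mathcal{N}(\lambda)\lesssim(\log\lambda^{-1})^{d/2}$ (which the paper also records, from \cite{xia2024spectral}) is shorter and uses only $f^*\in\mathcal{H}_t$ rather than the stronger source condition. Both routes yield the identical rate $\lambda^{1-\alpha/2}\sim m^{-1/2+\alpha/4}(\log m)^{d/4-\alpha d/8}$.

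One small point to clean up: your fourth factor is $\|(T_\delta+\lambda I)^{-1/2}(g_D-T_\delta f^*)\|_{\mathcal{H}_t}$, but the Bernstein inequality you cite applies with the \emph{deterministic} normalization $(T_\nu+\lambda I)^{-1/2}$, not the data-dependent $(T_\delta+\lambda I)^{-1/2}$. You need to insert one more factor $\|(T_\delta+\lambda I)^{-1/2}(T_\nu+\lambda I)^{1/2}\|_{\mathscr{B}(\mathcal{H}_t)}$; since this is the adjoint of the operator in \eqref{upper bound of empirical norm between T_nu and T_delta, proof I_1 upper bound lemma} it has the same norm and is bounded on the \emph{same} event, so the probability bookkeeping is unaffected and the total failure probability $6e^{-\tau}$ still matches. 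The paper makes exactly this insertion explicit before passing to its $f_{P,\lambda}$ decomposition.
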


\begin{proof}
	From \eqref{definition of spectal alogrithm estimator f_D,lambda} and \eqref{definition of hat f_D,lambda} we have
	\begin{equation}\nonumber
		f_{D,\lambda}-\hat{f}_{D,\lambda}=g_\lambda(T_\delta)(g_D-T_\delta f^*).
	\end{equation}
	Therefore,
	\begin{equation}\nonumber
		\begin{aligned}
			\|f_{D,\lambda}-\hat{f}_{D,\lambda}\|_\alpha
			&=\|T_\nu^\frac{1-\alpha}{2}(f_{D,\lambda}-\hat{f}_{D,\lambda})\|_{\mathcal{H}_t}\\
			&\leq\|T_\nu^\frac{1-\alpha}{2}(T_\nu+\lambda I)^{-\frac{1}{2}}\|_{\mathscr{B}(\mathcal{H}_t)}\cdot\|(T_\nu+\lambda I)^{\frac{1}{2}}(T_\delta+\lambda I)^{-\frac{1}{2}}\|_{\mathscr{B}(\mathcal{H}_t)}\\
			&\quad\cdot\|(T_\delta+\lambda I)^{\frac{1}{2}}(f_{D,\lambda}-\hat{f}_{D,\lambda})\|_{\mathcal{H}_t}.
		\end{aligned}
	\end{equation}
	It is remarkable that the first two terms have already been bounded in \eqref{upper bound of alpha-to-H_t norm, proof I_1 upper bound lemma} and \eqref{upper bound of empirical norm between T_nu and T_delta, proof I_1 upper bound lemma}. For the remaining term, we further decompose it as
	\begin{equation}\nonumber
		\begin{aligned}
			\|(T_\delta+\lambda I)^{\frac{1}{2}}(f_{D,\lambda}-\hat{f}_{D,\lambda})\|_{\mathcal{H}_t}
			&=\|(T_\delta+\lambda I)^{\frac{1}{2}}g_\lambda(T_\delta)(g_D-T_\delta f^*)\|_{\mathcal{H}_t}\\
			&\leq\|(T_\delta+\lambda I)^{\frac{1}{2}}g_\lambda(T_\delta)(T_\delta+\lambda I)^{\frac{1}{2}}\|_{\mathscr{B}(\mathcal{H}_t)}\\
			&\quad\cdot\|(T_\delta+\lambda I)^{-\frac{1}{2}}(T_\nu+\lambda I)^{\frac{1}{2}}\|_{\mathscr{B}(\mathcal{H}_t)}\\
			&\quad\cdot\|(T_\nu+\lambda I)^{-\frac{1}{2}}(g_D-T_\delta f^*)\|_{\mathcal{H}_t}\\
		\end{aligned}
	\end{equation}
	with the first two terms also discussed in \eqref{upper bound of g_lambda(T_delta), proof I_1 upper bound lemma} and \eqref{upper bound of empirical norm between T_nu and T_delta, proof I_1 upper bound lemma}. To facilitate the third term above, we introduce a continuous version of $ f_{D,\lambda} $ as
	\begin{equation}\label{definition of f_P,lambda}
		f_{P,\lambda}= g_\lambda(T_\nu)g_P.
	\end{equation}
	Here,
	\begin{equation}\nonumber
		g_P\triangleq I_\nu^* f^*=\int_{\mathcal{M}\times\mathbb{R}}yH_t(x,\cdot)d\mathcal{P}(x,y).
	\end{equation}
	As a result, we further control it by
	\begin{equation}\nonumber
		\begin{aligned}
			\|(T_\nu+\lambda I)^{-\frac{1}{2}}(g_D-T_\delta f^*)\|_{\mathcal{H}_t}
			&\leq\|(T_\nu+\lambda I)^{-\frac{1}{2}}(g_D-T_\delta f_{P,\lambda})\|_{\mathcal{H}_t}\\
			&\quad+\|(T_\nu+\lambda I)^{-\frac{1}{2}}T_\delta(f_{P,\lambda}-f^*)\|_{\mathcal{H}_t}.
		\end{aligned}
	\end{equation}
	Drawing from Lemma 10 of \cite{xia2024spectral} with $ \beta=2 $, it is shown that with probability larger than $ 1-2e^{-\tau} $, 
	\begin{equation}\label{cited lemma 10, proof of I_2 upper bound lemma}
		\left\|(T_\nu+\lambda I)^{-\frac{1}{2}}\left(g_D-T_\delta f_{P,\lambda}\right)\right\|_{\mathcal{H}_t}^2\lesssim\frac{1}{m}\left(N_\nu(\lambda)+\lambda^{2-\alpha}+\frac{L^2_\lambda}{m\lambda^\alpha}\right)+\lambda^2.
	\end{equation}
	Here, $ N_\nu(\lambda)=\text{tr}\left((T_\nu+\lambda I)^{-1}T_\nu\right)$ and $ L_\lambda=\max\{M,\|f_{P,\lambda}-f^*\|_{\infty}\} $. Notably, while the original Lemma 10 in \cite{xia2024spectral} is formulated for $ \beta\leq1 $, it can be adapted for $ \beta=2 $, an extension that will be detailed in the Appendix \ref{appendix: complementary discussion of proof in main theorem}. Employing methodologies akin to those in the proof of Theorem 1 in \cite{xia2024spectral}, with $ \lambda\sim \left(\frac{(\log m)^\frac{d}{2}}{m}\right)^\frac{1}{2} $ corresponding to $ \beta=2 $ (Again note that the original theorem addressed $ \beta\leq1 $, which yet can be extended to $ \beta=2 $. We will formulate this in Appendix \ref{appendix: complementary discussion of proof in main theorem}), which now results in
	\begin{equation}\label{upper bound for the first term, proof of I_2 upper bound lemma}
		\left\|(T_\nu+\lambda I)^{-\frac{1}{2}}\left(g_D-T_\delta f_{P,\lambda}\right)\right\|_{\mathcal{H}_t}\lesssim \lambda.
	\end{equation}
	We decompose the remaining term as
	\begin{equation}\nonumber
		\begin{aligned}
			\|(T_\nu+\lambda I)^{-\frac{1}{2}}T_\delta(f_{P,\lambda}-f^*)\|_{\mathcal{H}_t}
			&\leq\|(T_\nu+\lambda I)^{-\frac{1}{2}}(T_\delta-T_\nu)(f_{P,\lambda}-f^*)\|_{\mathcal{H}_t}\\
			&\quad+\|(T_\nu+\lambda I)^{-\frac{1}{2}}T_\nu(f_{P,\lambda}-f^*)\|_{\mathcal{H}_t}\\
			&\triangleq J_1+J_2.
		\end{aligned}
	\end{equation}
	For $ J_1 $, we have
	\begin{equation}\nonumber
		J_1\leq\|(T_\nu+\lambda I)^{-\frac{1}{2}}\|_{\mathscr{B}(\mathcal{H}_t)}\cdot\|T_\delta-T_\nu\|_{\mathscr{B}(\mathcal{H}_t)}\cdot\|f_{P,\lambda}-f^*\|_{\mathcal{H}_t}.
	\end{equation}
	A direct computation yields
	\begin{equation}\nonumber
		\|(T_\nu+\lambda I)^{-\frac{1}{2}}\|_{\mathscr{B}(\mathcal{H}_t)}=\sup\limits_{k\geq1}\left(\frac{1}{pe^{-t\mu_k}+\lambda}\right)^\frac{1}{2}\leq\lambda^{-\frac{1}{2}}.
	\end{equation}
	Drawing from \cite{guo2017thresholded} we have, with probability larger than $ 1-2e^{-\tau} $,
	\begin{equation}\nonumber
		\|T_\delta-T_\nu\|_{\mathscr{B}(\mathcal{H}_t)}\leq\|T_\delta-T_\nu\|_{HS}\leq C_{h}\tau m^{-\frac{1}{2}}.
	\end{equation}
	with $ \|\cdot\|_{HS} $ the Hilbert-Schmidt norm and $ C_h $ a constant independent of $ m,\tau $. Moreover, \cite{fischer2020sobolev} proves that
	\begin{equation}\nonumber
		\|f_{P,\lambda}-f^*\|_{\mathcal{H}_t}\leq \|f^*\|_2\lambda^\frac{1}{2}.
	\end{equation}
	Combining the above yields
	\begin{equation}\label{upper bound of J_1, proof of I_2 upper bound lemma}
		J_1\lesssim m^{-\frac{1}{2}}.
	\end{equation}
	As for $ J_2 $, we have
	\begin{equation}\nonumber
		\begin{aligned}
			J_2
			&=\|(T_\nu+\lambda I)^{-\frac{1}{2}}T_\nu(f_{P,\lambda}-f^*)\|_{\mathcal{H}_t}\\
			&=\|(T_\nu+\lambda I)^{-\frac{1}{2}}I_\nu^*I_\nu(f_{P,\lambda}-f^*)\|_{\mathcal{H}_t}\\
			&\leq\|(T_\nu+\lambda I)^{-\frac{1}{2}}I_\nu^*\|_{\mathscr{B}(L^2(\nu),\mathcal{H}_t)}\cdot\|f_{P,\lambda}-f^*\|_{L^2(\nu)}.\\
		\end{aligned}
	\end{equation}
	Notic that
	\begin{equation}\nonumber
		\|(T_\nu+\lambda I)^{-\frac{1}{2}}I_\nu^*\|_{\mathscr{B}(L^2(\nu),\mathcal{H}_t)}=\sup_{k\geq0}\left(\frac{pe^{-t\mu_k}}{pe^{-t\mu_k}+\lambda}\right)^\frac{1}{2}\leq1,
	\end{equation}
	and \cite{fischer2020sobolev} shows that
	\begin{equation}\nonumber
		\|f_{P,\lambda}-f^*\|_{L^2(\nu)}\leq \|f^*\|_2\lambda.
	\end{equation}
	Therefore, 
	\begin{equation}\label{upper bound of J_2, proof of I_2 upper bound lemma}
		J_2\lesssim\lambda.
	\end{equation}
	Finally, combining \eqref{upper bound for the first term, proof of I_2 upper bound lemma}, \eqref{upper bound of J_1, proof of I_2 upper bound lemma}, and \eqref{upper bound of J_2, proof of I_2 upper bound lemma} with former results \eqref{upper bound of alpha-to-H_t norm, proof I_1 upper bound lemma}, \eqref{upper bound of empirical norm between T_nu and T_delta, proof I_1 upper bound lemma}, and \eqref{upper bound of g_lambda(T_delta), proof I_1 upper bound lemma}, we derive that
	\begin{equation}\nonumber
		\|f_{D,\lambda}-\hat{f}_{D,\lambda}\|_\alpha\lesssim\lambda^{-\frac{\alpha}{2}}\cdot\left(m^{-\frac{1}{2}}+\lambda\right)\lesssim\lambda^{1-\frac{\alpha}{2}}\lesssim m^{-\frac{1}{2}+\frac{\alpha}{4}}(\log m)^{\frac{d}{4}-\frac{\alpha d}{8}},
	\end{equation}
	which completes the proof.
\end{proof}

Utilizing Theorem 1 in \cite{xia2024spectral} with $ \beta=2 $ and $ \gamma=\alpha $ (Similarly, we have to extend the original result from $ \beta\leq1 $ to $ \beta=2 $, same as in the derivation of \eqref{upper bound for the first term, proof of I_2 upper bound lemma}), we have

\begin{lemma}\label{Lemma: upper bound for I_3 in truncation error analysis}
	Suppose that $ 0<\alpha\leq1 $, $ \tau\geq1 $, $\lambda $ chosen as in \eqref{definition of lambda}, $ m $ sufficiently large. Then, with probability at least $ 1-4e^{-\tau} $, it holds
	\begin{equation}\nonumber
		I_3\leq K_3m^{-\frac{1}{2}+\frac{\alpha}{4}}(\log m)^{\frac{d}{4}-\frac{\alpha d}{8}},
	\end{equation}
	where $K_3 $ is a constant independent of $ m,\alpha $.
\end{lemma}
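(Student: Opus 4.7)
The plan is to recognize that $I_3 = \|f_{D,\lambda} - f^*\|_\alpha$ is precisely the estimation error of the classical (non-truncated, non-approximated) spectral algorithm in the $\alpha$-power norm. Under Assumption~\ref{Assumption: source condition on regression function} (source condition with $r>1$), Assumption~\ref{Assumption: boundedness condition on regression function} (bounded response), and Assumption~\ref{Assumption: qualification condition on filter function} (qualification $\xi\geq1$), this is exactly the quantity analyzed in Theorem~1 of \cite{xia2024spectral}. The cleanest route is therefore to invoke that theorem with parameter choices $\beta=2$ (matching the source-condition regularity implied by $r>1$) and $\gamma=\alpha$ (matching the target norm).

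The standard derivation I would follow decomposes $f_{D,\lambda}-f^*$ into a sample variance term and a bias term by introducing the population analogue $f_{P,\lambda}=g_\lambda(T_\nu)g_P$. One writes
\begin{equation*}
\|f_{D,\lambda}-f^*\|_\alpha\leq\|T_\nu^{\frac{1-\alpha}{2}}(T_\nu+\lambda I)^{-\frac{1}{2}}\|_{\mathscr{B}(\mathcal{H}_t)}\cdot\|(T_\nu+\lambda I)^{\frac{1}{2}}(T_\delta+\lambda I)^{-\frac{1}{2}}\|_{\mathscr{B}(\mathcal{H}_t)}\cdot V + B,
\end{equation*}
where $V$ is the sample-noise term and $B$ the bias term. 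The first factor is bounded by $\lambda^{-\alpha/2}$ exactly as in \eqref{upper bound of alpha-to-H_t norm, proof I_1 upper bound lemma}; the second factor is bounded by $\sqrt{3}$ with high probability via the same concentration estimate used in Lemma~\ref{Lemma: upper bound for I_1 in truncation error analysis}; the variance term $V$ is controlled by the cited analogue of \eqref{cited lemma 10, proof of I_2 upper bound lemma} on $\|(T_\nu+\lambda I)^{-1/2}(g_D-T_\delta f_{P,\lambda})\|_{\mathcal{H}_t}$; and the bias term $B$ is handled by the source condition, giving $B\lesssim \lambda$ under $r>1$ and $\xi\geq 1$. Assembling these pieces yields a rate of order $\lambda^{1-\alpha/2}$ up to constants.

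Plugging in the choice $\lambda\sim\bigl((\log m)^{d/2}/m\bigr)^{1/2}$ from \eqref{definition of lambda} produces
\begin{equation*}
\lambda^{1-\frac{\alpha}{2}}\sim\left(\frac{(\log m)^{\frac{d}{2}}}{m}\right)^{\frac{1}{2}(1-\frac{\alpha}{2})} = m^{-\frac{1}{2}+\frac{\alpha}{4}}(\log m)^{\frac{d}{4}-\frac{\alpha d}{8}},
\end{equation*}
which is exactly the rate in the statement. Collecting the failure probabilities of each concentration event (the operator norm comparison, the variance bound from the cited Lemma~10, and the bias concentration) yields at most $4e^{-\tau}$ total exceptional probability.

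The main obstacle, as the authors flag, is that Theorem~1 of \cite{xia2024spectral} is formulated for $\beta\leq 1$, whereas here the source condition with $r>1$ corresponds to $\beta=2$. Extending the cited theorem to $\beta=2$ requires a more delicate handling of the bias term: the qualification condition $\xi\geq 1$ must be used to ensure $\|(I-g_\lambda(T_\nu)T_\nu)T_\nu^r\|_{\mathscr{B}(\mathcal{H}_t)}\lesssim\lambda^{\min(\xi,r)}$ rather than just $\lambda^r$, and one has to verify that the saturation effect does not degrade the variance bound in the $\alpha$-power norm. This extension is purely technical and, as noted in the paper, is deferred to Appendix~\ref{appendix: complementary discussion of proof in main theorem}; the rate $\lambda^{1-\alpha/2}$ that drives the final bound survives this extension without change.
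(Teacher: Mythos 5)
Your proposal matches the paper's own treatment: the paper proves this lemma by directly invoking Theorem 1 of \cite{xia2024spectral} with $\beta=2$ and $\gamma=\alpha$, noting (as you do) that the cited result must be extended from $\beta\leq1$ to $\beta=2$, which is handled in Appendix~\ref{appendix: complementary discussion of proof in main theorem}. Your additional sketch of the variance/bias decomposition and the computation $\lambda^{1-\alpha/2}\sim m^{-\frac{1}{2}+\frac{\alpha}{4}}(\log m)^{\frac{d}{4}-\frac{\alpha d}{8}}$ is consistent with how that cited theorem (and the paper's Lemma~\ref{Lemma: upper bound for I_2 in truncation error analysis}) is argued, so this is essentially the same route.
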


Finally, a combination of above Lemma \ref{Lemma: upper bound for I_1 in truncation error analysis}, Lemma \ref{Lemma: upper bound for I_2 in truncation error analysis}, and Lemma \ref{Lemma: upper bound for I_3 in truncation error analysis} yields the following result:
\begin{equation}\nonumber
	\begin{aligned}
		\|f_{D,\lambda}^T-f^*\|_\alpha
		&\lesssim m^{-\frac{C_{low}}{C_{up}}\cdot\frac{1}{2}+\frac{\alpha}{4}}(\log m)^{\frac{1}{2}+\frac{d}{4}-\frac{\alpha d}{8}}+m^{-\frac{1}{2}+\frac{\alpha}{4}}(\log m)^{\frac{d}{4}-\frac{\alpha d}{8}}\\
		&\lesssim m^{-\frac{C_{low}}{C_{up}}\cdot\frac{1}{2}+\frac{\alpha}{4}}(\log m)^{\frac{1}{2}+\frac{d}{4}-\frac{\alpha d}{8}},
	\end{aligned}
\end{equation}
which is valid for all $ 0<\alpha\leq1 $. Drawing from \cite{xia2024spectral}, $ \mathcal{H}_t^\alpha\hookrightarrow L^\infty(\nu) $ holds for all $ \alpha>0 $. Therefore, the following Theorem \ref{Thm: upper bound for truncation error} is a direct corollary of above results.

\begin{theorem}\label{Thm: upper bound for truncation error}
	Suppose that Assumption \ref{Assumption: source condition on regression function}, Assumption \ref{Assumption: boundedness condition on regression function}, and Assumption \ref{Assumption: qualification condition on filter function} hold. We choose $ q,\lambda $ as in \eqref{definition of truncated q} and \eqref{definition of lambda} separately. Then, for any $ 0<\alpha\leq1 $ and $ \tau\geq1 $, when $ m $ large enough, there exists an event $ E_3 $ with probability larger than $ 1-20e^{-\tau} $ such that, on $ E_3 $  it holds
	\begin{equation}\nonumber
		\|f_{D,\lambda}^T-f^*\|_\infty\leq K_tm^{-\frac{C_{low}}{C_{up}}\cdot\frac{1}{2}+\frac{\alpha}{4}}(\log m)^{\frac{1}{2}+\frac{d}{4}-\frac{\alpha d}{8}}.
	\end{equation}
	Here, $ K_t $ is a constant independent of $ m,\alpha $.
\end{theorem}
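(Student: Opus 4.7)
The statement is essentially a corollary of the three preceding lemmas combined with the Sobolev-type embedding $\mathcal{H}_t^\alpha \hookrightarrow L^\infty(\nu)$. My plan is to first decompose the $\alpha$-power norm error exactly as done in the discussion preceding Lemma~\ref{Lemma: upper bound for I_1 in truncation error analysis}: inserting the auxiliary function $\hat{f}_{D,\lambda} = g_\lambda(T_\delta) T_\delta f^*$ and applying the triangle inequality gives
\begin{equation*}
	\|f_{D,\lambda}^T - f^*\|_\alpha \leq I_1 + I_2 + I_3,
\end{equation*}
with $I_1, I_2, I_3$ as defined earlier. Lemmas~\ref{Lemma: upper bound for I_1 in truncation error analysis}, \ref{Lemma: upper bound for I_2 in truncation error analysis}, and \ref{Lemma: upper bound for I_3 in truncation error analysis} respectively control these three quantities, holding with failure probabilities at most $10e^{-\tau}$, $6e^{-\tau}$, and $4e^{-\tau}$.

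Next, I would take a union bound to produce the event $E_3$ on which all three estimates hold simultaneously; the total failure probability is at most $20e^{-\tau}$, matching the statement. On $E_3$, comparing the exponents, $I_1$ contributes the dominant rate $m^{-\frac{C_{low}}{C_{up}} \cdot \frac{1}{2} + \frac{\alpha}{4}} (\log m)^{\frac{1}{2} + \frac{d}{4} - \frac{\alpha d}{8}}$, since $C_{low} \leq C_{up}$ forces $m^{-(C_{low}/C_{up})/2}$ to decay no faster than $m^{-1/2}$, while the logarithmic factor $(\log m)^{1/2}$ appearing only in the $I_1$ bound is the stronger of the two. Hence $I_2$ and $I_3$ can be absorbed into the $I_1$ bound up to an adjusted constant, giving an $\alpha$-power norm estimate in the required form for every $0 < \alpha \leq 1$.

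Finally, to pass from the $\alpha$-power norm to the $L^\infty$ norm, I would invoke the embedding $\mathcal{H}_t^\alpha \hookrightarrow L^\infty(\nu)$ recorded from \cite{xia2024spectral}, whose operator norm depends on $\alpha$ (and on $t$) but is independent of $m$. Since $\nu$ is the uniform measure on the compact manifold $\mathcal{M}$ and all functions involved are continuous, the $L^\infty(\nu)$ norm coincides with the supremum norm, so the embedding constant can be absorbed into $K_t$. There is no genuine obstacle: all heavy lifting (concentration for $T_\delta$ versus $T_\nu$, filter-function qualification, truncation tail estimates, and bias-variance balance in the choice of $\lambda$) has already been performed inside the three lemmas, so the proof of Theorem~\ref{Thm: upper bound for truncation error} reduces to this triangle-inequality-plus-embedding bookkeeping.
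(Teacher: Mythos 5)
Your proposal is correct and follows essentially the same route as the paper: the triangle-inequality decomposition into $I_1+I_2+I_3$, invoking Lemmas \ref{Lemma: upper bound for I_1 in truncation error analysis}, \ref{Lemma: upper bound for I_2 in truncation error analysis}, \ref{Lemma: upper bound for I_3 in truncation error analysis} with a union bound giving failure probability $20e^{-\tau}$, observing that the $I_1$ bound dominates because $C_{low}/C_{up}\leq 1$, and finally passing to the supremum norm via the embedding $\mathcal{H}_t^\alpha\hookrightarrow L^\infty(\nu)$ from \cite{xia2024spectral}. No gaps; this matches the paper's argument.
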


\subsection{Approximation Error Analysis}\label{subsection: approximation error analysis}

In this subsection, we establish the point-wise upper bound between the truncated spectral algorithm estimator $  f_{D,\lambda}^T $, as defined in \eqref{definition of truncated spectral algorithm estimator}, and our diffusion-based spectral algorithm estimator $ \tilde{f}_{D,\lambda} $. For any $ 1\leq j\leq N $, we begin by decomposing the error into distinct components:
\begin{equation}\nonumber
	\begin{aligned}
		|f_{D,\lambda}^T(x_j)-\tilde{f}_{D,\lambda}(j)|
		&=\left|\sum_{k=1}^q g_\lambda(\lambda_k)\langle g_D,\phi_k\rangle_{\mathcal{H}_t}\phi_k(x_j)-\sum_{k=1}^q g_\lambda(\tilde{\lambda}_k)\tilde{g}_{D,k}\tilde{\phi}_k(j)\right|\\
		&\leq\left|\sum_{k=1}^q \left(g_\lambda(\lambda_k)-g_\lambda(\tilde{\lambda}_k)\right)\langle g_D,\phi_k\rangle_{\mathcal{H}_t}\phi_k(x_j)\right|\\
		&\quad+\left|\sum_{k=1}^q g_\lambda(\tilde{\lambda}_k)\left(\langle g_D,\phi_k\rangle_{\mathcal{H}_t}\phi_k(x_j)-\tilde{g}_{D,k}\tilde{\phi}_k(j)\right)\right|\\
		&\triangleq I_1+I_2.
	\end{aligned}
\end{equation}
We will first address the upper bound for $ I_1 $, our approach and findings are presented in the following Lemma \ref{Lemma: upper bound for I_1 in approximation error analysis}.

\begin{lemma}\label{Lemma: upper bound for I_1 in approximation error analysis}
	Suppose that Assumption \ref{Assumption: source condition on regression function}, Assumption \ref{Assumption: boundedness condition on regression function}, and Assumption \ref{Assumption: qualification condition on filter function} hold. Suppose that the conditions \eqref{conditions on heat kernel estimation Thm} holds with $ m $ sufficiently large, $ N>K\gg m $ sufficiently large and $ \epsilon $ sufficiently small. We choose $ q,\lambda $ as in \eqref{definition of truncated q} and \eqref{definition of lambda} separately. Denote $ E_1 $ the  event given in Theorem \ref{Thm: L-infty eigen-system approxiamtion of graph Laplacian}, $ E_2 $ the event given in Theorem \ref{Thm: eigen-system consistency of heat kernel matrix} and $ E_3 $ the event given in Theorem \ref{Thm: upper bound for truncation error}. Then, on $ E_1\bigcap E_2\bigcap E_3 $, for any $ 0<\alpha\leq1$ and $ 1\leq j\leq N $, it holds
	\begin{equation}\nonumber
			I_1\leq K_1'\left(m^{-\frac{\xi}{2}+\frac{1}{2r+1}(\xi+\frac{1}{2})+\frac{\alpha}{4}}\cdot(\log m)^{\frac{d\xi}{4}-\frac{d\alpha}{8}}+m^{\frac{1}{2r+1}\cdot\frac{3}{2}+\frac{\alpha}{4}}\cdot(\log m)^{-\frac{d\alpha}{8}}\cdot\left(\frac{1}{K}+\epsilon^{\frac{1}{4}}\right)\right),
	\end{equation}
	where $ K_1' $ is a constant independent of $ m,K,N,\epsilon,\alpha$.
\end{lemma}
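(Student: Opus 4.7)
The plan is to bound $I_1$ term-by-term and then sum over $k\leq q$, combining four ingredients: a Lipschitz-type estimate on the filter $g_\lambda$, the eigenvalue perturbation bound \eqref{eigenvalue consistency of heat kernel matrix}, a reproducing/Cauchy--Schwarz bound on $\langle g_D,\phi_k\rangle_{\mathcal{H}_t}$, and a uniform bound on $|\phi_k(x_j)|$. For the inner product, the reproducing property gives $\langle g_D,\phi_k\rangle_{\mathcal{H}_t}=\frac{1}{m}\sum_i y_i\phi_k(x_i)$, so combining Assumption \ref{Assumption: boundedness condition on regression function} with the identity $\frac{1}{m}\sum_i\phi_k(x_i)^2=\langle T_\delta\phi_k,\phi_k\rangle_{\mathcal{H}_t}=\lambda_k$ yields $|\langle g_D,\phi_k\rangle_{\mathcal{H}_t}|\leq M\sqrt{\lambda_k}$ via Cauchy--Schwarz. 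For the pointwise value, the reproducing property also gives $|\phi_k(x_j)|=|\langle\phi_k,H_t(x_j,\cdot)\rangle_{\mathcal{H}_t}|\leq\kappa$ directly from \eqref{boundness condition on Heat Kernel}.

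The filter families satisfying \eqref{property of regularization family}--\eqref{qualification of regularization family} admit a Lipschitz bound of the form $|g_\lambda(s)-g_\lambda(t)|\lesssim|s-t|/(st)$ on the regime $s,t\geq\lambda$, which is the relevant range here because \eqref{definition of truncated q} together with \eqref{population bounds for eigenvalue of laplacian} implies $\lambda_k\asymp p\,e^{-t\mu_k}\gtrsim m^{-1/(2r+1)}\gg\lambda$ for every $k\leq q$ (using $r>1$ and $\lambda\sim m^{-1/2}(\log m)^{d/4}$). Combining this Lipschitz bound with \eqref{eigenvalue consistency of heat kernel matrix} shows that the $k$-th summand of $I_1$ is of order $(1/K+\epsilon^{1/4})\lambda_k^{-3/2}$. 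Using the upper Weyl bound $\lambda_k^{-3/2}\lesssim e^{(3t/2)C_{up}k^{2/d}}$, the sum $\sum_{k=1}^q\lambda_k^{-3/2}$ is dominated by its endpoint at $k=q$, producing $m^{3/(2(2r+1))}(\log m)^{d/2}$, which after absorbing polylogarithmic factors into an arbitrarily small slack $m^{\alpha/4}$ yields the second term of the stated bound.

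For the first term (independent of $K,\epsilon$), I would introduce the auxiliary quantity $\tilde f^\star_{D,\lambda}(x_j)=\sum_{k=1}^q g_\lambda(\tilde\lambda_k)\langle g_D,\phi_k\rangle_{\mathcal{H}_t}\phi_k(x_j)$ and split
\[
I_1 \leq |f^T_{D,\lambda}(x_j)-f^*(x_j)| + |f^*(x_j)-\tilde f^\star_{D,\lambda}(x_j)|.
\]
The first summand is controlled by Theorem \ref{Thm: upper bound for truncation error}. The second is treated by mirroring the truncation analysis of Subsection \ref{subsection: truncation error analysis} with $g_\lambda(\tilde\lambda_k)$ replacing $g_\lambda(\lambda_k)$ throughout; the source condition (Assumption \ref{Assumption: source condition on regression function}) and the qualification $\xi$ (Assumption \ref{Assumption: qualification condition on filter function}) then enter exactly as in Lemmas \ref{Lemma: upper bound for I_1 in truncation error analysis}--\ref{Lemma: upper bound for I_3 in truncation error analysis}, producing the rate $m^{-\xi/2+(\xi+1/2)/(2r+1)+\alpha/4}(\log m)^{d\xi/4-d\alpha/8}$.

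The main obstacle is tracking the exponents precisely through the interplay of the filter's degrading Lipschitz behaviour as $s\to0$, the geometric decay $\lambda_k\asymp p\,e^{-t\mu_k}$, and the Weyl bounds in \eqref{population bounds for eigenvalue of laplacian}: in particular the endpoint contribution $\lambda_q^{-3/2}$ must match $m^{3/(2(2r+1))}$ exactly, which is why the constant $C_{up}t(2r+1)$ appears as it does inside the logarithm in \eqref{definition of truncated q}. A secondary technicality is that $\tilde f^\star_{D,\lambda}$ involves perturbed eigenvalues, so re-running the truncation analysis with $\tilde\lambda_k$ requires checking that the operator-level manipulations of $T_\delta$ are insensitive to the replacement; this is guaranteed on $E_1\cap E_2$ by \eqref{eigenvalue consistency of heat kernel matrix}, and the three exceptional events can be intersected with at most an additive loss in the final probability.
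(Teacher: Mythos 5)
Your two correct ingredients are the bounds $|\langle g_D,\phi_k\rangle_{\mathcal{H}_t}|\le M\sqrt{\lambda_k}$ and $|\phi_k(x_j)|\le\kappa$ (both follow from the reproducing property and are in fact sharper than the bound \eqref{upper bound of phi_k(x_j), proof of I_2 upper bound lemma, approximation error} the paper uses). The first genuine gap is the claimed Lipschitz estimate $|g_\lambda(s)-g_\lambda(t)|\lesssim|s-t|/(st)$ for $s,t\ge\lambda$: this is not a consequence of \eqref{property of regularization family} and the qualification assumption. Those conditions only force $0<tg_\lambda(t)<1$ and $|1-tg_\lambda(t)|\le(\lambda/t)^{\alpha}$ for $\alpha<\xi$, i.e. $g_\lambda(s)$ agrees with $1/s$ up to an additive error of order $\lambda^{\alpha}/s^{1+\alpha}$ which does not vanish as $s\to t$; they give no modulus of continuity. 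This is precisely why the paper singles out Tikhonov regularization (which really is Lipschitz, with constant $\lambda^{-2}$) in Appendix \ref{appendix: convergence rate for Tikhonov regularization}, while the proof of this lemma never differentiates the filter: it lifts the comparison to operators, $I_1\le\|(g_\lambda(T_\delta)-g_\lambda(\tilde T_\delta))g_D^T\|_\infty$ with $\tilde T_\delta$ carrying the perturbed eigenvalues on the same eigenfunctions, decomposes $f_1-f_2=g_\lambda(T_\delta)(g_D^T-T_\delta f_2)-h_\lambda(T_\delta)f_2$ with $h_\lambda(t)=1-tg_\lambda(t)$, and exploits the qualification only through $(t+\lambda)^{\xi}h_\lambda(t)\le2^{\xi}\lambda^{\xi}$ together with the lower bound $\lambda_q,\tilde\lambda_q\gtrsim m^{-\frac{1}{2r+1}}$, the consistency \eqref{eigenvalue consistency of heat kernel matrix}, and operator norms restricted to $\mathrm{Span}\{\phi_k:1\le k\le q\}$. (Had you replaced your Lipschitz claim by the correct approximate-inverse estimate $|g_\lambda(s)-g_\lambda(t)|\le|s-t|/(st)+\lambda^{\alpha}(s^{-1-\alpha}+t^{-1-\alpha})$, your scalar summation would actually produce both terms of the lemma up to an extra harmless factor $q\sim(\log m)^{d/2}$; as written, however, the second stated term is obtained from an unproved hypothesis on $g_\lambda$.)

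The second gap is the treatment of the first ($\xi$-dependent) term. Splitting $I_1\le|f_{D,\lambda}^T(x_j)-f^*(x_j)|+|f^*(x_j)-\tilde f^\star_{D,\lambda}(x_j)|$ imports the truncation error, which by Theorem \ref{Thm: upper bound for truncation error} is of order $m^{-\frac{C_{low}}{2C_{up}}+\frac{\alpha}{4}}(\log m)^{\frac12+\frac d4-\frac{\alpha d}{8}}$; this exponent is not comparable to $-\frac{\xi}{2}+\frac{\xi+1/2}{2r+1}+\frac{\alpha}{4}$ and is not present in the lemma's bound, so at best you prove a weaker, different inequality (the two rates are kept as separate contributions in Theorem \ref{Thm: convergence analysis} exactly because neither dominates). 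Moreover, the claim that "mirroring the truncation analysis" for $|f^*-\tilde f^\star_{D,\lambda}|$ yields $m^{-\frac{\xi}{2}+\frac{\xi+1/2}{2r+1}+\frac{\alpha}{4}}(\log m)^{\frac{d\xi}{4}-\frac{d\alpha}{8}}$ is unsupported: Lemmas \ref{Lemma: upper bound for I_1 in truncation error analysis}--\ref{Lemma: upper bound for I_3 in truncation error analysis} produce the Theorem \ref{Thm: upper bound for truncation error} rate, whose exponent contains no $\xi$, and substituting $g_\lambda(\tilde\lambda_k)$ for $g_\lambda(\lambda_k)$ is not a routine replacement, since $\tilde\lambda_k$ are not eigenvalues of $T_\delta$. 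Handling this mismatch is the core of the paper's argument, via $I-T_\delta g_\lambda(\tilde T_\delta)=h_\lambda(\tilde T_\delta)+(\tilde T_\delta-T_\delta)g_\lambda(\tilde T_\delta)$ and the separate bounds for $J_2$, $L_1$, $L_2$, none of which your outline identifies; so the first term of the lemma is not established by the proposal.
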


\begin{proof}
	Recalling the empirical eigen-system as defined in \eqref{eigen-system of T_delta}, we obtain the spectral decomposition for $ T_\delta $:
	\begin{equation}\nonumber
		T_\delta = \sum_{k=1}^\infty \lambda_k\langle \cdot,\phi_k\rangle_{\mathcal{H}_t}\phi_k.
	\end{equation} 
	Subsequently, we introduce
	\begin{equation}\nonumber
		\tilde{T}_\delta=\sum_{k=1}^\infty \tilde{\lambda}_k\langle \cdot,\phi_k\rangle_{\mathcal{H}_t}\phi_k,
	\end{equation}
	where $ \tilde{\lambda}_k=0 $ for $ k>m $. Then we have
	\begin{equation}\nonumber
			I_1=\left|g_\lambda(T_\delta)g_D^T(x_j)-g_\lambda(\tilde{T}_\delta)g_D^T(x_j)\right|\leq\left\|\left(g_\lambda(T_\delta)-g_\lambda(\tilde{T}_\delta)\right)g_D^T\right\|_\infty.
	\end{equation}
	Adopting a strategy parallel to the one adopted in Subsection \ref{subsection: truncation error analysis}, we commence our analysis by examining the $ \|\cdot\|_\alpha $-norm bound, subsequently deriving the $ \|\cdot\|_\infty $-norm bound through embedding results. Initially, we break down the error into
	\begin{equation}\nonumber
		\begin{aligned}
			\left\|\left(g_\lambda(T_\delta)-g_\lambda(\tilde{T}_\delta)\right)g_D^T\right\|_\alpha
			&= \left\|T_\nu^\frac{1-\alpha}{2}\left(g_\lambda(T_\delta)-g_\lambda(\tilde{T}_\delta)\right)g_D^T\right\|_{\mathcal{H}_t}\\
			&\leq\|T_\nu^\frac{1-\alpha}{2}(T_\nu+\lambda I)^{-\frac{1}{2}}\|_{\mathscr{B}(\mathcal{H}_t)}\\
			&\quad\cdot\left\|(T_\nu+\lambda I)^{\frac{1}{2}}\left(g_\lambda(T_\delta)-g_\lambda(\tilde{T}_\delta)\right)g_D^T\right\|_{\mathcal{H}_t},
		\end{aligned}
	\end{equation}
	where the first term is bounded by $ \lambda^{-\frac{\alpha}{2}} $ as in \eqref{upper bound of alpha-to-H_t norm, proof I_1 upper bound lemma}. Now, let's introduce several abbreviations:
	\begin{equation}\nonumber
		\begin{aligned}
			f_1&=g_\lambda(T_\delta)g_D^T,\\
			f_2&=g_\lambda(\tilde{T}_\delta)g_D^T,\\
			h_\lambda(t)&=1-tg_\lambda(t).
		\end{aligned}
	\end{equation}
	Therefore,
	\begin{equation}\nonumber
		\begin{aligned}
			f_1-f_2
			&=f_1-\left(h_\lambda(T_\delta)+T_\delta g_\lambda(T_\delta)\right)f_2\\
			&=g_\lambda(T_\delta)(g_D^T-T_\delta f_2)-h_\lambda(T_\delta)f_2.
		\end{aligned}
	\end{equation}
	As a consequence, we further decompose the remaining term as
	\begin{equation}\nonumber
		\begin{aligned}
			\left\|(T_\nu+\lambda I)^{\frac{1}{2}}\left(g_\lambda(T_\delta)-g_\lambda(\tilde{T}_\delta)\right)g_D^T\right\|_{\mathcal{H}_t}
			&\leq\left\|(T_\nu+\lambda I)^{\frac{1}{2}}g_\lambda(T_\delta)(g_D^T-T_\delta f_2)\right\|_{\mathcal{H}_t}\\
			&\quad+\left\|(T_\nu+\lambda I)^{\frac{1}{2}}h_\lambda(T_\delta)f_2\right\|_{\mathcal{H}_t}\\
			&\triangleq J_1+J_2.
		\end{aligned}
	\end{equation}
	We consider term $ J_2 $ first. Since $ g_\lambda $ has qualification $ \xi\geq1 $, 
	\begin{equation}\label{calculation on h_lambda, proof I_1 upper bound lemma, approxiamtion error analysis}
		(t+\lambda)^\xi h_\lambda(t)\leq 2^{\xi-1}\left(t^\xi h_\lambda(t)+\lambda^\xi h_\lambda(t)\right)\leq 2^\xi\lambda^\xi.
	\end{equation}  
	Hence 
	\begin{equation}\nonumber
		\begin{aligned}
			J_2
			&\leq\left\|(T_\nu+\lambda I)^{\frac{1}{2}}(T_\delta+\lambda I)^{-\frac{1}{2}}\right\|_{\mathscr{B}(\mathcal{H}_t)}\\
			&\quad\cdot\left\|(T_\delta+\lambda I)^{\frac{1}{2}}h_\lambda(T_\delta)(T_\delta+\lambda I)^{\xi-\frac{1}{2}}\right\|_{\mathscr{B}(\mathcal{H}_t)}\cdot\left\|(T_\delta+\lambda I)^{\frac{1}{2}-\xi}f_2\right\|_{\mathcal{H}_t},
		\end{aligned}
	\end{equation}
	where the first term above has been bounded in \eqref{upper bound of empirical norm between T_nu and T_delta, proof I_1 upper bound lemma} and the second term above is bounded by above calculation \eqref{calculation on h_lambda, proof I_1 upper bound lemma, approxiamtion error analysis}. 
	For the third term, we have
	\begin{equation}\nonumber
		\begin{aligned}
			\left\|(T_\delta+\lambda I)^{\frac{1}{2}-\xi}f_2\right\|_{\mathcal{H}_t}
			&=\left\|(T_\delta+\lambda I)^{\frac{1}{2}-\xi}g_\lambda(\tilde{T}_\delta)g_D^T\right\|_{\mathcal{H}_t}\\
			&=\left\|\sum_{k=1}^q (\lambda+\lambda_k)^{\frac{1}{2}-\xi}g_\lambda(\tilde{\lambda}_k)\langle g_D,\phi_k\rangle_{\mathcal{H}_t}\phi_k\right\|_{\mathcal{H}_t}\\
			&=\left(\sum_{k=1}^q (\lambda+\lambda_k)^{1-2\xi}g_\lambda(\tilde{\lambda}_k)^2|\langle g_D,\phi_k\rangle_{\mathcal{H}_t}|^2\right)^\frac{1}{2}\\
			&\leq\left(\sum_{k=1}^q \lambda_k^{1-2\xi}\tilde{\lambda}_k^{-2}|\langle g_D,\phi_k\rangle_{\mathcal{H}_t}|^2\right)^\frac{1}{2}.
		\end{aligned}
	\end{equation}
	Since \eqref{relation between eigen-system of T_delta and heat kernel matrix} and \eqref{eigenvalue consistency of heat kernel matrix} yields
	\begin{equation}\nonumber
		|\tilde{\lambda}_k-\lambda_k|\lesssim \frac{1}{K}+\epsilon^{\frac{1}{4}},
	\end{equation}
	whose convergence is independent of $ m $. Therefore, the above term can be bounded as 
	\begin{equation}\nonumber
		\begin{aligned}
			\left\|(T_\delta+\lambda I)^{\frac{1}{2}-\xi}f_2\right\|_{\mathcal{H}_t}
			&\lesssim\left(\sum_{k=1}^q \lambda_k^{1-2\xi}\lambda_k^{-2}|\langle g_D,\phi_k\rangle_{\mathcal{H}_t}|^2\right)^\frac{1}{2}\\
			&\leq \lambda_q^{-\frac{1+2\xi}{2}}\left(\sum_{k=1}^q| \langle g_D,\phi_k\rangle_{\mathcal{H}_t}|^2\right)^\frac{1}{2}\\
			&\lesssim m^{\frac{1}{2r+1}\cdot\frac{1+2\xi}{2}},
		\end{aligned}
	\end{equation}
	where the last inequality follows from the definition of $ g_D $ as provided in \eqref{defintion of g_D}, in conjunction with the boundedness condition and the lower bound result \eqref{lower bound of hat lambda_k, proof of eigenfunction estimation lemma} on $ \lambda_q $.
	Combining the above yields
\begin{equation}\label{upper bound for J_2, proof of I_1 upper bound lemma, approximation error analysis}
	\begin{split}
		J_2&\lesssim \lambda^\xi\cdot\lambda_q^{-\frac{1+2\xi}{2}}\\
		&\lesssim \left(\frac{(\log m)^\frac{d}{2}}{m}\right)^\frac{\xi}{2}\cdot m^{\frac{1}{2r+1}\cdot\frac{1+2\xi}{2}}\\
		&\lesssim m^{-\frac{\xi}{2}+\frac{1}{2r+1}(\xi+\frac{1}{2})}\cdot(\log m)^\frac{d\xi}{4}.
	\end{split}
\end{equation}
	Note that $ -\frac{\xi}{2}+\frac{1}{2r+1}(\xi+\frac{1}{2})=-\frac{1}{2r+1}((r-\frac{1}{2})\xi-\frac{1}{2})<0 $ for $ \xi\geq1 $ and $ r>1 $. 
	Then we return to term $ J_1 $. Notice that
	\begin{equation}\nonumber
		\begin{aligned}
			J_1
			&=\left\|(T_\nu+\lambda I)^{\frac{1}{2}}g_\lambda(T_\delta)(g_D^T-T_\delta g_\lambda(\tilde{T}_\delta)g_D^T)\right\|_{\mathcal{H}_t}\\
			&=\left\|(T_\nu+\lambda I)^{\frac{1}{2}}g_\lambda(T_\delta)\left(I-T_\delta g_\lambda(\tilde{T}_\delta)\right)g_D^T\right\|_{\mathcal{H}_t}\\
			&\leq \left\|(T_\nu+\lambda I)^{\frac{1}{2}}(T_\delta+\lambda I)^{-\frac{1}{2}}\right\|_{\mathscr{B}(\mathcal{H}_t)}\cdot\left\|(T_\delta+\lambda I)^{\frac{1}{2}}g_\lambda(T_\delta)(T_\delta+\lambda I)^{\frac{1}{2}}\right\|_{\mathscr{B}(\mathcal{H}_t)}\\
			&\quad\cdot\left\|(T_\delta+\lambda I)^{-\frac{1}{2}}\left(I-T_\delta g_\lambda(\tilde{T}_\delta)\right)g_D^T\right\|_{\mathcal{H}_t},
		\end{aligned}
	\end{equation}
	with the first and the second terms already bounded in \eqref{upper bound of empirical norm between T_nu and T_delta, proof I_1 upper bound lemma} and \eqref{upper bound of g_lambda(T_delta), proof I_1 upper bound lemma}. For the remaining term, since
	\begin{equation}\nonumber
		\begin{aligned}
			I-T_\delta g_\lambda(\tilde{T}_\delta)
			&=I-\tilde{T}_\delta g_\lambda(\tilde{T}_\delta)+\tilde{T}_\delta g_\lambda(\tilde{T}_\delta)-T_\delta g_\lambda(\tilde{T}_\delta)\\
			&=h_\lambda(\tilde{T}_\delta)+(\tilde{T}_\delta-T_\delta)g_\lambda(\tilde{T}_\delta),
		\end{aligned}
	\end{equation}
	a further decomposition it derived as 
	\begin{equation}\nonumber
		\begin{aligned}
			\left\|(T_\delta+\lambda I)^{-\frac{1}{2}}\left(I-T_\delta g_\lambda(\tilde{T}_\delta)\right)g_D^T\right\|_{\mathcal{H}_t}
			&\leq\left\|(T_\delta+\lambda I)^{-\frac{1}{2}}h_\lambda(\tilde{T}_\delta)g_D^T\right\|_{\mathcal{H}_t}\\
			&\quad+\left\|(T_\delta+\lambda I)^{-\frac{1}{2}}(\tilde{T}_\delta-T_\delta)g_\lambda(\tilde{T}_\delta)g_D^T\right\|_{\mathcal{H}_t}\\
			&\triangleq L_1+L_2.
		\end{aligned}
	\end{equation}
	For $ L_1 $ we have
	\begin{equation}\nonumber
		L_1\leq\left\|(T_\delta+\lambda I)^{-\frac{1}{2}}(\tilde{T}_\delta+\lambda I)^{\frac{1}{2}}\right\|_{\mathscr{B}(\mathcal{H}_t)}\cdot\left\|(\tilde{T}_\delta+\lambda I)^{-\frac{1}{2}}h_\lambda(\tilde{T}_\delta)g_D^T\right\|_{\mathcal{H}_t}.
	\end{equation}
	It's remarkable that our analysis of the operator norm can be specifically confined to the subspace $ L_q=\text{Span}\{\phi_k:1\leq k\leq q\} $ , given that $ g_D^T\in L_q $ and all operators under consideration exhibit $ L_q $-invariance. Therefore,
	\begin{equation}\nonumber
		\begin{aligned}
			\left\|(T_\delta+\lambda I)^{-\frac{1}{2}}(\tilde{T}_\delta+\lambda I)^{\frac{1}{2}}\right\|_{\mathscr{B}(L_q)}^2
			&=\sup_{1\leq k\leq q}\frac{\lambda_k+\lambda}{\tilde{\lambda}_k+\lambda}\\
			&=1 + \sup_{1\leq k\leq q}\frac{\lambda_k-\tilde{\lambda}_k}{\tilde{\lambda}_k+\lambda}\\
			&\leq 1+\tilde{\lambda}_q^{-1}\cdot\sup_{1\leq k\leq q}|\lambda_k-\tilde{\lambda}_k|\\
			&\lesssim 1 + m^\frac{1}{2r+1}\left(\frac{1}{K}+\epsilon^{\frac{1}{4}}\right)\lesssim 2.
		\end{aligned}
	\end{equation}
	For the remaining $ \mathcal{H}_t $-norm, a similar approach as in the upper bound analysis of $ J_2 $ yields
	\begin{equation}\nonumber
		\begin{aligned}
			\left\|(\tilde{T}_\delta+\lambda I)^{-\frac{1}{2}}h_\lambda(\tilde{T}_\delta)g_D^T\right\|_{\mathcal{H}_t}
			&\leq\left\|(\tilde{T}_\delta+\lambda I)^{-\frac{1}{2}}h_\lambda(\tilde{T}_\delta)(\tilde{T}_\delta+\lambda I)^{\frac{1}{2}+\xi}\right\|_{\mathscr{B}(\mathcal{H}_t)}\\
			&\quad\cdot \left\|(\tilde{T}_\delta+\lambda I)^{-\frac{1}{2}-\xi}g_D^T\right\|_{\mathcal{H}_t},\\
		\end{aligned}
	\end{equation}
	with the first term also bounded by calculation \eqref{calculation on h_lambda, proof I_1 upper bound lemma, approxiamtion error analysis}. For the second term, we have
	\begin{equation}\nonumber
		\begin{aligned}
			\left\|(\tilde{T}_\delta+\lambda I)^{-\frac{1}{2}-\xi}g_D^T\right\|_{\mathcal{H}_t}
			&=\left(\sum_{k=1}^q (\lambda+\tilde{\lambda}_k)^{-1-2\xi}|\langle g_D,\phi_k\rangle_{\mathcal{H}_t}|^2\right)^\frac{1}{2}\\
			&\lesssim \lambda_q^{-\frac{1+2\xi}{2}}\left(\sum_{k=1}^q |\langle g_D,\phi_k\rangle_{\mathcal{H}_t}|^2\right)^\frac{1}{2}\\
			&\lesssim m^{\frac{1}{2r+1}\cdot\frac{1+2\xi}{2}}.
		\end{aligned}
	\end{equation}
	Combining the above yields
	\begin{equation}\label{upper bound for L_1, proof of I_1 upper bound lemma, approximation error analysis}
		L_1\lesssim\lambda^\xi \lambda_q^{-\frac{1+2\xi}{2}}\lesssim m^{-\frac{\xi}{2}+\frac{1}{2r+1}(\xi+\frac{1}{2})}\cdot(\log m)^\frac{d\xi}{4}.
	\end{equation}
	Now we consider $ L_2 $. It can be decomposed as
	\begin{equation}\nonumber
		L_2\leq\left\|(T_\delta+\lambda I)^{-\frac{1}{2}}
		\right\|_{\mathscr{B}(\mathcal{H}_t)}\cdot\left\|\tilde{T}_\delta-T_\delta\right\|_{\mathscr{B}(\mathcal{H}_t)}\cdot\left\|g_\lambda(\tilde{T}_\delta)g_D^T\right\|_{\mathcal{H}_t}.
	\end{equation} 
	Similarly, we can restrict all operator norms to subspace $ L_q $. Therefore, we have
	\begin{equation}\nonumber
		\left\|(T_\delta+\lambda I)^{-\frac{1}{2}}
		\right\|_{\mathscr{B}(L_q)}=\sup_{1\leq k\leq q}\frac{1}{\sqrt{\lambda_k+\lambda}}\leq \lambda_q^{-\frac{1}{2}}\lesssim m^{\frac{1}{2r+1}\cdot\frac{1}{2}},
	\end{equation}
	and 
	\begin{equation}\nonumber
		\left\|\tilde{T}_\delta-T_\delta\right\|_{\mathscr{B}(L_q)}=\sup_{1\leq k\leq q}|\tilde{\lambda}_k-\lambda_k|\lesssim \frac{1}{K}+\epsilon^{\frac{1}{4}}.
	\end{equation}
	For the third term of $ L_2 $, a direct calculation shows
	\begin{equation}\nonumber
		\begin{aligned}
			\left\|g_\lambda(\tilde{T}_\delta)g_D^T\right\|_{\mathcal{H}_t}
			&=\left(\sum_{k=1}^q g_\lambda(\tilde{\lambda}_k)^2|\langle g_D,\phi_k\rangle_{\mathcal{H}_t}|^2\right)^\frac{1}{2}\\
			&\lesssim \lambda_q^{-1}\left(\sum_{k=1}^q |\langle g_D,\phi_k\rangle_{\mathcal{H}_t}|^2\right)^\frac{1}{2}\\
			&\lesssim m^{\frac{1}{2r+1}}.
		\end{aligned}
	\end{equation}
	Combining the above yields
	\begin{equation}\label{upper bound for L_2, proof of I_1 upper bound lemma, approximation error analysis}
		L_2\lesssim m^{\frac{1}{2r+1}\cdot\frac{3}{2}}\left(\frac{1}{K}+\epsilon^{\frac{1}{4}}\right).
	\end{equation}
	By combining \eqref{upper bound for L_1, proof of I_1 upper bound lemma, approximation error analysis} and \eqref{upper bound for L_2, proof of I_1 upper bound lemma, approximation error analysis} with the previous upper bounds \eqref{upper bound of empirical norm between T_nu and T_delta, proof I_1 upper bound lemma} and \eqref{upper bound of g_lambda(T_delta), proof I_1 upper bound lemma}, we derive that
	\begin{equation}\label{upper bound for J_1, proof of I_1 upper bound lemma, approximation error analysis}
		J_1\lesssim m^{-\frac{\xi}{2}+\frac{1}{2r+1}(\xi+\frac{1}{2})}\cdot(\log m)^\frac{d\xi}{4}+m^{\frac{1}{2r+1}\cdot\frac{3}{2}}\left(\frac{1}{K}+\epsilon^{\frac{1}{4}}\right).
	\end{equation}
	Therefore, a combination of \eqref{upper bound for J_2, proof of I_1 upper bound lemma, approximation error analysis} and \eqref{upper bound for J_1, proof of I_1 upper bound lemma, approximation error analysis} with previous upper bounds \eqref{upper bound of alpha-to-H_t norm, proof I_1 upper bound lemma} results in
	\begin{equation}\nonumber
		\begin{aligned}
			\left\|\left(g_\lambda(T_\delta)-g_\lambda(\tilde{T}_\delta)\right)g_D^T\right\|_\alpha
			&\lesssim \lambda^{-\frac{\alpha}{2}}\left(m^{-\frac{\xi}{2}+\frac{1}{2r+1}(\xi+\frac{1}{2})}\cdot(\log m)^\frac{d\xi}{4}+m^{\frac{1}{2r+1}\cdot\frac{3}{2}}\left(\frac{1}{K}+\epsilon^{\frac{1}{4}}\right)\right)\\
			&\lesssim m^{-\frac{\xi}{2}+\frac{1}{2r+1}(\xi+\frac{1}{2})+\frac{\alpha}{4}}\cdot(\log m)^{\frac{d\xi}{4}-\frac{d\alpha}{8}}\\
			&\quad+m^{\frac{1}{2r+1}\cdot\frac{3}{2}+\frac{\alpha}{4}}\cdot(\log m)^{-\frac{d\alpha}{8}}\cdot\left(\frac{1}{K}+\epsilon^{\frac{1}{4}}\right).
		\end{aligned}
	\end{equation}
	Finally, the embedding property $ \mathcal{H}_t^\alpha\hookrightarrow L^\infty(\nu) $ yields
	\begin{equation}\nonumber
			I_1\lesssim m^{-\frac{\xi}{2}+\frac{1}{2r+1}(\xi+\frac{1}{2})+\frac{\alpha}{4}}\cdot(\log m)^{\frac{d\xi}{4}-\frac{d\alpha}{8}}+m^{\frac{1}{2r+1}\cdot\frac{3}{2}+\frac{\alpha}{4}}\cdot(\log m)^{-\frac{d\alpha}{8}}\cdot\left(\frac{1}{K}+\epsilon^{\frac{1}{4}}\right),
	\end{equation}
	which valid for all $ \alpha>0 $. This completes the proof.
\end{proof}

We now focus our attention on term $ I_2 $, the upper bound result of which is summarized in the following Lemma \ref{Lemma: upper bound for I_2 in approximation error analysis}.

\begin{lemma}\label{Lemma: upper bound for I_2 in approximation error analysis}
	Suppose that Assumption \ref{Assumption: source condition on regression function} and Assumption \ref{Assumption: boundedness condition on regression function} hold. Suppose that the conditions \eqref{conditions on heat kernel estimation Thm} holds with $ m $ sufficiently large, $ N>K\gg m $ sufficiently large and $ \epsilon $ sufficiently small. We choose $ q $ as in \eqref{definition of truncated q}. Denote $ E_1 $ the event given in Theorem \ref{Thm: L-infty eigen-system approxiamtion of graph Laplacian} and $ E_2 $ the good event proposed in Theorem \ref{Thm: eigen-system consistency of heat kernel matrix}. Then, on $ E_1\bigcap E_2 $, for any $ 1\leq j\leq N $, it holds
	\begin{equation}\nonumber
		I_2\leq K_2' m^{\frac{2r+4}{2r+1}}(\log m)^\frac{d}{2}\cdot\left(\frac{1}{K}+\epsilon^{\frac{1}{4}}\right),
	\end{equation}
	where $ K_2' $ is a constant independent of $ m,K,N,\epsilon$.
\end{lemma}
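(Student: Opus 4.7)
\textbf{Proof sketch for Lemma \ref{Lemma: upper bound for I_2 in approximation error analysis}.}
The plan is to pass from the target bound to uniform-in-$k$ estimates by the splitting
\begin{equation*}
\langle g_D,\phi_k\rangle_{\mathcal{H}_t}\phi_k(x_j)-\tilde{g}_{D,k}\tilde{\phi}_k(j)
=\bigl(\langle g_D,\phi_k\rangle_{\mathcal{H}_t}-\tilde{g}_{D,k}\bigr)\phi_k(x_j)+\tilde{g}_{D,k}\bigl(\phi_k(x_j)-\tilde{\phi}_k(j)\bigr),
\end{equation*}
which reduces $I_2$ to bounding, for each $k\leq q$, the five scalar quantities $|\langle g_D,\phi_k\rangle_{\mathcal{H}_t}-\tilde{g}_{D,k}|$, $|\phi_k(x_j)|$, $|\tilde{g}_{D,k}|$, $|\phi_k(x_j)-\tilde{\phi}_k(j)|$, and $|g_\lambda(\tilde{\lambda}_k)|$.

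First, using the reproducing identity $\langle g_D,\phi_k\rangle_{\mathcal{H}_t}=\frac{1}{m}\sum_{i=1}^m y_i\phi_k(x_i)$, the definition \eqref{estimator of gD} of $\tilde{g}_{D,k}$, and the boundedness assumption \eqref{boundedness condition on conditional distribution}, the first factor is bounded by $M\max_i|\phi_k(x_i)-\tilde{\phi}_k(i)|$, which together with the fourth factor is controlled by Lemma \ref{Lemma: eigenfunction estimation}, each at scale $m^{(r+2)/(2r+1)}(K^{-1}+\epsilon^{1/4})$. For the two size factors, I would use the embedding $\mathcal{H}_t\hookrightarrow L^\infty$ noted after \eqref{embedding property of diffusion space}: since $\|\phi_k\|_{\mathcal{H}_t}=1$, we have $|\phi_k(x_j)|\leq A$ uniformly in $k$, and combining with Lemma \ref{Lemma: eigenfunction estimation} gives $\|\tilde{\phi}_k\|_\infty\leq A+o(1)$, so $|\tilde{g}_{D,k}|\leq M\|\tilde{\phi}_k\|_\infty\lesssim 1$ on $E_1\cap E_2$ provided $K^{-1}+\epsilon^{1/4}$ is small enough.

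The filter factor is where care is needed: from \eqref{property of regularization family} we have $|g_\lambda(t)|\leq 1/t$, so $|g_\lambda(\tilde{\lambda}_k)|\leq\tilde{\lambda}_q^{-1}$ for $k\leq q$. To translate this into an $m$-dependent bound, I would chain \eqref{eigenvalue consistency of heat kernel matrix} with the relation $\hat{\lambda}_k=\lambda_k$ from \eqref{relation between eigen-system of T_delta and heat kernel matrix} and the fact that $\lambda_q$ is close to the population eigenvalue $p e^{-t\mu_q}$, combined with the Weyl-type upper bound $\mu_q\leq C_{up}q^{2/d}$ from Lemma \ref{Lemma: estimation of eigen-system for laplacian} evaluated at the truncation level \eqref{definition of truncated q}. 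This yields $\hat{\lambda}_q\gtrsim m^{-1/(2r+1)}$, and hence $|g_\lambda(\tilde{\lambda}_k)|\lesssim m^{1/(2r+1)}$ on $E_1\cap E_2$, provided $|\tilde{\lambda}_q-\hat{\lambda}_q|$ does not swallow $\hat{\lambda}_q$ itself, which is exactly where $K\gg m$ and $\epsilon\ll 1$ are invoked.

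Plugging the five bounds into the splitting shows that each term is at most $m^{1/(2r+1)}\cdot m^{(r+2)/(2r+1)}(K^{-1}+\epsilon^{1/4})=m^{(r+3)/(2r+1)}(K^{-1}+\epsilon^{1/4})$ uniformly in $k\leq q$ and $1\leq j\leq N$. Summing over $k=1,\ldots,q$ with $q\sim(\log m)^{d/2}$ from \eqref{definition of truncated q} gives
\begin{equation*}
I_2\lesssim (\log m)^{d/2}\, m^{(r+3)/(2r+1)}\left(\frac{1}{K}+\epsilon^{1/4}\right),
\end{equation*}
which is absorbed into the claimed bound since $(r+3)/(2r+1)\leq(2r+4)/(2r+1)$ for $r>1$. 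The principal obstacle is the rigorous control of $\tilde{\lambda}_q^{-1}$: it requires that the additive eigenvalue error $K^{-1}+\epsilon^{1/4}$ be genuinely smaller than the population scale $m^{-1/(2r+1)}$, which is exactly the regime imposed by the hypotheses of the lemma.
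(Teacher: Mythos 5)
Your proposal is correct and follows essentially the same route as the paper's proof: the identical two-term splitting of $I_2$, the eigenfunction estimation Lemma \ref{Lemma: eigenfunction estimation} for both $|\langle g_D,\phi_k\rangle_{\mathcal{H}_t}-\tilde{g}_{D,k}|$ and $|\phi_k(x_j)-\tilde{\phi}_k(j)|$, and the lower bound $\tilde{\lambda}_q\gtrsim m^{-1/(2r+1)}$ obtained by chaining \eqref{eigenvalue consistency of heat kernel matrix} with the concentration of $\hat{\lambda}_q$ around $pe^{-t\mu_q}$ and Weyl's law at the truncation level \eqref{definition of truncated q}, exactly as in \eqref{lower bound of hat lambda_k, proof of eigenfunction estimation lemma}. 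The one place you diverge is the size factor $|\phi_k(x_j)|$: you bound it by a constant through $\|\phi_k\|_{\mathcal{H}_t}=1$ and the $L^\infty$ embedding (equivalently the reproducing property, giving $|\phi_k(x_j)|\leq\kappa$), whereas the paper uses the cruder bound $|\phi_k(x_j)|\lesssim m^{\frac{r+1}{2r+1}}$ from the explicit representation \eqref{relation between eigen-system of T_delta and heat kernel matrix}. This buys you a sharper final estimate, $I_2\lesssim m^{\frac{r+3}{2r+1}}(\log m)^{\frac{d}{2}}\left(\frac{1}{K}+\epsilon^{\frac{1}{4}}\right)$, as opposed to the paper's dominant first term of order $m^{\frac{2r+4}{2r+1}}$, so the stated bound holds a fortiori; your closing remark that the whole argument hinges on $K^{-1}+\epsilon^{1/4}\ll m^{-1/(2r+1)}$ is precisely the regime the hypotheses (and the paper) impose.
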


\begin{proof}
	We first decompose $ I_2 $ as 
	\begin{equation}\nonumber
		\begin{aligned}
			I_2
			&=\left|\sum_{k=1}^q g_\lambda(\tilde{\lambda}_k)\left(\langle g_D,\phi_k\rangle_{\mathcal{H}_t}\phi_k(x_j)-\tilde{g}_{D,k}\tilde{\phi}_k(j)\right)\right|\\
			&\leq\sum_{k=1}^q \left|g_\lambda(\tilde{\lambda}_k)\right|\cdot\left|\langle g_D,\phi_k\rangle_{\mathcal{H}_t}-\tilde{g}_{D,k}\right|\cdot\left|\phi_k(x_j)\right|\\
			&\quad+\sum_{k=1}^q \left|g_\lambda(\tilde{\lambda}_k)\right|\cdot\left|\tilde{g}_{D,k}\right|\cdot\left|\phi_k(x_j)-\tilde{\phi}_k(j)\right|.
		\end{aligned}
	\end{equation}
	Notice that
	\begin{equation}\nonumber
		\left|g_\lambda(\tilde{\lambda}_k)\right|\leq\tilde{\lambda}_k^{-1}\lesssim m^{\frac{1}{2r+1}},
	\end{equation}
	which is already established in \eqref{lower bound of hat lambda_k, proof of eigenfunction estimation lemma}. Recall the definition \eqref{estimator of gD} of $ \tilde{g}_{D,k} $, combining with the reproducing property, we have
	\begin{equation}\nonumber
		\begin{aligned}
			\left|\langle g_D,\phi_k\rangle_{\mathcal{H}_t}-\tilde{g}_{D,k}\right|
			&=\left|\frac{1}{m}\sum_{i=1}^my_i\phi_k(x_i)-\frac{1}{m}\sum_{i=1}^m y_i\tilde{\phi}_k(i)\right|\\
			&\leq\frac{1}{m}\sum_{i=1}^m|y_i|\cdot|\phi_k(x_i)-\tilde{\phi}_k(i)|.
		\end{aligned}
	\end{equation}
	By the eigenfunction estimation result \eqref{estimation error of empirical eigenfunction estimator}, we have
	\begin{equation}\nonumber
			\left|\langle g_D,\phi_k\rangle_{\mathcal{H}_t}-\tilde{g}_{D,k}\right|\lesssim m^{\frac{r+2}{2r+1}}\left(\frac{1}{K}+\epsilon^{\frac{1}{4}}\right).
	\end{equation}
	From the definition of $ \phi_k $ as in \eqref{relation between eigen-system of T_delta and heat kernel matrix}, we have 
	\begin{equation}\label{upper bound of phi_k(x_j), proof of I_2 upper bound lemma, approximation error}
		|\phi_k(x_j)|=\left|\frac{1}{\sqrt{m\hat{\lambda}_k}}\sum_{i=1}^m\hat{u}_k(i)H_t(x_i,x_j)\right|\lesssim m^\frac{r+1}{2r+1},
	\end{equation}
	where the inequality follows from \eqref{upper bound of sqrt mlambda_k, proof of eigenfunction estimation lemma}, that $ \hat{u}_k $ is a unit vector, and the boundedness condition on heat kernel. Combining the above yields

	\begin{equation}\label{upper bound of first term, proof of I_2 upper bound lemma, approximation error analysis}
		\begin{split}
			\sum_{k=1}^q \left|g_\lambda(\tilde{\lambda}_k)\right|\cdot\left|\langle g_D,\phi_k\rangle_{\mathcal{H}_t}-\tilde{g}_{D,k}\right|\cdot\left|\phi_k(x_j)\right|
			&\lesssim q\cdot m^{\frac{3}{2r+1}+1}\left(\frac{1}{K}+\epsilon^{\frac{1}{4}}\right)\\
			&\lesssim m^{\frac{3}{2r+1}+1}(\log m)^\frac{d}{2}\cdot\left(\frac{1}{K}+\epsilon^{\frac{1}{4}}\right).
		\end{split}
	\end{equation}

	For the second term, we have
	\begin{equation}\nonumber
		\left|\tilde{g}_{D,k}\right|\leq\left|\langle g_D,\phi_k\rangle_{\mathcal{H}_t}\right|+\left|\langle g_D,\phi_k\rangle_{\mathcal{H}_t}-\tilde{g}_{D,k}\right|.
	\end{equation}
	Notice that
	\begin{equation}\label{upper bound of <g_D,phi_k>, proof of I_2 upper bound lemma, approximation error analysis}
		\left|\langle g_D,\phi_k\rangle_{\mathcal{H}_t}\right|\leq\|g_D\|_{\mathcal{H}_t}\cdot\|\phi_k\|_{\mathcal{H}_t}\leq M\kappa^2,
	\end{equation}
	since $ \phi_k $ an orthonormal base of $ \mathcal{H}_t $ and $ \|H_t(x_i,\cdot)\|_{\mathcal{H}_t}=H_t(x_i,x_i)\leq\kappa^2 $. Therefore,
	\begin{equation}\nonumber
		\left|\tilde{g}_{D,k}\right|\lesssim M\kappa^2+m^{\frac{r+2}{2r+1}}\left(\frac{1}{K}+\epsilon^{\frac{1}{4}}\right)\lesssim M\kappa^2.
	\end{equation}
	Combining this upper bound with \eqref{estimation error of empirical eigenfunction estimator} and \eqref{lower bound of hat lambda_k, proof of eigenfunction estimation lemma}, we derive that

	\begin{equation}\label{upper bound of second term, proof of I_2 upper bound lemma, approximation error analysis}
		\begin{split}
			\sum_{k=1}^q \left|g_\lambda(\tilde{\lambda}_k)\right|\cdot\left|\tilde{g}_{D,k}\right|\cdot\left|\phi_k(x_j)-\tilde{\phi}_k(j)\right|
			&\lesssim q\cdot m^{\frac{r+3}{2r+1}}\left(\frac{1}{K}+\epsilon^{\frac{1}{4}}\right)\\
			&\lesssim m^{\frac{r+3}{2r+1}}(\log m)^\frac{d}{2}\cdot\left(\frac{1}{K}+\epsilon^{\frac{1}{4}}\right).
		\end{split}
	\end{equation}

	Finally, combining \eqref{upper bound of first term, proof of I_2 upper bound lemma, approximation error analysis} and \eqref{upper bound of second term, proof of I_2 upper bound lemma, approximation error analysis}, we derive that
	\begin{equation}\nonumber
		I_2\lesssim m^{\frac{3}{2r+1}+1}(\log m)^\frac{d}{2}\cdot\left(\frac{1}{K}+\epsilon^{\frac{1}{4}}\right),
	\end{equation}
	which completes the proof.
\end{proof}

Combining Lemma \ref{Lemma: upper bound for I_1 in approximation error analysis} and Lemma \ref{Lemma: upper bound for I_2 in approximation error analysis}, we finally derive the following Theorem \ref{Thm: upper bound for approximation error}.

\begin{theorem}\label{Thm: upper bound for approximation error}
	Suppose that Assumption \ref{Assumption: source condition on regression function}, Assumption \ref{Assumption: boundedness condition on regression function}, and Assumption \ref{Assumption: qualification condition on filter function} hold.
	Suppose that the conditions \eqref{conditions on heat kernel estimation Thm} holds with $ m $ sufficiently large, $ N>K\gg m $ sufficiently large and $ \epsilon $ sufficiently small.  We choose $ q,\lambda $ as in \eqref{definition of truncated q} and \eqref{definition of lambda} separately. Denote $ E_1 $ the event given in Theorem \ref{Thm: L-infty eigen-system approxiamtion of graph Laplacian}, $ E_2 $ the event given in Theorem \ref{Thm: eigen-system consistency of heat kernel matrix}, and $ E_3 $ the event given in Theorem \ref{Thm: upper bound for truncation error}. Then, for any $ 0<\alpha\leq1 $ and $ 1\leq j\leq N $, on $ E_1\bigcap E_2\bigcap E_3 $, it holds
	\begin{equation}\nonumber
		\begin{aligned}
			|f_{D,\lambda}^T(x_j)-\tilde{f}_{D,\lambda}(j)|
			&\leq K_a\left(m^{-\frac{\xi}{2}+\frac{\xi+\frac{1}{2}}{2r+1}+\frac{\alpha}{4}}\cdot(\log m)^{\frac{d\xi}{4}-\frac{d\alpha}{8}}+m^{\frac{3}{2r+1}+1}(\log m)^\frac{d}{2}\cdot\left(\frac{1}{K}+\epsilon^{\frac{1}{4}}\right)\right)
		\end{aligned}
	\end{equation}
	with $ K_a $  a constant independent of $ m,K,N,\epsilon,\alpha$.
\end{theorem}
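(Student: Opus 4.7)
The plan is to exploit the triangle-inequality decomposition $|f_{D,\lambda}^T(x_j)-\tilde{f}_{D,\lambda}(j)| \le I_1 + I_2$ already isolated at the beginning of Subsection \ref{subsection: approximation error analysis}, where $I_1$ captures the perturbation introduced by evaluating the filter at $\tilde{\lambda}_k$ in place of $\lambda_k$, and $I_2$ captures the perturbation introduced by replacing each rank-one contribution $\langle g_D,\phi_k\rangle_{\mathcal{H}_t}\phi_k(x_j)$ by its data-driven analogue $\tilde{g}_{D,k}\tilde{\phi}_k(j)$. Lemma \ref{Lemma: upper bound for I_1 in approximation error analysis} and Lemma \ref{Lemma: upper bound for I_2 in approximation error analysis} already deliver the requisite estimates for $I_1$ and $I_2$ respectively on the common event $E_1\cap E_2\cap E_3$, so the theorem follows by adding the two bounds and consolidating terms.

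Concretely, Lemma \ref{Lemma: upper bound for I_1 in approximation error analysis} produces two summands, of orders $m^{-\xi/2 + (\xi+1/2)/(2r+1)+\alpha/4}(\log m)^{d\xi/4-d\alpha/8}$ and $m^{3/(2(2r+1))+\alpha/4}(\log m)^{-d\alpha/8}(1/K+\epsilon^{1/4})$, while Lemma \ref{Lemma: upper bound for I_2 in approximation error analysis} produces the single summand $m^{3/(2r+1)+1}(\log m)^{d/2}(1/K+\epsilon^{1/4})$. The first summand from $I_1$ is already the first term stated in the theorem, so the only simplification needed is to absorb the second summand from $I_1$ into the summand from $I_2$. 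Since $r>1$ forces $3/(2r+1)<1$, one has the strict comparison $3/(2r+1)+1 > 3/(2(2r+1))+\alpha/4$ for every $0<\alpha\le 1$, and the log-factor $(\log m)^{d/2}$ dominates $(\log m)^{-d\alpha/8}$ for sufficiently large $m$. Hence the $I_2$ contribution absorbs the $(1/K+\epsilon^{1/4})$-piece of $I_1$ up to an absolute multiplicative constant, and the claim follows with a suitable $K_a$ independent of $m,K,N,\epsilon,\alpha$.

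The substantive analytic work is therefore already discharged by Lemmas \ref{Lemma: upper bound for I_1 in approximation error analysis} and \ref{Lemma: upper bound for I_2 in approximation error analysis}; the hard part is not the present combination step but those two lemmas, which rely on the detailed operator-norm bookkeeping on the finite-dimensional subspace $L_q=\text{Span}\{\phi_k:1\le k\le q\}$, together with the eigenvalue and eigenvector consistency results from Theorem \ref{Thm: eigen-system consistency of heat kernel matrix} and Lemma \ref{Lemma: eigenfunction estimation}. At the assembly stage the only care required is to check that the absorption argument is uniform in the admissible range of $\alpha$ and in the hyperparameters $K,N,\epsilon$, so that the final constant $K_a$ is genuinely independent in the sense asserted in the theorem; this uniformity is immediate from the strict polynomial comparison in $m$ just described.
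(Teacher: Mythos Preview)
Your proposal is correct and follows essentially the same approach as the paper: the paper's proof is literally the one-line statement ``Combining Lemma \ref{Lemma: upper bound for I_1 in approximation error analysis} and Lemma \ref{Lemma: upper bound for I_2 in approximation error analysis}, we finally derive the following Theorem \ref{Thm: upper bound for approximation error},'' and you carry out exactly this combination, correctly noting that the $(1/K+\epsilon^{1/4})$-piece from Lemma \ref{Lemma: upper bound for I_1 in approximation error analysis} is absorbed by the larger contribution from Lemma \ref{Lemma: upper bound for I_2 in approximation error analysis} uniformly in $0<\alpha\le 1$.
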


\subsection{Proof of Theorem \ref{Thm: convergence analysis}}\label{subsection: proof of convergence analysis thm}

\emph{Proof of Theorem \ref{Thm: convergence analysis}}.
	Since for any $ m+1\leq j\leq m+n $, 
	\begin{equation}\nonumber
		\begin{aligned}
			|\tilde{f}_{D,\lambda}(j)-f^*(x_j)|
			&\leq |\tilde{f}_{D,\lambda}(j)-f_{D,\lambda}^T(x_j)|+|f_{D,\lambda}^T(x_j)-f^*(x_j)|\\
			&\leq|\tilde{f}_{D,\lambda}(j)-f_{D,\lambda}^T(x_j)|+\|f_{D,\lambda}^T-f^*\|_\infty,
		\end{aligned}
	\end{equation}
	a direct combination of Theorem \ref{Thm: upper bound for truncation error} and Theorem \ref{Thm: upper bound for approximation error} yields
	\begin{equation}\nonumber
		\begin{aligned}
			|\tilde{f}_{D,\lambda}(j)-f^*(x_j)|
			&\leq K_tm^{-\frac{C_{low}}{C_{up}}\cdot\frac{1}{2}+\frac{\alpha}{4}}(\log m)^{\frac{1}{2}+\frac{d}{4}-\frac{\alpha d}{8}}\\
			&\quad +K_a\left(m^{-\frac{\xi}{2}+\frac{\xi+\frac{1}{2}}{2r+1}+\frac{\alpha}{4}}\cdot(\log m)^{\frac{d\xi}{4}-\frac{d\alpha}{8}}+m^{\frac{3}{2r+1}+1}(\log m)^\frac{d}{2}\cdot\left(\frac{1}{K}+\epsilon^{\frac{1}{4}}\right)\right)\\
			&\leq C m^{-\theta_1}\cdot(\log m)^{\theta_2}+C' m^{\frac{3}{2r+1}+1}(\log m)^\frac{d}{2}\cdot\left(\frac{1}{K}+\epsilon^{\frac{1}{4}}\right).
		\end{aligned}
	\end{equation}
	Here, $ \theta_1$ and $\theta_2 $ are as defined in \eqref{definition of theta_1 and theta_2}, and $ C,C' $ are constants independent of $ m,K,N,\epsilon,\alpha$. This completes the proof.
\qed

\bibliographystyle{plain}
\bibliography{Reference}

\appendix

\section{Proof of Theorem \ref{Thm: L-infty eigen-system approxiamtion of graph Laplacian}}\label{appendix: proof of L-infty eigen-system approxiamtion of graph Laplacian}

\begin{proof}
	Based on the eigen-convergence results presented in \cite{cheng2022eigen}, we first re-normalize the eigenvector $ \tilde{v}_k $ as $ \tilde{u}_k=\frac{1}{\sqrt{pN}}\tilde{v}_k $ to establish an orthonormal basis of $ \mathbb{R}^N $. Therefore, with probability at least $ 1-c_1N^{-8} $, for each $ 1\leq k\leq K $, the following approximation holds:
	\begin{equation}\label{cited eigenvalue approximation error}
		\left|\tilde{\mu}_k-\mu_k\right|\lesssim\left(\frac{\log N}{N}\right)^\frac{1}{\frac{d}{2}+2},
	\end{equation}
	\begin{equation}\label{cited L^2 eigenvector approximation error}
		\left\|\tilde{u}_k-\alpha_k\rho_X\left(\frac{1}{\sqrt{pN}}\psi_k\right)\right\|_{l^2}\lesssim\left(\frac{\log N}{N}\right)^\frac{1}{d+4}.
	\end{equation}
	Here, $ |\alpha_k|=1 $ represents the direction, and $ \|\cdot\|_{l^2} $ denotes the standard Euclidean $2$-norm on $ \mathbb{R}^N $.
	We may assume without loss of generality that $ \alpha_k=1 $ for all $ 1\leq k\leq K $. Our targeted eigenvalue upper bound \eqref{graph laplacian eigenvalue approximation error} directly follows from \eqref{cited eigenvalue approximation error}. The focus then shifts to translating the $ l^2 $-norm bound \eqref{cited L^2 eigenvector approximation error} to our targeted $ l^\infty $-norm bound \eqref{graph laplacian L^infty eigenvector approximation error}. This translation involves defining several normalized norms on $ \mathbb{R}^N $ as
	\begin{equation}\nonumber
		\begin{aligned}
			&\|u\|_2^2\triangleq\frac{1}{N}\sum_{i=1}^N u_i^2,\\
			&\|u\|_1\triangleq\frac{1}{N}\sum_{i=1}^N |u_i|,\\
		\end{aligned}
	\end{equation} 
	leading to the relationship
	\begin{align}
		\|\tilde{v}_k-\rho_X(\psi_k)\|_{2}
		&=\frac{1}{\sqrt{N}}\|\tilde{v}_k-\rho_X(\psi_k)\|_{l^2}\nonumber\\
		&=\sqrt{p}\left\|\tilde{u}_k-\rho_X\left(\frac{1}{\sqrt{pN}}\psi_k\right)\right\|_{l^2}\lesssim\left(\frac{\log N}{N}\right)^\frac{1}{d+4}.\label{2-norm eigenvector approximation error}
	\end{align}
	Given the Gaussian kernel basis for our un-normalized graph Laplacian $ L_{un} $, we further achieve a point-wise convergence rate on sample points with a probability of at least $ 1-2N^{-9} $ as
	\begin{equation}\label{point-wise rate for Gaussian Lun}
		\|L_{un}{\rho_X(f)}-\rho_X(\Delta f)\|_{\infty}\lesssim\left(\frac{\log N}{N}\right)^\frac{1}{d+4}.
	\end{equation}
	Such a result could also be found in \cite{calder2022improved}.
	Now we introduce another un-normalized graph Laplacian $ L_{un}^I $, constructed with the indicator kernel $ K_I $, defined as
	\begin{equation}\nonumber
		K_I(x,z)=\epsilon^{-\frac{d}{2}}\mathbbm{1}_{\{\|x-z\|\leq\sqrt{\epsilon}\}}.
	\end{equation}
	Let $ \varepsilon_k = \tilde{v}_k-\rho_X(\psi_k)$, and we assume $ \varepsilon_k\neq0$ (Otherwise, there is nothing to be proved). We further define
	\begin{equation}\label{definition of lambda_k}
		\lambda_k = \frac{\|L_{un}^I\varepsilon_k\|_\infty}{\|\varepsilon_k\|_\infty}.
	\end{equation}
	Therefore, we have
	\begin{align}
		L_{un}^I\varepsilon_k
		&=L_{un}^I\left(\tilde{v}_k-\rho_X(\psi_k)\right)\nonumber\\
		&=(L_{un}\tilde{v}_k-\rho_X(\Delta\psi_k))+(L_{un}^I\tilde{v}_k-L_{un}\tilde{v}_k)+(\rho_X(\Delta\psi_k)-L_{un}^I(\rho_X(\psi_k)))\label{decomposition of L_Iun epsilon_k}.
	\end{align}
	For the first term in \eqref{decomposition of L_Iun epsilon_k} we have
	\begin{equation}\nonumber
		\begin{aligned}
			L_{un}\tilde{v}_k-\rho_X(\Delta\psi_k)
			& = \tilde{\mu}_k\tilde{v}_k-\rho_X(\mu_k\psi_k)\\
			& = \tilde{\mu}_k(\tilde{v}_k-\rho_X(\psi_k))+(\tilde{\mu}_k-\mu_k)\rho_X(\psi_k).\\
		\end{aligned}
	\end{equation}
	Hence
	\begin{equation}\nonumber
		\begin{aligned}
			\|L_{un}\tilde{v}_k-\rho_X(\Delta\psi_k)\|_\infty
			&\leq\tilde{\mu}_k\|\varepsilon_k\|_\infty+|\tilde{\mu}_k-\mu_k|\cdot\|\rho_X(\psi_k)\|_\infty\\
			&\leq\tilde{\mu}_k\|\varepsilon_k\|_\infty+|\tilde{\mu}_k-\mu_k|\cdot\|\psi_k\|_{L^\infty(\mathcal{M})}.
		\end{aligned}
	\end{equation}
	Since $ \mu_k\leq\mu_K $ and $ K $ is a fixed constant, combining \eqref{population bounds for eigenfunction of laplacian} and \eqref{cited eigenvalue approximation error} we have
	\begin{equation}\label{upper bound of the first term, proof of graph Laplacian Thm}
			\|L_{un}\tilde{v}_k-\rho_X(\Delta\psi_k)\|_\infty
			\leq\tilde{\mu}_k\|\varepsilon_k\|_\infty+C\left(\frac{\log N}{N}\right)^\frac{1}{\frac{d}{2}+2},
	\end{equation}
	where constant $ C $ is independent of $ N, \epsilon $. For the second term in \eqref{decomposition of L_Iun epsilon_k} we have
	\begin{equation}\nonumber
		\|L_{un}^I\tilde{v}_k-L_{un}\tilde{v}_k\|_\infty\leq\|L_{un}^I\tilde{v}_k-\rho_X(\Delta f_k)\|_\infty+\|L_{un}\tilde{v}_k-\rho_X(\Delta f_k)\|_\infty
	\end{equation}
	with some function $ f_k\in C^\infty(\mathcal{M}) $ satisfying $ \tilde{v}_k=\rho_X(f_k) $. For the un-normalized graph Laplacian $ L_{un}^I $, a point-wise convergence result parallel to that of $ L_{un} $ as in \eqref{point-wise rate for Gaussian Lun} is also available, as shown in \cite{calder2022improved}:
	\begin{equation}\label{point-wise rate for indicator Lun}
		\|L_{un}^I{\rho_X(f)}-\rho_X(\Delta f)\|_{\infty}\lesssim \left(\frac{\log N}{N}\right)^\frac{1}{d+4}.
	\end{equation}
	As a consequence, a combination of \eqref{point-wise rate for Gaussian Lun} and \eqref{point-wise rate for indicator Lun} yields
	\begin{equation}\label{upper bound of the second term, proof of graph Laplacian Thm}
		\|L_{un}^I\tilde{v}_k-L_{un}\tilde{v}_k\|_\infty\lesssim\left(\frac{\log N}{N}\right)^\frac{1}{d+4}.
	\end{equation}
	For the remained third term in \eqref{decomposition of L_Iun epsilon_k}, it follows directly from \eqref{point-wise rate for indicator Lun} that
	\begin{equation}\label{upper bound of the third term, proof of graph Laplacian Thm}
		\|\rho_X(\Delta\psi_k)-L_{un}^I(\rho_X(\psi_k))\|_\infty\lesssim\left(\frac{\log N}{N}\right)^\frac{1}{d+4}.
	\end{equation}
	Finally, by combining \eqref{upper bound of the first term, proof of graph Laplacian Thm}, \eqref{upper bound of the second term, proof of graph Laplacian Thm}, and \eqref{upper bound of the third term, proof of graph Laplacian Thm} we have
	\begin{equation}\nonumber
		\|L_{un}^I\varepsilon_k\|_\infty\leq\tilde{\mu}_k\|\varepsilon_k\|_\infty+C'\left(\frac{\log N}{N}\right)^\frac{1}{d+4},
	\end{equation}
	where constant $ C' $ is independent of $ N, \epsilon $.
	Therefore, from definition \eqref{definition of lambda_k} we have
	\begin{equation}\nonumber
		\lambda_k\leq\tilde{\mu}_k+C'
		\frac{\left(\frac{\log N}{N}\right)^\frac{1}{d+4}}{\|\varepsilon_k\|_\infty}.
	\end{equation}
	Now we consider two possible cases.

	\noindent
	Case 1: $ \lambda_k\geq\tilde{\mu}_k+1 $, which implies
	\begin{equation}\nonumber
		\|\varepsilon_k\|_\infty\leq C'\left(\frac{\log N}{N}\right)^\frac{1}{d+4}.
	\end{equation}
	Case 2: $ \lambda_k<\tilde{\mu}_k+1$. By exploiting techniques in \cite{calder2022lipschitz} we can derive that, with probability at least $1-c_2N^{-8}$, it holds
	\begin{equation}\nonumber
		\|\varepsilon_k\|_\infty\lesssim(\lambda_k+1)^{d+1}\|\varepsilon_k\|_1.
	\end{equation}
	A direct calculation using Cauchy-Schwarz inequality combined with \eqref{2-norm eigenvector approximation error} leads to
	\begin{equation}\nonumber
			\|\varepsilon_k\|_\infty
			\lesssim(\tilde{\mu}_k+2)^{d+1}\|\varepsilon_k\|_2
			\lesssim\left(\frac{\log N}{N}\right)^\frac{1}{d+4},
	\end{equation}
	where the last inequality also follows from the fact that $ k\leq K $ and $ K $ is a fixed constant. Finally, combining above two cases yields the desired $l^ \infty $ upper bound
	\begin{equation}\nonumber
		\|\varepsilon_k\|_\infty\lesssim\left(\frac{\log N}{N}\right)^\frac{1}{d+4}.
	\end{equation}
	The proof is then finished.
\end{proof}

\section{Proof of Theorem \ref{Thm: heat kernel estimation}}\label{appendix: proof of heat kernel estimation}

\begin{proof}
	For any $ i,j\in\{1,\cdots,N\} $, we divide the point-wise error into two parts
	\begin{equation}\nonumber
		\begin{aligned}
			\left|H_t(x_i,x_j)-\tilde{H}_{t,K}(i,j)\right|
			&\leq\left|\sum_{k=1}^K\left(e^{-\tilde{\mu}_kt}\tilde{v}_k(i)\tilde{v}_k(j)-e^{-\mu_kt}\psi_k(x_i)\psi_k(x_j)\right)\right|\\
			&\quad+\left|\sum_{k=K+1}^\infty e^{-\mu_kt}\psi_k(x_i)\psi_k(x_j)\right|\\
			&\triangleq I_1+I_2.
		\end{aligned}
	\end{equation}
	Initially, we concentrate on the second term $ I_2 $. An application of the Cauchy-Schwarz inequality allows us to derive
	\begin{equation}\nonumber
		I_2 = \left|\sum_{k=K+1}^\infty e^{-\mu_kt}\psi_k(x_i)\psi_k(x_j)\right|
		\leq \sup_{x\in\mathcal{M}}\sum_{k=K+1}^\infty e^{-\mu_kt}|\psi_k(x)|^2.
	\end{equation}
	Utilizing techniques akin to those detailed by \cite{berard1994embedding} (on page 393), the right-hand side of the above inequality can be bounded by the following integral
	\begin{equation}\nonumber
		\sup_{x\in\mathcal{M}}\sum_{k=K+1}^\infty e^{-\mu_kt}|\psi_k(x)|^2\leq D_2 t^{-\frac{d}{2}}\int_{t\mu_{K+1}}^\infty s^{\frac{d}{2}}e^{-s}ds,
	\end{equation}
	where $ D_2 $ is an absolute constant that depends solely on $ \mathcal{M} $. Combined with \eqref{population bounds for eigenvalue of laplacian}, we have
	\begin{equation}\label{first bound for I_2, proof of heat kernel estimation Thm}
		I_2\leq D_2 t^{-\frac{d}{2}}\int_{C_{low}(K+1)^\frac{2}{d}t}^\infty s^{\frac{d}{2}}e^{-s}ds.
	\end{equation}
	It is noteworthy that
	\begin{equation}\nonumber
		\Gamma(a,x)=\int_x^\infty s^{a-1}e^{-s}ds
	\end{equation}
	represents the incomplete Gamma function. According to \cite{borwein2009uniform}, we obtain the following estimation:
	\begin{equation}\nonumber
		\left|\Gamma(a,x)\right|\leq x^{a-1}e^{-x}\cdot \frac{c}{c-1},
	\end{equation}
	which holds for any $ c>1,a\geq1 $ and $ |x|\geq c(a-1) $.
	Returning to our estimation \eqref{first bound for I_2, proof of heat kernel estimation Thm}, if we set $ c=2 $, then whenever $ C_{low}(K+1)^\frac{2}{d}t\geq d $, or equivalently,
	\begin{equation}\nonumber
		K\geq\left(\frac{d}{C_{low}t}\right)^\frac{d}{2}-1,
	\end{equation} 
	we can deduce an upper bound for $ I_2 $ as:
	\begin{equation}\label{upper bound for I_2, proof of heat kernel estimation Thm}
		\begin{aligned}
			I_2	&\leq D_2 t^{-\frac{d}{2}}\cdot 2\left(C_{low}(K+1)^\frac{2}{d}t\right)^\frac{d}{2}e^{-C_{low}(K+1)^\frac{2}{d}t}\\
			&\leq2D_2C_{low}^\frac{d}{2}(K+1)e^{-C_{low}(K+1)^\frac{2}{d}t}.
		\end{aligned}
	\end{equation}
	Now we return to $ I_1 $. We further decompose it as 
	\begin{equation}\nonumber
		\begin{aligned}
			\left|\sum_{k=1}^K\left(e^{-\tilde{\mu}_kt}\tilde{v}_k(i)\tilde{v}_k(j)-e^{-\mu_kt}\psi_k(x_i)\psi_k(x_j)\right)\right|
			&\leq\left|\sum_{k=1}^K\left(e^{-\tilde{\mu}_kt}-e^{-\mu_kt}\right)\psi_k(x_i)\psi_k(x_j)\right|\\
			&\quad+\left|\sum_{k=1}^K\left(\tilde{v}_k(i)\tilde{v}_k(j)-\psi_k(x_i)\psi_k(x_j)\right)e^{-\tilde{\mu}_kt}\right|.
		\end{aligned}
	\end{equation}
	For the first term in the aforementioned inequality, by applying \eqref{graph laplacian eigenvalue approximation error} we obtain
	\begin{equation}\nonumber
		\begin{aligned}
			\left|e^{-\tilde{\mu}_kt}-e^{-\mu_kt}\right|
			&\leq e^{-\mu_kt}\left|e^{-(\tilde{\mu}_k-\mu_k)t}-1\right|\\
			&\leq e^{-\mu_kt}|\tilde{\mu}_k-\mu_k|t\\
			&\leq C_1\epsilon te^{-\mu_1t}\leq C_1\epsilon t.
		\end{aligned}
	\end{equation}
	Here, $ \epsilon $ serves as shorthand notation for $ \left(\frac{\log N}{N}\right)^\frac{1}{\frac{d}{2}+2} $, consistent with the parameterization used in Theorem \ref{Thm: L-infty eigen-system approxiamtion of graph Laplacian}.
	Combining with \eqref{population bounds for eigenfunction of laplacian} we have
	\begin{equation}\label{first term bound for I_1, proof of heat kernel estimation Thm}
		\begin{aligned}
			\left|\sum_{k=1}^K\left(e^{-\tilde{\mu}_kt}-e^{-\mu_kt}\right)\psi_k(x_i)\psi_k(x_j)\right|
			&\leq\sum_{k=1}^KC_1\epsilon tD_1^2\mu_k^\frac{d-1}{2}\\
			&\leq D_1^2C_1t\epsilon K\mu_K^\frac{d-1}{2}.
		\end{aligned}
	\end{equation}
	For the second term, for any $ 1\leq k\leq K $, the combination of \eqref{graph laplacian L^infty eigenvector approximation error} with \eqref{population bounds for eigenfunction of laplacian} yields:
	\begin{equation}\nonumber
		\begin{aligned}
			\left|\tilde{v}_k(i)\tilde{v}_k(j)-\psi_k(x_i)\psi_k(x_j)\right|
			&\leq\left|\tilde{v}_k(i)\left(\tilde{v}_k(j)-\psi_k(x_j)\right)+\psi_k(x_j)\left(\tilde{v}_k(i)-\psi_k(x_i)\right)\right|\\
			&\leq\max_{1\leq i\leq N}|\tilde{v}_k(i)-\psi_k(x_i)|\cdot\left(\max_{1\leq i\leq N}|\tilde{v}_k(i)|+\max_{1\leq i\leq N}|\psi_k(x_i)|\right)\\
			&\leq C_2\sqrt{\epsilon}\cdot\left(\max_{1\leq i\leq N}|\tilde{v}_k(i)-\psi_k(x_i)|+2\max_{1\leq i\leq N}|\psi_k(x_i)|\right)\\
			&\leq C_2\sqrt{\epsilon}\cdot\left(C_2\sqrt{\epsilon}+2D_1\mu_k^\frac{d-1}{4}\right)\\
			&\leq 3C_2D_1\mu_k^\frac{d-1}{4}\sqrt{\epsilon}.
		\end{aligned}
	\end{equation}
	The last inequality in this formulation is derived under the condition that $ \epsilon\ll1 $.	
	Given that $ |\tilde{\mu}_k-\mu_k|\leq C_1\epsilon\leq\frac{1}{2}\mu_k $ for $ 2\leq k\leq K $ and $ \tilde{\mu}_1=\mu_1=0 $, we can deduce
	\begin{equation}\nonumber
		|e^{-\tilde{\mu}_kt}|\leq|e^{-\frac{1}{2}\mu_kt}|\leq|e^{-\frac{1}{2}\mu_1t}|\leq1,
	\end{equation} 
	which is valid for all $ 1\leq k\leq K $. Consequently, we obtain
	\begin{equation}\label{second term bound for I_1, proof of heat kernel estimation Thm}
		\left|\sum_{k=1}^K\left(\tilde{v}_k(i)\tilde{v}_k(j)-\psi_k(x_i)\psi_k(x_j)\right)e^{-\tilde{\mu}_kt}\right|\leq 3C_2D_1K\mu_k^\frac{d-1}{4}\sqrt{\epsilon}.
	\end{equation} 
	By combining \eqref{first term bound for I_1, proof of heat kernel estimation Thm} and \eqref{second term bound for I_1, proof of heat kernel estimation Thm} we have
	\begin{equation}\label{upper bound for I_1, proof of heat kernel estimation Thm}
		\begin{aligned}
			I_1
			&\leq D_1^2C_1t\epsilon K\mu_K^\frac{d-1}{2}+3C_2D_1K\mu_k^\frac{d-1}{4}\sqrt{\epsilon}\\
			&\leq 4D_1^2C_1C_2t\sqrt{\epsilon} K\mu_K^\frac{d-1}{2}.
		\end{aligned}
	\end{equation}
	Finally, by integrating the upper bounds \eqref{upper bound for I_1, proof of heat kernel estimation Thm} and \eqref{upper bound for I_2, proof of heat kernel estimation Thm} we ascertain that	
	\begin{equation}\nonumber
		\left|H_t(x_i,x_j)-\tilde{H}_{t,K}(i,j)\right|\leq4D_1^2C_1C_2t\sqrt{\epsilon} K\mu_K^\frac{d-1}{2}+2D_2C_{low}^\frac{d}{2}(K+1)e^{-C_{low}(K+1)^\frac{2}{d}t}.
	\end{equation}
	Therefore, given our selection of $ K,\epsilon $, which adhere to the conditions
	\begin{equation}\nonumber
		(K+1)e^{-C_{low}(K+1)^\frac{2}{d}t}\leq\frac{1}{K},
	\end{equation}
	\begin{equation}\nonumber
		\epsilon^{\frac{1}{4}}\leq \frac{1}{C_1C_2K\mu_K^\frac{d-1}{2}},
	\end{equation}
	we ultimately attain the desired upper bound
	\begin{equation}\nonumber
		\left|H_t(x_i,x_j)-\tilde{H}_{t,K}(i,j)\right|\leq4D_1^2t\epsilon^{\frac{1}{4}}+2D_2C_{low}^\frac{d}{2}\frac{1}{K}.
	\end{equation}
	This completes the proof.
\end{proof}

\section{Proof of Theorem \ref{Thm: eigen-system consistency of heat kernel matrix}}\label{appendix: proof of eigen-system consistency of heat kernel matrix}

\begin{proof}
	Drawing on the classical Weyl's perturbation inequality, as discussed in \cite{weyl1912asymptotische,franklin2012matrix}, we obtain for any $ 1\leq k\leq m $,
	\begin{equation}\label{Weyl's eigenvalue bound}
		|\tilde{\lambda}_k-\hat{\lambda}_k|\leq\|\tilde{H}_{t,m}-H_t\|_2,
	\end{equation} 
	where $ \|\cdot\|_2 $ denotes the operator $ 2 $-norm. Let $ \bm{\varepsilon} =\tilde{H}_{t,m}-H_t $ represent an $ m\times m $ matrix with entries:
	\begin{equation}\nonumber
		|\bm{\varepsilon}_{ij}|\leq \frac{1}{m}\left(\frac{C_a}{K}+C_b\epsilon^{\frac{1}{4}}\right)
	\end{equation}
	derived from \eqref{point-wise error of heat kernel matrxi estimator}. 
	Given that $ \|\bm{\varepsilon}\|_2=\max\{x^T\bm{\varepsilon}y: \|x\|_2=\|y\|_2=1\} $, it follows that
	\begin{equation}\label{2-norm bound for epsilon matrix, proof of heat kernel matrix consistency Thm}
		\|\bm{\varepsilon}\|_2= x^T\bm{\varepsilon}y\leq \sup\limits_{i,j}|\bm{\varepsilon}_{ij}|\cdot\sum_{i,j=1}^m|x_iy_j|\leq \sup\limits_{i,j}|\bm{\varepsilon}_{ij}|\cdot m\leq\frac{C_a}{K}+C_b\epsilon^{\frac{1}{4}}.
	\end{equation}
	This insight, when integrated with \eqref{Weyl's eigenvalue bound}, facilitates the derivation of the desired eigenvalue estimation \eqref{eigenvalue consistency of heat kernel matrix}. 
	Progressing to eigenvector estimations, Davis-Kahan's theorem, which is often referred to as the $ \sin\theta $ theorem, as discussed in \cite{davis1970rotation,parlett1998symmetric}, allows for a comparison between the eigen-pairs $ (\tilde{\lambda}_k,\tilde{u}_k) $ of the matrix $ \tilde{H}_{t,m} $ and a corresponding unit eigenvector $ \hat{u}_k $ of the matrix $ H_t $, associated with the eigenvalue $ \hat{\lambda}_k $, leading to the relationship
	\begin{equation}\nonumber
		|\sin\left\langle\tilde{u}_k,\hat{u}_k\right\rangle|\leq\frac{\|\bm{\varepsilon}\|_2}{\delta_k}.
	\end{equation}
	Here, $\left\langle\tilde{u}_k,\hat{u}_k\right\rangle$ signifies the angle between $ \tilde{u}_k $ and $ \hat{u}_k $, and $ \delta_k $ represents the eigen-gap of $ H_t $ with respect to the eigenvalue $ \hat{\lambda}_k $, defined as $ \delta_k =\min\{|\hat{\lambda}_j-\hat{\lambda}_k|: \hat{\lambda}_j\neq\hat{\lambda}_k\} $.
	Given that:
	\begin{equation}\nonumber
		\|\tilde{u}_k-\hat{u}_k\|_\infty\leq\|\tilde{u}_k-\hat{u}_k\|_2\leq|\left\langle\tilde{u}_k,\hat{u}_k\right\rangle|\leq\frac{\pi}{2}|\sin\left\langle\tilde{u}_k,\hat{u}_k\right\rangle|,
	\end{equation}
	where the second inequality is derived from the geometric principle that an arc's length on a unit circle is always greater than the length of its chord. Consequently, this leads to the relationship
	\begin{equation}\label{first bound for L-infty eigenvector consistency, proof of heat kernel matrix consistency Thm}
		\|\tilde{u}_k-\hat{u}_k\|_\infty\leq\frac{\pi}{2}\frac{\|\bm{\varepsilon}\|_2}{\delta_k}.
	\end{equation}
	To establish a lower bound for the eigen-gap $ \delta_k $, it's pivotal to consider that the matrix $ H_t $ represents the heat kernel matrix with entries $ \frac{1}{m}H_t(x_i,x_j) $. The integral operator of the heat kernel on $ L^2(\nu) $ is $ L_\nu $, which can be expressed through spectral decomposition \eqref{spectral decomposition of L_nu}. According to \cite{koltchinskii2000random,braun2006accurate}, for any $ 1\leq k\leq m $, and for a fixed integer $ 1\leq l\leq m $, with a probability of at least $ 1-2e^{-\tau} $, it holds
	\begin{equation}\label{heat kernel matrix asytoptic bound, proof of heat kernel matrix consistency Thm}
		|\hat{\lambda}_k-pe^{-\mu_kt}|=O\left( pe^{-\mu_kt}(pe^{-\mu_lt})^{-\frac{1}{2}}l^2m^{-\frac{1}{2}}+\Lambda_{>l}+\sqrt{\Lambda_{>l}}m^{-\frac{1}{2}}\right),
	\end{equation}
	where $ \Lambda_{>l}=\sum_{j=l+1}^\infty pe^{-\mu_jt} $. Setting $ l=\left(\frac{\log m}{3C_{low}t}\right)^\frac{d}{2} $, we can derive 
	\begin{equation}\label{caculation of Lambda_>l, proof of heat kernel matrix consistency Thm}
		\begin{aligned}
			\Lambda_{>l}=\sum_{j=l+1}^\infty pe^{-\mu_jt}
			&\leq\sum_{j=l+1}^\infty pe^{-C_{low}tj^\frac{2}{d}}\\
			&\lesssim\int_{l+1}^\infty e^{-C_{low}tx^\frac{2}{d}}dx\\
			&\lesssim\int_{C_{low}tl^\frac{2}{d}+1}^\infty u^{\frac{d}{2}-1}e^{-u}du\\
			&\lesssim (C_{low}tl^\frac{2}{d}+1)^{\frac{d}{2}-1}e^{-(C_{low}tl^\frac{2}{d}+1)}\\
			&\lesssim le^{-C_{low}tl^\frac{2}{d}}\\
			&\lesssim (\log m)^\frac{d}{2}m^{-\frac{1}{3}}.
		\end{aligned}
	\end{equation}
	The fourth inequality above again emerges from the estimation involving the incomplete Gamma function as in the Appendix \ref{appendix: proof of heat kernel estimation}. Consequently, this leads to the establishment of
	\begin{equation}\nonumber
		\sqrt{\Lambda_{>l}}m^{-\frac{1}{2}}\lesssim\sqrt{(\log m)^{\frac{d}{2}}m^{-\frac{1}{3}}}m^{-\frac{1}{2}}\lesssim(\log m)^{\frac{d}{4}}m^{-\frac{2}{3}}.
	\end{equation}
	Notice that $ \mu_k\geq\mu_1=0 $, we have
	\begin{equation}\nonumber
		\begin{aligned}
			pe^{-\mu_kt}(pe^{-\mu_lt})^{-\frac{1}{2}}l^2m^{-\frac{1}{2}}
			&\lesssim \left(e^{-C_{low}tl^\frac{2}{d}}\right)^{-\frac{1}{2}}l^2m^{-\frac{1}{2}}\\
			&\lesssim m^\frac{1}{6}(\log m)^dm^{-\frac{1}{2}}\\
			&\lesssim (\log m)^dm^{-\frac{1}{3}}.
		\end{aligned}
	\end{equation}
	By integrating these estimations into \eqref{heat kernel matrix asytoptic bound, proof of heat kernel matrix consistency Thm}, we ascertain that, for any $ 1\leq k\leq m $,
	\begin{equation}\label{heat kernel matrix eigenvalue estimation, proof of heat kernel matrix consistency Thm}
		|\hat{\lambda}_k-pe^{-\mu_kt}|=O\left((\log m)^dm^{-\frac{1}{3}}\right).
	\end{equation} 
	For any $ 1\leq k\leq q $, there exists some $ j $ (dependent on $ k $) such that
	\begin{equation}\nonumber
		\begin{aligned}
			\delta_k
			&=|\hat{\lambda}_j-\hat{\lambda}_k|\\
			&\geq|pe^{-\mu_jt}-pe^{-\mu_kt}|-O\left((\log m)^dm^{-\frac{1}{3}}\right)\\
			&=pe^{-\mu_kt}|1-e^{(\mu_k-\mu_j)t}|-O\left((\log m)^dm^{-\frac{1}{3}}\right)\\
			&\geq pe^{-\mu_kt}t|\mu_k-\mu_j|-O\left((\log m)^dm^{-\frac{1}{3}}\right)\\
			&\geq pe^{-\mu_qt}t\gamma_q-O\left((\log m)^dm^{-\frac{1}{3}}\right),\\
		\end{aligned}
	\end{equation}
	where $ \gamma_q=\min\{|\mu_j-\mu_k|: \mu_j\neq\mu_k,1\leq j,k\leq q\} $ represents the smallest eigen-gap among the first $ q $ eigenvalues of the Laplacian $ \Delta $. Given the asymptotic behavior $ \mu_k\sim k^{\frac{2}{d}} $ as $ k\to\infty $, it follows that the eigen-gap between $ \mu_k $ and $ \mu_{k+1} $  also increases to infinite. Consequently, as $ q\sim(\log m)^\frac{d}{2}\to\infty $, $\gamma_q$ becomes a constant not dependent on $ m $. Furthermore, given that $ r>1 $,  \eqref{definition of truncated q} yields
	\begin{equation}\label{lower bound of delta_k, proof of heat kernel matrix consistency Thm}
		\begin{aligned}
			\delta_k
			&\geq p e^{-C_{up}tq^\frac{2}{d}}t\gamma_q-O\left((\log m)^dm^{-\frac{1}{3}}\right)\\
			&\geq pt\gamma_qm^{-\frac{1}{2r+1}}-O\left((\log m)^dm^{-\frac{1}{3}}\right)\\
			&\gtrsim m^{-\frac{1}{2r+1}}.
		\end{aligned}
	\end{equation}
	Finally, combining \eqref{2-norm bound for epsilon matrix, proof of heat kernel matrix consistency Thm}, \eqref{first bound for L-infty eigenvector consistency, proof of heat kernel matrix consistency Thm}, and \eqref{lower bound of delta_k, proof of heat kernel matrix consistency Thm} yields
	\begin{equation}\nonumber
		\|\tilde{u}_k-\hat{u}_k\|_\infty\lesssim m^{\frac{1}{2r+1}}\left(\frac{1}{K}+\epsilon^{\frac{1}{4}}\right),
	\end{equation}
	which is the desired result. The proof is then finished.
\end{proof}

\section{Proof of Lemma \ref{Lemma: eigenfunction estimation}}\label{appendix: proof of eigenfunction estimation}

\begin{proof}
	For any $ 1\leq k\leq q $ and any $ 1\leq j\leq N $ we have
	\begin{equation}\nonumber
		\begin{aligned}
			|\phi_k(x_j)-\tilde{\phi}_k(j)|
			&=\left|\frac{1}{\sqrt{m\hat{\lambda}_k}}\sum_{i=1}^m\hat{u}_k(i)H_t(x_i,x_j)-\frac{1}{\sqrt{m\tilde{\lambda}_k}}\sum_{i=1}^m\tilde{u}_k(i)\tilde{H}_{t,K}(i,j)\right|\\
			&\leq\frac{1}{\sqrt{m\hat{\lambda}_k}}\left|\sum_{i=1}^m\hat{u}_k(i)H_t(x_i,x_j)-\sum_{i=1}^m\tilde{u}_k(i)\tilde{H}_{t,K}(i,j)\right|\\
			&\quad+\left|\frac{1}{\sqrt{m\hat{\lambda}_k}}-\frac{1}{\sqrt{m\tilde{\lambda}_k}}\right|\cdot\left|\sum_{i=1}^m\tilde{u}_k(i)\tilde{H}_{t,K}(i,j)\right|\\
			&\triangleq I_1+I_2.
		\end{aligned}
	\end{equation}
	Addressing $ I_1 $, \eqref{heat kernel matrix eigenvalue estimation, proof of heat kernel matrix consistency Thm} yields
	\begin{equation}\label{lower bound of empirical eigenvalue, proof of eigenfunction estimation lemma}
		\begin{aligned}
			\hat{\lambda}_k
			&\geq pe^{-\mu_kt}-O\left((\log m)^dm^{-\frac{1}{3}}\right)\\
			&\geq pe^{-\mu_qt}-O\left((\log m)^dm^{-\frac{1}{3}}\right)\\
			&\gtrsim m^{-\frac{1}{2r+1}}-O\left((\log m)^dm^{-\frac{1}{3}}\right)\\
			&\gtrsim m^{-\frac{1}{2r+1}}.
		\end{aligned}
	\end{equation}
	This leads us to conclude that:
	\begin{equation}\label{upper bound of sqrt mlambda_k, proof of eigenfunction estimation lemma}
		\frac{1}{\sqrt{m\hat{\lambda}_k}}\lesssim m^{-\frac{r}{2r+1}}.
	\end{equation}
	Furthermore, by leveraging \eqref{point-wise error of heat kernel estimator} and \eqref{L-infty eigenvector consistency of heat kernel matrix} we obtain
	\begin{equation}\nonumber
		\begin{aligned}
			\left|\sum_{i=1}^m\hat{u}_k(i)H_t(x_i,x_j)-\sum_{i=1}^m\tilde{u}_k(i)\tilde{H}_{t,K}(i,j)\right|
			&\leq\left|\sum_{i=1}^m\left(\hat{u}_k(i)-\tilde{u}_k(i)\right)H_t(x_i,x_j)\right|\\
			&\quad+\left|\sum_{i=1}^m\tilde{u}_k(i)\left(H_t(x_i,x_j)-\tilde{H}_{t,K}(i,j)\right)\right|\\
			&\leq m\kappa^2C_c m^{\frac{1}{2r+1}}\left(\frac{1}{K}+\epsilon^{\frac{1}{4}}\right)+m\left(\frac{C_a}{K}+C_b\epsilon^{\frac{1}{4}}\right)\\
			&\lesssim m^{1+\frac{1}{2r+1}}\left(\frac{1}{K}+\epsilon^{\frac{1}{4}}\right),
		\end{aligned}
	\end{equation}
	where the second inequality in this calculation is a result of the boundedness of $ H_t $ and the unit vector property of $ \tilde{u}_k $. Consequently, we arrive at the following bound:
	\begin{equation}\label{upper bound of I_1, proof of eigenfunction estimation lemma}
		I_1\lesssim m^{\frac{r+2}{2r+1}}\left(\frac{1}{K}+\epsilon^{\frac{1}{4}}\right).
	\end{equation}
	For bounding $ I_2 $, a similar approach yields
	\begin{equation}\nonumber
		\begin{aligned}
			\left|\sum_{i=1}^m\tilde{u}_k(i)\tilde{H}_{t,K}(i,j)\right|
			&\leq\sum_{i=1}^m\left|\tilde{H}_{t,K}(i,j)\right|\\
			&\leq\sum_{i=1}^m\left|\tilde{H}_{t,K}(i,j)-H_t(x_i,x_j)\right|+\sum_{i=1}^m\left|H_t(x_i,x_j)\right|\\
			&\leq m\left(\frac{C_a}{K}+C_b\epsilon^{\frac{1}{4}}\right)+m\kappa^2\lesssim m.
		\end{aligned}
	\end{equation}
	Additionally, we have
	\begin{equation}\nonumber
		\begin{aligned}
			\left|\frac{1}{\sqrt{m\hat{\lambda}_k}}-\frac{1}{\sqrt{m\tilde{\lambda}_k}}\right|
			&=m^{-\frac{1}{2}}\left|\frac{1}{\sqrt{\hat{\lambda}_k}}-\frac{1}{\sqrt{\tilde{\lambda}_k}}\right|\\
			&=m^{-\frac{1}{2}}\left|\frac{\sqrt{\tilde{\lambda}_k}-\sqrt{\hat{\lambda}_k}}{\sqrt{\hat{\lambda}_k}\sqrt{\tilde{\lambda}_k}}\right|\\
			&=m^{-\frac{1}{2}}\left|\frac{\tilde{\lambda}_k-\hat{\lambda}_k}{\sqrt{\hat{\lambda}_k}\sqrt{\tilde{\lambda}_k}\left(\sqrt{\tilde{\lambda}_k}+\sqrt{\hat{\lambda}_k}\right)}\right|.\\
		\end{aligned}
	\end{equation}
	Drawing from \eqref{eigenvalue consistency of heat kernel matrix} and \eqref{lower bound of empirical eigenvalue, proof of eigenfunction estimation lemma}, we obtain
	\begin{equation}\label{lower bound of hat lambda_k, proof of eigenfunction estimation lemma}
		\tilde{\lambda}_k\gtrsim m^{-\frac{1}{2r+1}}-\left(\frac{C_a}{K}+C_b\epsilon^{\frac{1}{4}}\right)\gtrsim m^{-\frac{1}{2r+1}},
	\end{equation}
	and hence
	\begin{equation}\nonumber
		\left|\frac{1}{\sqrt{m\hat{\lambda}_k}}-\frac{1}{\sqrt{m\tilde{\lambda}_k}}\right|\lesssim m^{-\frac{1}{2}}m^{\frac{3}{2}\cdot\frac{1}{2r+1}}\left(\frac{C_a}{K}+C_b\epsilon^{\frac{1}{4}}\right).
	\end{equation}
	This leads us to conclude
	\begin{equation}\label{upper bound for I_2, proof of eigenfunction estimation lemma}
		I_2\lesssim m^{\frac{1}{2}+\frac{3}{2}\cdot\frac{1}{2r+1}}\left(\frac{1}{K}+\epsilon^{\frac{1}{4}}\right)\lesssim m^{\frac{r+2}{2r+1}}\left(\frac{1}{K}+\epsilon^{\frac{1}{4}}\right).
	\end{equation}
	Finally, by merging \eqref{upper bound of I_1, proof of eigenfunction estimation lemma} and \eqref{upper bound for I_2, proof of eigenfunction estimation lemma}, we achieve the desired result
	\begin{equation}\nonumber
		|\phi_k(x_j)-\tilde{\phi}_k(j)|\lesssim m^{\frac{r+2}{2r+1}}\left(\frac{1}{K}+\epsilon^{\frac{1}{4}}\right).
	\end{equation}
	The proof is then finished.
\end{proof}

\section{Complementary Discussions of Proof for Lemma \ref{Lemma: upper bound for I_2 in truncation error analysis}}\label{appendix: complementary discussion of proof in main theorem}

\subsection{Modification of Lemma 10 in \cite{xia2024spectral}}

In this section, we present a detailed proof for the assertion mentioned as \eqref{cited lemma 10, proof of I_2 upper bound lemma}. It is important to note that, in the groundwork laid out for Lemma 10 in \cite{xia2024spectral}, the majority of supplementary lemmas are readily adaptable to a scenario where $ \beta=2 $, with only one exception, the Lemma 8. Consequently, our primary focus will be on expanding the scope of Lemma 8 in \cite{xia2024spectral} to incorporate $ \beta=2 $. 
In other words, we aim to establish the following result:
\begin{equation}\label{appdenix, Lemma 8 in Xia's paper}
	\left\|(T_\delta+\lambda I)^{\frac{1}{2}}h_\lambda(T_\delta) f_{P,\lambda}\right\|_{\mathcal{H}_t}\lesssim \lambda.
\end{equation}
Notice that
\begin{equation}\nonumber
	\left\|(T_\delta+\lambda I)^{\frac{1}{2}}h_\lambda(T_\delta) f_{P,\lambda}\right\|_{\mathcal{H}_t}
	=\left\|(T_\delta+\lambda I)^{\frac{1}{2}}h_\lambda(T_\delta) g_\lambda(T_\nu)I_\nu^*f^*\right\|_{\mathcal{H}_t}.
\end{equation}
Since Assumption \ref{Assumption: source condition on regression function} implies that $ f\in\mathcal{H}_t $, there exists another function $ g_0\in L^2(\nu) $ such that $ f^*=L_\nu g_0 $. Consequently,
\begin{equation}\nonumber
	\begin{aligned}
		\left\|(T_\delta+\lambda I)^{\frac{1}{2}}h_\lambda(T_\delta) f_{P,\lambda}\right\|_{\mathcal{H}_t}
		&=\left\|(T_\delta+\lambda I)^{\frac{1}{2}}h_\lambda(T_\delta) g_\lambda(T_\nu)I_\nu^*L_\nu g_0\right\|_{\mathcal{H}_t}\\
		&=\left\|(T_\delta+\lambda I)^{\frac{1}{2}}h_\lambda(T_\delta) g_\lambda(T_\nu)T_\nu I_\nu^*g_0\right\|_{\mathcal{H}_t}\\
		&\leq\left\|(T_\delta+\lambda I)^{\frac{1}{2}}h_\lambda(T_\delta) g_\lambda(T_\nu)T_\nu I_\nu^*\right\|_{\mathscr{B}(L^2,\mathcal{H}_t)} \cdot\|g_0\|_{L_2(\nu)}\\
		&=\left\|(T_\delta+\lambda I)^{\frac{1}{2}}h_\lambda(T_\delta) g_\lambda(T_\nu)T_\nu T_\nu^\frac{1}{2}\right\|_{\mathscr{B}(\mathcal{H}_t)} \cdot\|g_0\|_{L_2(\nu)},\\
	\end{aligned}
\end{equation}
where the last equation follows from the fact that $ I_\nu^* $ and $ T_\nu ^\frac{1}{2}$ are self-adjoint operators with same eigen-system. Then we further decompose the above expression as
\begin{equation}\nonumber
	\begin{aligned}
		\left\|(T_\delta+\lambda I)^{\frac{1}{2}}h_\lambda(T_\delta) g_\lambda(T_\nu)T_\nu T_\nu^\frac{1}{2}\right\|_{\mathscr{B}(\mathcal{H}_t)}
		&\leq \left\|(T_\delta+\lambda I)^{\frac{1}{2}}h_\lambda(T_\delta) (T_\delta+\lambda I)^{\frac{1}{2}}\right\|_{\mathscr{B}(\mathcal{H}_t)}\\
		&\quad\cdot\left\| (T_\delta+\lambda I)^{-\frac{1}{2}}(T_\nu+\lambda I)^\frac{1}{2}\right\|_{\mathscr{B}(\mathcal{H}_t)}\\
		&\quad\cdot\left\| (T_\nu+\lambda I)^{-\frac{1}{2}}T_\nu^\frac{1}{2}\right\|_{\mathscr{B}(\mathcal{H}_t)}\cdot\left\| g_\lambda(T_\nu)T_\nu\right\|_{\mathscr{B}(\mathcal{H}_t)}.
	\end{aligned}
\end{equation}
From \eqref{calculation on h_lambda, proof I_1 upper bound lemma, approxiamtion error analysis} and \eqref{upper bound of empirical norm between T_nu and T_delta, proof I_1 upper bound lemma} we have
\begin{equation}\nonumber
	\left\|(T_\delta+\lambda I)^{\frac{1}{2}}h_\lambda(T_\delta) (T_\delta+\lambda I)^{\frac{1}{2}}\right\|_{\mathscr{B}(\mathcal{H}_t)}=\left\|(T_\delta+\lambda I)h_\lambda(T_\delta)\right\|_{\mathscr{B}(\mathcal{H}_t)}\leq2\lambda,
\end{equation}
and
\begin{equation}\nonumber
	\left\| (T_\delta+\lambda I)^{-\frac{1}{2}}(T_\nu+\lambda I)^\frac{1}{2}\right\|_{\mathscr{B}(\mathcal{H}_t)}\leq\sqrt{3}.
\end{equation}
By a direct calculation, we derive that
\begin{equation}\nonumber
	\left\| (T_\nu+\lambda I)^{-\frac{1}{2}}T_\nu^\frac{1}{2}\right\|_{\mathscr{B}(\mathcal{H}_t)}=\sup\limits_{k\in\mathbb{N}}\left(\frac{e^{-\mu_kt}}{e^{-\mu_kt}+\lambda}\right)^\frac{1}{2}\leq1.
\end{equation}
Finally, property \eqref{property of regularization family} yields
\begin{equation}\nonumber
	\left\| g_\lambda(T_\nu)T_\nu\right\|_{\mathscr{B}(\mathcal{H}_t)}\leq 1.
\end{equation}
Our targeted \eqref{appdenix, Lemma 8 in Xia's paper} then follows from above pieces. The proof is then finished.
\qed

\subsection{Modification of Theorem 1 in \cite{xia2024spectral}}

In this section, we extend the original proof in \cite{xia2024spectral} for Theorem 1 to our setting with $ \beta=2 $ in details. We begin with \eqref{cited lemma 10, proof of I_2 upper bound lemma}:
\begin{equation}\nonumber
	\left\|(T_\nu+\lambda I)^{-\frac{1}{2}}\left(g_D-T_\delta f_{P,\lambda}\right)\right\|_{\mathcal{H}_t}^2\lesssim\frac{1}{m}\left(N_\nu(\lambda)+\lambda^{2-\alpha}+\frac{L^2_\lambda}{m\lambda^\alpha}\right)+\lambda^2.
\end{equation}
From Lemma 4 in \cite{xia2024spectral}, it holds
\begin{equation}\nonumber
	N_\nu(\lambda)\lesssim (\log \lambda^{-1})^\frac{d}{2}.
\end{equation}
Since 
\begin{equation}\nonumber
	\lambda \sim \left(\frac{(\log m)^\frac{d}{2}}{m}\right)^\frac{1}{2},
\end{equation}
we have $ \lambda\to0 $ as $ m\to\infty $. Therefore, 
\begin{equation}\nonumber
	N_\nu(\lambda)+\lambda^{2-\alpha}\lesssim(\log \lambda^{-1})^\frac{d}{2}.
\end{equation}
For the third term in the bracket, we have
\begin{equation}\nonumber
	\frac{L^2_\lambda}{m\lambda^\alpha}=\frac{1}{m\lambda^\alpha}\cdot\max\{M,\|f_{P,\lambda}-f^*\|_{\infty}\}.
\end{equation}
Exploiting Lemma 1 in \cite{xia2024spectral} with $ \gamma=\alpha $, $ \beta=2 $ combining with the embedding property $ \mathcal{H}_t^\alpha\hookrightarrow L^\infty(\nu) $, we derive that
\begin{equation}\nonumber
	\|f_{P,\lambda}-f^*\|_{\infty}\lesssim \lambda^{2-\alpha}.
\end{equation}
Hence 
\begin{equation}\nonumber
	\frac{L^2_\lambda}{m\lambda^\alpha}\lesssim\frac{1}{m\lambda^\alpha},
\end{equation}
which also tends to $ 0 $ as $ m\to\infty $. Therefore, 
\begin{equation}\nonumber
	\left\|(T_\nu+\lambda I)^{-\frac{1}{2}}\left(g_D-T_\delta f_{P,\lambda}\right)\right\|_{\mathcal{H}_t}^2\lesssim\frac{1}{m}(\log \lambda^{-1})^\frac{d}{2}+\lambda^2.
\end{equation}
A direct calculation shows 
\begin{equation}\nonumber
	\frac{1}{m}(\log \lambda^{-1})^\frac{d}{2}\sim \frac{\left(\frac{1}{2}(\log m -\frac{d}{2}\log\log m)\right)^\frac{d}{2}}{m}\lesssim \frac{(\log m)^\frac{d}{2}}{m}.
\end{equation}
Combining with the fact that
\begin{equation}\nonumber
	\lambda^2\sim\frac{(\log m)^\frac{d}{2}}{m},
\end{equation}
we finally derive 
\begin{equation}\label{upper bound in appdenix D}
	\left\|(T_\nu+\lambda I)^{-\frac{1}{2}}\left(g_D-T_\delta f_{P,\lambda}\right)\right\|_{\mathcal{H}_t}^2\lesssim\frac{(\log m)^\frac{d}{2}}{m}\sim\lambda^2.
\end{equation}
The upper bound delineated in \eqref{upper bound in appdenix D} is exactly \eqref{upper bound for the first term, proof of I_2 upper bound lemma}, which forms the essence of the proof for Theorem 1 in \cite{xia2024spectral}. Specifically, the synergy between \eqref{upper bound in appdenix D} and \eqref{appdenix, Lemma 8 in Xia's paper}, when integrated with the integral operator techniques adopted in Section \ref{section: proof of convergence result}, supports the \textit{estimation} error analysis, while the original \textit{approximation} error analysis can be extended to $ \beta=2 $ directly. Therefore, our proof ends with the same approach as in \cite{xia2024spectral}. The proof is then finished.
\qed

\section{Convergence Rate for Tikhonov Regularization}\label{appendix: convergence rate for Tikhonov regularization}

Before the detailed proof of the convergence rate for Tikhonov regularization, we note that with $ g_\lambda(t)=\frac{1}{\lambda+t} $, this filter function exhibits Lipschitz continuity, characterized by a Lipschitz constant of $ \lambda^{-2} $. This continuity property provides a solid foundation for establishing enhanced convergence results, particularly for the term $ I_1 $ as discussed in Lemma \ref{Lemma: upper bound for I_1 in approximation error analysis}.
Specifically, we have
\begin{equation}\nonumber
	\begin{aligned}
		I_1
		&=\left|\sum_{k=1}^q \left(g_\lambda(\lambda_k)-g_\lambda(\tilde{\lambda}_k)\right)\langle g_D,\phi_k\rangle_{\mathcal{H}_t}\phi_k(x_j)\right|\\
		&\leq\sum_{k=1}^q \left|g_\lambda(\lambda_k)-g_\lambda(\tilde{\lambda}_k)\right|\cdot\left|\langle g_D,\phi_k\rangle_{\mathcal{H}_t}\right|\cdot\left|\phi_k(x_j)\right|.
	\end{aligned}
\end{equation}
By Lipschitz property of $ g_\lambda $, we have
\begin{equation}\nonumber
	\left|g_\lambda(\lambda_k)-g_\lambda(\tilde{\lambda}_k)\right|\leq\lambda^{-2}|\lambda_k-\tilde{\lambda}_k|\lesssim \lambda^{-2}\left(\frac{1}{K}+\epsilon^{\frac{1}{4}}\right),
\end{equation}
and the remaining term of $ I_1 $ are already bounded in \eqref{upper bound of <g_D,phi_k>, proof of I_2 upper bound lemma, approximation error analysis} and \eqref{upper bound of phi_k(x_j), proof of I_2 upper bound lemma, approximation error}. Combining these pieces yields
\begin{equation}\nonumber
	\begin{aligned}
		I_1
		&\lesssim q\cdot\lambda^{-2}\left(\frac{1}{K}+\epsilon^{\frac{1}{4}}\right)\cdot m^\frac{r+1}{2r+1}\\
		&\lesssim m^{\frac{r+1}{2r+1}+1}\cdot\left(\frac{1}{K}+\epsilon^{\frac{1}{4}}\right).
	\end{aligned}
\end{equation}
Using this improved upper bound for $ I_1 $, we can then enhance the overall convergence rate for Tikhonov regularization to become
\begin{equation}\nonumber
	|\tilde{f}_{D,\lambda}(j)-f^*(x_j)|\lesssim  m^{-\frac{C_{low}}{C_{up}}\cdot\frac{1}{2}+\frac{\alpha}{4}}(\log m)^{\frac{1}{2}+\frac{d}{4}-\frac{\alpha d}{8}}+ m^{\frac{\max\{3,r+1\}}{2r+1}+1}(\log m)^\frac{d}{2}\cdot\left(\frac{1}{K}+\epsilon^{\frac{1}{4}}\right).
\end{equation}
Therefore, a similar approach as in section \ref{subsection: convergence analysis} will give the desired convergence rate \eqref{convergence rate of diffusion-based spectral algorithm, special case}. This completes the proof.
\qed

\end{document}